\def\eqref#1{equation~\ref{#1}}
\def\1{\bm{1}}
\def\vv{{\bm{v}}}
\def\vx{{\bm{x}}}
\DeclareMathAlphabet{\mathsfit}{\encodingdefault}{\sfdefault}{m}{sl}
\SetMathAlphabet{\mathsfit}{bold}{\encodingdefault}{\sfdefault}{bx}{n}
\newcommand{\E}{\mathbb{E}}
\newcommand\norm[1]{\left\lVert#1\right\rVert}
\newcommand{\etanoise}{\widetilde{\eta}}
\newtheorem{theorem}{Theorem}
\newtheorem*{theorem*}{Theorem}
\newtheorem{lemma}[theorem]{Lemma}
\newtheorem{definition}{Definition}
\DeclarePairedDelimiterX{\infdivx}[2]{(}{)}{%
  #1\;\delimsize\|\;#2%
}
\title{A Topological Filter for Learning with Label Noise}
\author{%
  Pengxiang Wu$^{1}$\thanks{Equal contributions.}~~,~Songzhu Zheng$^{2}$\footnotemark[1]~~,~ Mayank Goswami$^3$, Dimitris Metaxas$^1$, Chao Chen$^2$\\
  $^1$Rutgers University, \texttt{\{pw241,dnm\}@cs.rutgers.edu} \\ 
  $^2$Stony Brook University, ~\texttt{\{zheng.songzhu,chao.chen.1\}@stonybrook.edu}\\
  $^3$City University of New York, ~\texttt{mayank.isi@gmail.com}\\
}
\begin{document}

\maketitle

\begin{abstract}

Noisy labels can impair the performance of deep neural networks. To tackle this problem, 
in this paper, we propose a new method for filtering label noise. Unlike most existing methods relying on the posterior probability of a noisy classifier, we focus on the much richer spatial behavior of data in the latent representational space. By leveraging the high-order topological information of data, we are able to collect most of the clean data and train a high-quality model. Theoretically we prove that this topological approach is guaranteed to collect the clean data with high probability. Empirical results show that our method outperforms the state-of-the-arts and is robust to a broad spectrum of noise types and levels. 
\end{abstract}

\section{Introduction}

Corrupted labels are ubiquitous in real world data, and can severely impair the performance of deep neural networks with strong memorization ability \cite{nettleton2010_AI_review,frenay_2014classification_survey,Zhang_noise_ICLR2017}. 
Label noise may arise in mistakes of human annotators or automatic label extraction tools, such as crowd sourcing and web crawling for images \cite{yan_2014learning,Veit_LearnNoise_CVPR2017}.
Improving the robustness of deep neural networks to label corruption is critical in many applications \cite{Mnih_Aerial_Image_ICML2012,Wu_Face_Noise_IEEETrans2018}, yet still remains a challenging problem and largely under-studied.

To combat label noise, state-of-the-art methods often segregate the \emph{clean data} (i.e., samples with uncorrupted labels) from the \textit{noisy} ones.
These methods collect clean data iteratively and eventually train a high-quality model.
The major challenge is to ensure that the data selection procedure is (1)~careful enough to not accumulate errors; and (2)~aggressive enough to collect sufficient clean data to train a strong model. Existing methods under this category \cite{malach_decoupling_NIPS2017,Jiang_MentorNet_ICML18,Han_Co-Teaching_NIPS2018,Wang_open_set_CVPR2018,SELF_iclr} typically select clean data 
based on the prediction of the noisy classifier. It is generally assumed that if the noisy classifiers have strong and consistent confidence on a particular label, this label is likely true. However, most of these heuristics do not have a theoretical foundation and thus are not guaranteed to generalize to unseen datasets or noise patterns.

In this paper, we propose to investigate the problem in a novel \emph{topological} perspective. We stipulate that while a noisy classifier's prediction is useful, its latent space representation of the data also contains rich information and should be exploited.
Our method is motivated by the following observation: given an ideal feature representation, the clean data are clustered together while the corrupted data are spread out and isolated.
This intuition is illustrated in Figure~\ref{fig:tsne_feat}(a). We show the spatial distribution pattern of a corrupted dataset with an ideal representation, i.e., the penultimate layer activation (the layer before softmax) of a neural net trained on the original uncorrupted dataset.
As is shown in Figure~\ref{fig:tsne_feat}(a)(left), the data are well separated into clusters, corresponding to their true labels. Meanwhile, noisy-labeled data (colorful crumbs sprinkled on each cluster) are surrounded by uncorrupted data and thus are isolated.

\begin{figure}
	\centering
	\includegraphics[width=\textwidth]{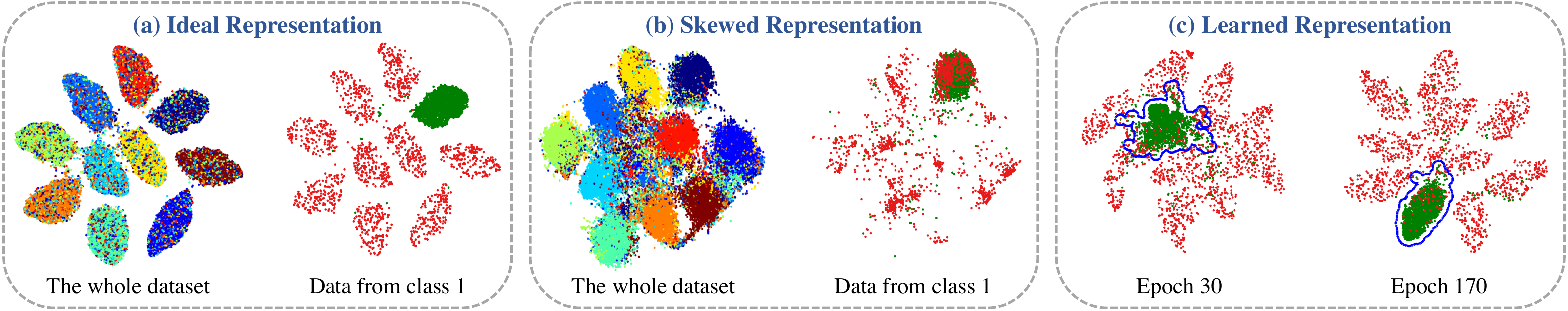}
	\caption{Different representations of a 40\% uniformly corrupted CIFAR-10 dataset (visualized using t-SNE). 
	(a) The ideal feature representation (trained on a clean dataset). On the left, we show the whole dataset. Colors correspond to different noisy labels. On the right, we draw all data with label 1.
    Green points are clean data. Red points are data with corrupted labels.
	(b) A skewed representation of a noisy classifier, namely, one trained using the corrupted dataset.
	(c) The learned representations by our algorithm. We show the data of label 1 using the continuously improved representations. The collected data by our method are highlighted with the blue contour. 
    }
	\label{fig:tsne_feat}
	\vspace{-.15in}
\end{figure}

The above observation inspires us to utilize the spatial topological pattern for label noise filtering. 
We propose a new method, \textit{TopoFilter}, that collects clean data by selecting the largest connected component of each class and dropping isolated data. Our method leverages the group behavior of data in the latent representation, which has been neglected by previous classifier-confidence-dependent approaches. 
The challenge is that the ideal representation is unavailable in practice. Training on noisy data leads to a skewed representation (Fig.~\ref{fig:tsne_feat}(b)); and the topological intuition does not seem to hold.

To address this issue, we propose an algorithm that uses the topological intuition even with the ``imperfect'' representation. Our algorithm essentially ``peels'' the outer most layer of the largest component so that only the core of the component is kept. One particular strength of our method is that it is \emph{theoretically guaranteed to be correct}. We prove (1)~\emph{purity}: the collected data have a high chance to be uncorrupted; and (2)~\emph{abundancy}: the algorithm can collect a majority of the clean data. These two guarantees ensure the algorithm can collect clean data both carefully and aggressively. Our proof imposes weak assumptions on the representation: (1) the density of the data has a compact support, (2) the true conditional distributions of different labels are continuous, and (3) the decision region of each class of the Bayes optimal classifier is connected.
These relative weak assumptions ensures that the theorem still holds on the skewed representation (from a noisy classifier).

We wrap our data collection algorithm to jointly learn the representation and select clean data.
To learn the representation, we train a deep net classifier only using the collected clean data. As the classifier continuously improves, it further facilitates the data collection and finally converges to a strong one, as illustrated in Fig.~\ref{fig:tsne_feat}(c). 
We empirically validate the proposed method on different datasets such as CIFAR-10, CIFAR-100 and Clothing1M \cite{cloth1m}.  Our method consistently outperforms the existing methods under a wide range of noise types and levels.

To summarize, we propose the first theoretically guaranteed algorithm for label noise that exploits a topological view of the noisy data representation. Our paper offers both the algorithmic intuition and the theoretical rationale on how spatial pattern and group behavior of data in the latent space can be informative of the model training. We believe the geometry and topology of data in the latent space should be further explored for better understanding and regulating of neural networks.

\textbf{Related works.} One representative class of methods for handling label noise aim to improve the robustness by modeling the noise transition process \cite{sukhbaatar_training_ICLRW2014,patrini_CVPR2017_FCorrection,goldberger_Adaptation_ICLR2017,Hendrycks_UsingTrustedData_NIPS2018}. However, the estimation of noise transformation is non-trivial, and these methods generally require additional access to the true labels or depend on strong assumptions, which could be impractical. In contrast to these works, our method does not rely on noise modeling, and is thus more generic and flexible.

A number of approaches have sought to develop noise-robust loss to help resist label corruption. One typical idea is to reduce the influence of noisy samples with carefully designed losses \cite{reed_bootstrapping_ICLRW2014,dynamic_bootstrap_icml,gce_nips2018,abstention,sce_cvpr2019,Ma_dim_driven_ICML18,ghosh2017robust,charoenphakdee2019symmetric} or regularization terms~\cite{simple_reg_iclr2020,gradient_clip_iclr2020,learning_to_learn_cvpr2019}.
Closely related to this philosophy, other approaches focus on adaptively re-weighting the contributions of the noisy samples to the loss. The re-weighting functions could be pre-specified based on heuristics \cite{chang2017active,Wang_open_set_CVPR2018} or learned automatically \cite{Jiang_MentorNet_ICML18,Ren_Reweight_ICML2018,Meta-Weight-Net_nips2019}. Our method is independent of the loss function, and can be combined with any of them.

Another direction seeks to improve the label quality by correcting the noisy labels to the underlying true ones~\cite{cloth1m,vahdat2017toward,Veit_LearnNoise_CVPR2017,li_distillation_ICCV2017,tanaka2018joint,pencil_cvpr2019}. To predict the true labels, these approaches generally require additional clean labels, complex interventions into the learning process, or an expensive detection process for noise model estimation. Moreover, these methods are based on heuristics without theoretical guarantees, and tend to be sensitive to the hyper-parameters (e.g., learning rate and loss coefficients).
Zheng \textit{et al.}~\cite{zheng2020error} showed that assuming the noisy classifier approximates the noisy conditional distribution well everywhere, the noisy classifier can help correct labels with high probability. Our theorem has a weaker assumption on the noisy classifier's quality.

Our work can be categorized as a data-selection method. 
Some methods choose the clean data based on the prediction agreements among different networks \cite{malach_decoupling_NIPS2017,SELF_iclr}. Others train the networks only on samples with small losses and exchange the error flows between networks \cite{Han_Co-Teaching_NIPS2018,chen2019understanding_icml2019,coeaching_plus_icml2019}.  
These methods typically train multiple networks, and is thus computationally expensive and hard to tune. The data-selection process in these methods is generally based on heuristics without guarantees.

A few existing works also seek to handle the label noise by probing the spatial properties of data. Wang \textit{et al.}~\cite{Wang_open_set_CVPR2018} propose to detect noisy data using spatial outlier detection. 
Gao \textit{et al.}~\cite{Gao:2016} use $k$-nearest neighbor to correct noisy labels.
Both of these methods rely on local spatial information. They fail to explore global structural information that could reveal critical common patterns, such as topology.
Lee \textit{et al.}~\cite{rog_icml2019} model the spatial distributions with a generative model and train a robust generative classifier using all noisy data. 
For completeness, we also refer to works studying KNN-induced connectivity \cite{maier2009,chaud:nips2010}, which only focus on the unsupervised setting.

\vspace{-0.5em} 
\section{Method}
\vspace{-0.5em} Our algorithm jointly trains a neural network and collects clean data. At each epoch, clean data are collected based on the their spatial topology in the latent space of the current network.
Meanwhile, only clean data are used to further train the network. In the beginning, we use an early-stopped noisy classifier to learn the representation. It has been observed that an early-stopped model will learn meaningful feature without overfitting the noise \cite{Zhang_noise_ICLR2017,Arpit_Memorization_ICML2017}. Such a network, although not powerful enough, can provide a reasonable initial representation for our data-collection algorithm to start.

Below we present our algorithm. We first provide a baseline, called \emph{TopoCC}. It collects clean data only using the largest connected component. However, this is insufficient due to the imperfect representation. Next, we present our main algorithm, called \emph{TopoFilter}, that further ``peels'' the largest component and only keeps its core.

Our algorithm for data selection is as follows.
Let $\vv$ be the input data and $\vx$ to be the latent feature given by network by taking input $\vv$, we probe the spatial data distributions by building a $k$-nearest neighbor (KNN) graph $G$ upon $\vx$. From $G$ we further derive the subgraph $G_i$ for class $i$ by removing the vertices belonging to other classes and their associated edges. On each $G_i$, we find the largest connected component $Q_i$ and consider the data belonging to $Q_i$ as clean. Eventually we have a collection of potentially clean data $C = \cup_i Q_i$. Intuitively, the clean data will be regularly and densely distributed in the feature space. They will form a salient topological structure (connected component), which could thus be captured by the algorithm.  
Plugging this data-collection procedure into our joint training algorithm gives the baseline \textit{TopoCC}.

However, simply relying on connected components is insufficient; the geometry and thus connectivity of the data is not fully reliable due to the imperfect representation. In particular, near the outer most layer of the largest connected component, the data can easily be corrupted.
We need to remove these data in order to improve the purity of the selected data. 
In particular, for a given sample $\vx$ belonging to one of the largest connected components $S_i$, with label $\widetilde{y}$, find its $k$-nearest neighbors $KNN(\vx)$ from $S$ (the union of largest components for each class). Then we consider $\vx$ as clean if at least a fraction $\zeta$ of its neighbors have the same label $\widetilde{y}$. As is illustrated in Section~\ref{subsec:purity} and Section~\ref{sec:exp}, this additional filtering of the largest component, called the $\zeta$-filtering, is essential to the success of our method. We name this method \textit{TopoFilter}. Details are in Algorithm~\ref{alg:code}. 
In practice, we observe that our algorithm is insensitive to the choice of $\zeta$.
More empirical study can be found in the supplementary material. At the end of this section, we will provide details on how to choose $\zeta$ based on the theory.

\begin{algorithm}[t!]
	\caption{TopoFilter}
	\label{alg:code}
	\begin{algorithmic}[1]
		\State {\bfseries Input:} Noisy training data $\mathcal{S}$, milestone $ m $, training epochs $ N $, number of classes $\Gamma$, number of neighbors $k$, filtering parameter $\zeta$ 
		\State {\bfseries Output:} Collected clean data $C$
		\State Initialize $ C \leftarrow \emptyset $, $\widehat{\mathcal{S}} \leftarrow \mathcal{S}$
		\For{$ t=1, \cdots, N $}
		\State Train network on $ \widehat{\mathcal{S}}$
		\If{$ t \geq m $}
		\State Extract feature vectors $ \vx $ from training data $ \mathcal{S} $\
		\State Compute $k$-NN graph $ G $ over $ \vx $
		\For{$ i=1, \cdots, \Gamma $}
		\State \parbox[t]{\dimexpr0.85\linewidth-\algorithmicindent\relax}{Construct subgraph $ G_i $ by selecting feature vectors $ \vx^{(i)} $ from $ i $-th class and removing all edges associated with $\vx^{(j)}$ for $j \neq i$}
		\State \parbox[t]{\dimexpr0.8\linewidth-\algorithmicindent\relax}{Compute the largest connected component $ Q_{i} $ over $ G_i $}
		\State $ C \leftarrow C \cup Q_i $ 
		\EndFor
		\State Find outliers $O$ within $C$ based on $\zeta$-filtering; update $ C \leftarrow C \backslash O $
		\State $\widehat{\mathcal{S}} \leftarrow C $
		\EndIf
		\EndFor
	\end{algorithmic}
\end{algorithm}

\subsection{Theoretical Guarantee of the Algorithm}
\label{subsec:purity}

Next we provide a detailed analysis of our method. We show that after running our algorithm once, the collected data are \emph{pure}, i.e., have high probability to be clean (Theorem \ref{purity}). Meanwhile, we prove that the algorithm collects \emph{abundant} clean data, i.e., a sufficiently large amount of clean data
(Theorem \ref{abundancy}).
Both theorems are critical in ensuring we train a high-quality model despite the label noise.
Note that our theoretical results are one-shot. We leave the convergence result as future work. 

We first introduce notations. Next, we present the purity and abundancy theorems respectively. Due to space constraints, we mainly present the theorems and their intuitions. Details of the proofs can be found in the supplemental material.

\newcommand{\calY}{\mathcal{Y}}

\textbf{Notations.} We focus on binary classification. Assume that the data points and labels lie in $\mathcal{X} \times \mathcal{Y}$, where the feature space $\mathcal{X} \subset \mathbb{R}^d$ and label space $\mathcal{Y} = \{0, 1\}$. A datum $x$ and its true label $y$ follow a distribution $F \sim \mathcal{X} \times \mathcal{Y}$. Let $f(\vx):= \sum_{i \in \{0,1\}} F(\vx,i)$ be the density at $\vx$. Denote by $\tilde{y}$ the observed (potentially noisy) label. Due to label noise, label $y = i$ is flipped to $\tilde{y} = j$ with probability $\tau_{ij}$ and is assumed to be independent of $\vx$. 

Let $\mathbf{X} \subset \mathcal{X}$ be the finite set of features in the data sample, and let $G(\mathbf{X},k)$ be the mutual $k$-nearest neighbor graph on $\mathbf{X}$ using the Euclidean metric on $\mathcal{X}$, whose edge set $E = \{(\vx_{1}, \vx_{2})\in \mathbf{X}\times \mathbf{X}\mid \vx_{1}\in KNN(\vx_{2}) \text{ or } \vx_{2}\in KNN(\vx_{1})\}$. Also, $\forall i \in \{0,1\}$, let $G_i(\mathbf{X},k)$ be the induced subgraph of $G(\mathbf{X},k)$ consisting only of vertices $\vx \in \mathbf{X}$ with label $\widetilde{y}(\vx) = i $.

Let $\eta_i(\vx) = P(y=i \mid \vx)$ and $\widetilde{\eta}_i(\vx) = P(\widetilde{y}=i \mid \vx)$ be the conditional probability of the clean and noisy labels given a feature $\vx$, respectively. 
Since this is binary setting, we have 
$\eta_i(\vx) = 1 - \eta_{1-i}(\vx)$ 
 and $\etanoise_i(\vx) = 1 - \etanoise_{1-i}(\vx)$.
Since $\etanoise_i(\vx) = \tau_{1-i,i}\eta_{1-i}i(\vx) + \tau_{i,i}\eta_i(\vx)$, these probabilities satisfy a linear relationship. 
\mbox{$\widetilde{\eta}_i(\vx) = (1-\tau_{01}-\tau_{10})\eta_i(\vx) + \tau_{1-i,i}$}, $\forall i\in \{0,1\}$. Define the superlevel set $L(t) = \{\vx \mid \max(\eta_1(\vx), \eta_0(\vx)) \geq t \}$, and let $\mu(L(t))$ be the probability measure of $L(t)$.
Lastly, the indicator function $I_A(\vx) = 1$ if $\vx \in A$ and $I_A(\vx) = 0$ otherwise.

Consider an algorithm $\mathcal{A}$ that takes as input a random sample of size $n$, $S_{n}= \{(\vx_{i},\tilde{y}(\vx_{i}))\}_{i=1}^{n}$. The set of features of the data is $\mathbf{X} = \{{\vx}_{i}\}_{i=1}^{n} \subset \mathcal{X}$. Algorithm $\mathcal{A}$ then outputs $\cup_{i \in \{0,1\}} C_i$, where $C_{i} \subseteq \mathbf{X}_{i} := \{\vx : \widetilde{y}(\vx) = i\}$ is the claimed ``clean'' set for label $i$. 

\vspace{-0.05em}
\begin{definition}\textit{(\textbf{Purity})} We define two kinds of purity of $\mathcal{A}$ on $S_{n}$. One captures the worst-case behavior of the algorithm, while the other captures the average-case behavior.

\noindent\textbf{1. Minimum Purity} $\ell_{S_n, \mathcal{A}} : =  \min\limits_{i \in \{0,1\}}\min\limits_{\vx \in C_i} P(y = i \mid \widetilde{y} = i, \vx) =  \min\limits_{i \in \{0,1\}}\min\limits_{\vx \in C_i} \tau_{ii} \frac{\eta_i(\vx)}{\widetilde{\eta}_i(\vx)}.$

\noindent\textbf{2. Average Purity} $\ell'_{S_n, \mathcal{A}} : = \sum\limits_{i \in \{0,1\}} \E_{\vx \in C_i} \left[ P(y=i \mid \widetilde{y}=i, \vx) \right] = \sum\limits_{i \in \{0,1\}} \frac{1}{|C_{i}|}\sum_{\vx \in C_{i}}\tau_{ii} \frac{\eta_i(\vx)}{\widetilde{\eta}_i(\vx)}.$

\end{definition}

We define the following three sets, which form a partition of  $\mathcal{X}$:
\[
\begin{array}{cl}
A^+_i & = \left\{\vx : \widetilde{\eta}_i(\vx) > \max(\frac{1}{2}, \frac{1+\tau_{i,1-i}-\tau_{1-i,i}}{2})) \right\} = \left\{\vx : \eta_i(\vx) > \max(\frac{1}{2}, \frac{1/2-\max(\tau_{10}, \tau_{01})}{2(1-\tau_{10}-\tau_{01})}) \right\},\\
A^-_i & = \left\{\vx : \widetilde{\eta}_i(\vx) < \min(\frac{1}{2}, \frac{1+\tau_{i,1-i}-\tau_{1-i,i}}{2})) \right\} = \left\{\vx : \eta_i(\vx) < \min(\frac{1}{2}, \frac{1/2-\max(\tau_{10}, \tau_{01})}{2(1-\tau_{10}-\tau_{01})}) \right\},\\
A^b &= \mathcal{X} \setminus (A_i^+ \cup A_i^-).
\end{array}
\]

$A_i^+$ is the \textit{good} region where clean Bayes classifier and noisy Bayes classifier have the same correct prediction $i$. Notice that $A_i^+ = A_{1-i}^-$. The whole idea of our algorithm is to collect as many points in $A_i^+$ as we can in region $A_i^+$ because those points are most likely to be clean. Meanwhile, the algorithm throws away uncertain points in the region $A^b$ whose points are near to the decision boundary of Bayes classifier and are more likely to be corrupted.  

\textbf{Assumptions.} To establish our theorems, we will assume the following reasonable conditions:

\noindent\textbf{A1:} $f(\vx)$ (the density on the feature space) has compact support. 

\noindent \textbf{A2:} $\forall i \in \{0,1\}$, $\eta_i(\vx)$ is continuous.\footnote[1]{This is a significantly weaker conditions on $\eta_{i}$ than is assumed in many prior works on KNN classifiers (such as H\"{o}lder continuous \cite{bahri:2020}, Lipschitz continuous \cite{Gao:2016}, etc.).
 This condition upper bounds the measure of region that is close to the decision boundary of Bayes decision rule. This assumption is also adopted in \cite{chaud:nips2010, belkin:nips2018, qiao:nips2019} and is reasonably well accepted.}

\noindent \textbf{A3:} $\forall i\in\{0,1\}, A_i^+$ is a connected set.

\noindent \textbf{A4:} $\tau_{10}, \tau_{01} \in \left[0, \frac{1}{2}\right)$.

\begin{figure}[htbp]
\centering
\includegraphics[scale=0.5]{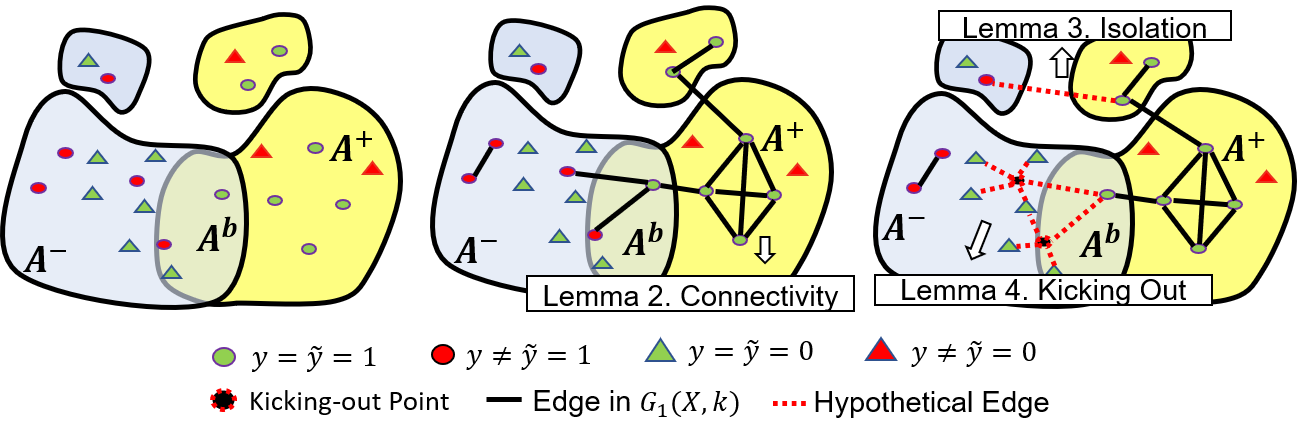}
\label{theorem_demo}
\caption{\small{Algorithm illustration. Points from $A^+_i$ are all connected; points in $A^-_i\cup A^b$ are kicked out.}}
\end{figure}

Denote by $\mathcal{A}_0$ the  naive algorithm which takes input $S_n$ and simply outputs $C_{i} = \mathbf{X}_{i}$ for $i=0,1$, i.e., does no processing and treats corrupted labels as clean. The purity of $\mathcal{A}_{0}$ is the ``default'' purity of the data set. Denote our algorithm with parameter $\zeta$ by $\mathcal{A}_{\zeta}$. Let $e$ be the natural constant.

\setcounter{theorem}{0}
\begin{theorem}
\label{purity} \textbf{(Purity Guarantee)}
$\forall \delta>0$, $\forall \zeta > \frac{1+|\tau_{10}-\tau_{01}|}{2}$, there exist $N(\delta,\zeta) > 0$, $c_1(\zeta)>0$ , $c_2 \in \left(0,\frac{e-1}{e}\right)$, and  an increasing function $g_1(\zeta) \in \left[\frac{[(2\zeta+1+|\tau_{10}-\tau_{01}|)-4\max(\tau_{10}, \tau_{01})]\min(\tau_{11}, \tau_{00})}{(2\zeta+1+|\tau_{10}-\tau_{01}|)(1-\tau_{10}-\tau_{01})}, 1\right]$ and function $g_2(\zeta) > 0$, such that $\forall n\geq N(\delta, \zeta)$, $\forall q>1$ and $\forall k \in [c_1(\zeta)\log^q{n}, c_2 n]$:

\noindent\textbf{1.} $P\left[(\ell_{S_{n}, \mathcal{A}_{\zeta}} - \ell_{S_{n}, \mathcal{A}_0}) >  g_1(\zeta) \right] \geq 1-\delta, \text{ and }$

\noindent\textbf{2.}  $P\left[(\ell'_{S_{n}, \mathcal{A}_{\zeta}} - \ell'_{S_{n}, \mathcal{A}_0}) > g_2(\zeta) \right] \geq 1-\delta.$
\end{theorem}

\textbf{Sketch of Proof.} The complete proof is in the supplementary materials; here we provide a sketch and the main lemmas used to prove Theorem~\ref{purity}. Firstly, we show that for a given $t \in [0,1)$, when the sample size $n$ is large enough and the number of neighbors $k$ is set to be $\Omega(\log^q(n))$, then all data points from $X_i(t) := L(t) \cap \mathbf{X}_i$ will be connected in $G_{i}(\mathbf{X},k)$.

\setcounter{theorem}{0}
\begin{lemma}
\label{lemma:connectivity}
\textbf{(Connectivity)}. $\forall \delta>0$ and $\forall t \in [0, 1)$, there exist $N(\delta, t) > 0$, and $ c_1(t) > 0$ such that $\forall n \geq N(\delta, t)$,  $\forall q>1$,  $\forall k > c_1(t)\log^q{(n)}$ and $\forall i \in \{0,1\}$,  $X_i(t)$ is connected in $G_i(\mathbf{X}, k)$ with probability at least $1-\delta$.
\end{lemma}

Let $\zeta'= \frac{1}{2}\left(\zeta + \frac{1+|\tau_{10}-\tau_{01}|}{2}\right)$. Notice that because $\zeta > \frac{1+|\tau_{10}-\tau_{01}|}{2}$, $\zeta > \zeta'$ and $L(\zeta) \subset L(\zeta') \subset A_i^+$. Next we prove that when $k$ is not too large, there will be no points in $A^+_i \cap L(\zeta)$ that have an edge to a point in $L(\zeta')^c$, where $L(\zeta')^c$ is the complement of $L(\zeta')$. Let $\overline{L(\zeta')^c}$ be the closure of $L(\zeta')^c$, and denote $\mathbf{X}_i^c(\zeta') := \overline{L(\zeta')^c} \cap \mathbf{X}_i$. Define $r_0^{i} =\min \norm{ \vx_1^{i} - \vx_2^{i} }$ for $\vx_1^{i} \in X_i(\zeta)$ and $ \vx_2^{i} \in X^c_i(\zeta')$.  Also observe that $A^+_i$, $A^-_i$ and $A^b$ form a  partition of the domain,  which along with the assumption A1 of compact support implies that $r_{0} >0$. Let $V_d$ to be the volume of $d$-dimensional unit ball. Let $p_{\zeta}^{(i)} =\min\limits_{\vx \in L(\zeta) \cap A_i^+} f(\vx)V_d(r_0^{i})^d$ and $p_{\zeta'}^{(i)} = \inf\limits_{\vx \in L(\zeta')^c \cap A_i^{+c}} f(\vx)V_d(r_0^{i})^d$. Since $f(\vx)$ has compact support, $p_{\zeta}^{(i)}>0$ and $p_{\zeta'}^{(i)}>0$.

\begin{lemma}
\label{lemma:isolation}
\textbf{(Isolation).}  $\forall \delta>0$, $\forall \zeta > \frac{1+|\tau_{10}-\tau_{01}|}{2}$, there exists $N(\zeta, \delta)>0$ and $c_2 \in \left(0, \frac{e-1}{e}\right)$ such that $\forall n \geq N(\zeta, \delta)$, $\forall k < c_2\min\limits_{i \in \{0,1\}}\min\left(p_{\zeta}^{(i)}, p_{\zeta'}^{(i)}\right)(n-1)+1$ and $\forall i\in \{0,1\}$:
\[
P\left(\nexists edge=(u,v) \in G_{i}(\mathbf{X},k): u \in \mathbf{X}_{i}(\zeta), v \in \mathbf{X}^c_{i}(\zeta')\right) \geq 1-\delta.
\]
\end{lemma}

Then we show after the $\zeta$-filtering step, with large probability there will be no points from $L(\zeta')^c$ in our final set. Denote $C^{(i)}(\zeta)$ to be the data of type $i$ finally kept by the algorithm using parameter $\zeta$. 

\begin{lemma}\label{lemma:KO}
\textbf{($\zeta$-filtering).} $\forall \delta >0$ and $\zeta \in \left(\frac{1+|\tau_{10}-\tau_{01}|}{2}, 1\right)$, there exists $N(\zeta, \delta)>0$ and $c_3(\zeta) > 0$,  such that $\forall n \geq N(\zeta, \delta)$, $k > c_3(\zeta)\log{(2n/\delta)}$ and $\forall i \in \{0,1\}$ : 
\[
P\left(C^{(i)}(\zeta) \cap L(\zeta')^c =\emptyset \right) \geq 1-\delta.
\] 
\end{lemma}

To obtain Theorem~\ref{purity}, we combine the above lemmas as follows. The minimum purity of a data point retained by our algorithm is lower bounded by the purity of a point in the level set $\{x: \tilde{\eta}(\vx)=\zeta'\}$, which followed by algebraic calculation gives us the first part of the theorem. For the second part of the theorem, we need to calculate the average purity for our algorithm, which requires a more involved integral calculation over $L(\zeta')$. The complete proof is in the supplementary materials.

Our next theorem (proof in supplementary materials) gives a lower bound on the number of points that will be eventually kept by our algorithm $\mathcal{A}_{\zeta}$. As we will see, there will be a trade off between the size of the retained set and its purity. A larger $\zeta$ will result in smaller connected component but higher purity, while small $\zeta$ gives large connected component but lower purity. 

\setcounter{theorem}{1}
\begin{theorem}
\label{abundancy}
\textbf{(Abundancy)} Let $n_c = \#\left\{\bigcup_{i} C^{(i)}(\zeta) \right\}$. Then  $\forall \delta >0$, $\forall \zeta > \frac{1+|\tau_{10}-\tau_{01}|}{2}$,  $\forall \epsilon > 0$, there exist constants $c_1(\zeta) >0$, $c_2 \in \left(0, \frac{e-1}{e}\right)$ and $N(\delta, \zeta, \epsilon) > 0$,  such that $\forall n \geq N(\delta, \zeta, \epsilon)$, and $\forall k \in [c_1(\zeta)\log^q{n}, c_2 n]$, we have $P\left[|n_c/n - \mu(L(\zeta))| \leq \epsilon \right] \geq 1-\delta.$
\end{theorem}

\textbf{Remark on choosing $\zeta$:} In the beginning epochs, because of the corruption of the distribution, one does not expect high confidence in the network classifier. In order to guarantee a minimum level of purity, we set $\zeta$ to be high, say $3/4$. While in these rounds the purity is high and the abundancy is lower bounded, we still want to increase abundancy further. In the later epochs, after training on this clean(er) data, we develop more confidence in our network classifier, and hence we reduce $\zeta$, letting $\zeta$ go to $(1/2+\epsilon)$ for a very small $\epsilon>0$, which corresponds to collecting data from the boundary region of the Bayesian classifier. More discussion can be found in the supplemental material.

\section{Experiments}
\label{sec:exp}

\textbf{Synthetic datasets.} We test the proposed method on CIFAR-10 and CIFAR-100, which are popularly used for the study of label noise. We preprocess each image by normalizing it with the training set mean and standard deviation. For each of the datasets, we split 20\% from the training set as validation data. The validation set could be noisy or clean, whereas we only use clean testing data. We employ ResNet-18 \cite{he2016deep} as the experimental network, which achieves reasonable performance on the two datasets, with 92.0\% and 70.4\% test accuracies on CIFAR-10 and CIFAR-100, respectively. In the supplementary material, we provide additional results on the ModelNet40 \cite{wu20153d} dataset, which offers a domain different from images.

\textbf{Generating corrupted labels.} Similar to \cite{patrini_CVPR2017_FCorrection}, we artificially corrupt the labels by constructing the noise transition matrix $ T $, where $ T_{ij}=P(\widetilde{y}=j|y=i)=\tau_{ij}$ defines the probability that a true label $ y=i $ is flipped to $ j $. Then for each sample with label $ i $, we replace its label with the one sampled from the probability distribution given by the $ i $-th row of matrix $ T $. In this work, we consider two types of noise, both of which can be formulated using the transition matrices. (1) \textit{Uniform flipping}: the true label $ i $ is corrupted uniformly to other classes, i.e., $ T_{ij} = \tau/(\mathcal{C}-1)$ for $ i\neq j $, and $ T_{ii} = 1-\tau$, where $ \tau $ is the constant noise level; (2) \textit{Pair flipping}: the true label $ i $ is flipped to $ j $ or stays unchanged with probabilities $ T_{ij}=\tau $ and $ T_{ii}=1-\tau $, respectively.
Pair flipping is used to simulate the real mistakes made by human labelers on similar classes. For more details we refer the readers to \cite{patrini_CVPR2017_FCorrection}.

\textbf{Baselines.} We compare the proposed method with the following representative approaches. (1) \textit{Standard}, which is simply the standard deep network trained on noisy datasets; (2) \textit{Forgetting} \cite{Arpit_Memorization_ICML2017}; (3) \textit{Bootstrap} \cite{reed_bootstrapping_ICLRW2014}; (4) \textit{Forward Correction} \cite{patrini_CVPR2017_FCorrection}; (5) \textit{Decoupling} \cite{malach_decoupling_NIPS2017}; (6) \textit{MentorNet} \cite{Jiang_MentorNet_ICML18}; (7) \textit{Co-teaching} \cite{Han_Co-Teaching_NIPS2018}; (8) \textit{Co-teaching+} \cite{coeaching_plus_icml2019}; (9) \textit{IterNLD} \cite{Wang_open_set_CVPR2018}; (10) \textit{RoG} \cite{rog_icml2019}; (11) \textit{PENCIL} \cite{pencil_cvpr2019}; (12) \textit{GCE} \cite{gce_nips2018}; (13) \textit{SL} \cite{sce_cvpr2019}. These methods are from different research directions.

We implement our method with PyTorch\footnote[2]{Code is available at https://github.com/pxiangwu/TopoFilter}. For data selection, we compute the KNN graph with CUDA and calculate the largest connected component in C++: the overall computational cost per iteration is less than 1s on an Intel Xeon Gold 5218 CPU and a single NVIDIA RTX 4000 GPU. We use a batch size of 128 and train the networks for 180 epochs to ensure convergence. We train the network with Adam optimizer using its default parameters. The data selection is performed every 5 epochs. All experiments are randomly repeated 5 times, and the mean and standard deviation values are reported. All the methods use clean validation set for model selection. Note that, our method is robust to the validation set, as shown below.

\begin{table*}[t!]
	\setlength{\tabcolsep}{5.2pt}
	\caption{Test accuracies (\%) on CIFAR-10 and CIFAR-100 under different noise types and fractions. The average accuracies and standard deviations over 5 trials are reported. We perform unpaired \textit{t}-test (95\% significance level) on the difference between the test accuracies, and observe the improvement due to our method over state-of-the-art methods is statistically significant for all noise settings.}
	\begin{center}
	\resizebox{1.0\columnwidth}{!}{
			\begin{tabular}{c|l|cccc|ccc}
				\hline
				\multirow{2}{*}{Dataset} & \multirow{2}{*}{Method} & \multicolumn{4}{|c}{Uniform Flipping} & \multicolumn{3}{|c}{Pair Flipping} \\
				\cline{3-9}
				& & 20\% & 40\% & 60\% & 80\% & 20\% & 30\% & 40\% \\
				\hline
				
				\multirow{14}{*}{CIFAR-10~~} & Standard & 85.7 $ \pm $ 0.5 & 81.8 $ \pm $ 0.6  & 73.7 $ \pm $ 1.1 & 42.0 $ \pm $ 2.8 & 88.0 $ \pm $ 0.3 &  86.4 $\pm$ 0.4  & 84.9 $ \pm $ 0.7  \\
				
				& Forgetting & 86.0 $ \pm $ 0.8 & 82.1 $ \pm $ 0.7 & 75.5 $ \pm $ 0.7 & 41.3 $ \pm $ 3.3 & 89.5 $ \pm $ 0.2 & 88.2 $\pm$ 0.1 & 85.0 $ \pm $ 1.0 \\
				
				& Bootstrap & 86.4 $ \pm $ 0.6 & 82.5 $ \pm $ 0.1 & 75.2 $ \pm $ 0.8 & 42.1 $ \pm $ 3.3 & 88.8 $ \pm $ 0.5 & 87.5 $\pm$ 0.5 & 85.1 $ \pm $ 0.3  \\
				
				& Forward & 85.7 $ \pm $ 0.4 & 81.0 $ \pm $ 0.4 & 73.3 $ \pm $ 1.1 & 31.6 $ \pm $ 4.0 & 88.5 $ \pm $ 0.4 & 87.3 $\pm$ 0.2 & 85.3 $ \pm $ 0.6 \\
				
				& Decoupling & 87.4 $ \pm $ 0.3 & 83.3 $ \pm $ 0.4 & 73.8 $ \pm $ 1.0 & 36.0 $ \pm $ 3.2 & 89.3 $ \pm $ 0.3 & 88.1 $\pm$ 0.4  & 85.1 $ \pm $ 1.0 \\
				
				& MentorNet & 88.1 $ \pm $ 0.3 & 81.4 $ \pm $ 0.5 & 70.4 $ \pm $ 1.1 & 31.3 $ \pm $ 2.9 & 86.3 $ \pm $ 0.4 & 84.8 $\pm$ 0.3 & 78.7 $ \pm $ 0.4 \\
				
				& Co-teaching & 89.2 $ \pm $ 0.3 & 86.4 $ \pm $ 0.4 & 79.0 $ \pm $ 0.2 & 22.9 $ \pm $ 3.5 & 90.0 $ \pm $ 0.2 & 88.2 $\pm$ 0.1 & 78.4 $ \pm $ 0.7 \\
				
				& Co-teaching+ & 89.8 $\pm$ 0.2 & 86.1 $\pm$ 0.2 & 74.0 $\pm$ 0.2 & 17.9 $\pm$ 1.1 & 89.4 $\pm$ 0.2 & 87.1 $\pm$ 0.5 & 71.3 $\pm$ 0.8\\
				
				& IterNLD & 87.9 $\pm$ 0.4 & 83.7 $\pm$ 0.4 & 74.1 $\pm$ 0.5  & 38.0 $\pm$ 1.9  & 89.3 $\pm$ 0.3 & 88.8 $\pm$ 0.5 & 85.0 $\pm$ 0.4 \\

				& RoG & 89.2 $\pm$ 0.3 & 83.5 $\pm$ 0.4 & 77.9 $\pm$ 0.6 & 29.1 $\pm$ 1.8 & 89.6 $\pm$ 0.4 & 88.4 $\pm$ 0.5 & 86.2 $\pm$ 0.6\\
				
				& PENCIL & 88.2 $\pm$ 0.2 & 86.6 $\pm$ 0.3 & 74.3 $\pm$ 0.6 & 45.3 $\pm$ 1.4 & 90.2 $\pm$ 0.2  & 88.3 $\pm$ 0.2 & 84.5 $\pm$ 0.5 \\
				
				& GCE & 88.7 $\pm$ 0.3 & 84.7 $\pm$ 0.4 & 76.1 $\pm$ 0.3 & 41.7 $\pm$ 1.0 & 88.1 $\pm$ 0.3 & 86.0 $\pm$ 0.4 & 81.4 $\pm$ 0.6 \\
				
				& SL & 89.2 $\pm$ 0.5 & 85.3 $\pm$ 0.7 & 78.0 $\pm$ 0.3 & 44.4 $\pm$ 1.1 & 88.7 $\pm$ 0.3 & 86.3 $\pm$ 0.1 & 81.4 $\pm$ 0.7\\

				\cline{2-9}
				& TopoCC & 89.6 $ \pm $ 0.3 & 86.0 $ \pm $ 0.5 & 78.7 $ \pm $ 0.5 & 43.0 $ \pm $ 2.0 & 89.8 $ \pm $ 0.3 & 87.3 $\pm$ 0.3 & 85.4 $ \pm $ 0.4 \\
				& TopoFilter & \textbf{90.2 $\pm$ 0.2} & \textbf{87.2 $\pm$ 0.4} & \textbf{80.5 $\pm$ 0.4} & \textbf{45.7 $\pm$ 1.0} & \textbf{90.5 $\pm$ 0.2} & \textbf{89.7 $\pm$ 0.3} & \textbf{87.9 $\pm$ 0.2}\\
				\hline

				\hline
				
				\multirow{14}{*}{CIFAR-100} & Standard & 56.5 $ \pm $ 0.7 & 50.4 $ \pm $ 0.8 & 38.7 $ \pm $ 1.0 & 18.4 $ \pm $ 0.5 & 57.3 $ \pm $ 0.7 & 52.2 $\pm$ 0.4  & 42.3 $ \pm $ 0.7  \\
				
				& Forgetting & 56.5 $ \pm $ 0.7 & 50.6 $ \pm $ 0.9 & 38.7 $ \pm $ 1.0 & 18.4 $ \pm $ 0.4 & 57.5 $ \pm $ 1.1 & 52.4 $\pm$ 0.8 & 42.4 $ \pm $ 0.8 \\
				
				& Bootstrap &  56.2 $ \pm $ 0.5 & 50.8 $ \pm $ 0.6 & 37.7 $ \pm $ 0.8 & 19.0 $ \pm $ 0.6 & 57.1 $ \pm $ 0.9 & 53.0 $\pm$ 0.9 & 43.0 $ \pm $ 1.0 \\
				
				& Forward & 56.4 $ \pm $ 0.4 & 49.7 $ \pm $ 1.3 & 38.0 $ \pm $ 1.5 & 12.8 $ \pm $ 1.3 & 56.8 $ \pm $ 1.0 & 52.7 $\pm$ 0.5 & 42.0 $ \pm $ 1.0 \\
				
				& Decoupling & 57.8 $ \pm $ 0.4 & 49.9 $ \pm $ 1.0 & 37.8 $ \pm $ 0.7 & 17.0 $ \pm $ 0.7 & 60.2 $ \pm $ 0.9 & 54.9 $\pm$ 0.1 & 47.2 $ \pm $ 0.9 \\
				
				& MentorNet & 62.9 $ \pm $ 1.2 & 52.8 $ \pm $ 0.7 & 36.0 $ \pm $ 1.5 & 15.1 $ \pm $ 0.9 & 62.3 $ \pm $ 1.3 &  55.3 $\pm$ 0.5 & 44.4 $ \pm $ 1.6 \\
				
				& Co-teaching & 64.8 $ \pm $ 0.2 & 60.3 $ \pm $ 0.4 & 46.8 $ \pm $ 0.7 & 13.3 $ \pm $ 2.8 & 63.6 $ \pm $ 0.4 & 58.3 $\pm$ 1.1 & 48.9 $ \pm $ 0.8  \\
				
				& Co-teaching+ & 64.2 $\pm$ 0.4 & 53.1 $\pm$ 0.2 & 25.3 $\pm$ 0.5 & 10.1 $\pm$ 1.2 & 60.9 $\pm$ 0.3 & 56.8 $\pm$ 0.5 & 48.6 $\pm$ 0.4\\
								
				& IterNLD & 57.9 $\pm$ 0.4 & 51.2 $\pm$ 0.4 & 38.1 $\pm$ 0.9 & 15.5 $\pm$ 0.8 & 58.1 $\pm$ 0.4 & 53.0 $\pm$ 0.3 & 43.5 $\pm$ 0.8\\

				& RoG & 63.1 $\pm$ 0.3 & 58.2 $\pm$ 0.5 & 47.4 $\pm$ 0.8 & 20.0 $\pm$ 0.9 & 67.1 $\pm$ 0.6 & 65.6 $\pm$ 0.4 & 58.8 $\pm$ 0.1\\
				
				& PENCIL & 64.9 $\pm$ 0.3 & 61.3 $\pm$ 0.4 & 46.6 $\pm$ 0.7 & 17.3 $\pm$ 0.8 & 67.5 $\pm$ 0.5 & 66.0 $\pm$ 0.4 & 61.9 $\pm$ 0.4\\
				
				& GCE & 63.6 $\pm$ 0.6 & 59.8 $\pm$ 0.5 & 46.5 $\pm$ 1.3 & 17.0 $\pm$ 1.1 & 64.8 $\pm$ 0.9 & 61.4 $\pm$ 1.1 & 50.4 $\pm$ 0.9\\
				
				& SL & 62.1 $\pm$ 0.4 & 55.6 $\pm$ 0.6 & 42.7 $\pm$ 0.8 & 19.5 $\pm$ 0.7 & 59.2 $\pm$ 0.6 & 55.1 $\pm$ 0.7 & 44.8 $\pm$ 0.1\\
				
				\cline{2-9}
				& TopoCC & 64.1 $ \pm $ 0.8 & 57.3 $ \pm $ 1.6 & 45.1 $ \pm $ 1.1 & 19.1 $ \pm $ 0.6 & 66.3 $ \pm $ 0.8 & 62.3 $\pm$ 0.9 & 58.3 $ \pm $ 0.9 \\
				& TopoFilter & \textbf{65.6 $\pm$ 0.3} & \textbf{62.0 $\pm$ 0.6} & \textbf{47.7 $\pm$ 0.5} & \textbf{20.7 $\pm$ 1.2} & \textbf{68.0 $\pm$ 0.3} & \textbf{66.7 $\pm$ 0.6} & \textbf{62.4 $\pm$ 0.2}\\
				
				\hline
			\end{tabular}
		}
	\end{center}
	\label{tab:cifar}
	\vspace{-0.1in}
\end{table*}

\textbf{Results.} Table~\ref{tab:cifar} shows the performance of different methods. We observe that TopoFilter consistently outperforms the competitive methods across different noise settings. This suggests the benefits of leveraging spatial pattern for label denoising. Notice that, although the posterior probabilities employed by a few works are closely related to the penultimate layer features used in our method, they intrinsically undergo a dimension reduction process and may lose some critical information. This would explain the superior performance of our method to some degree.

\begin{figure}[b!]
	\begin{center}
		\begin{minipage}{.38\linewidth}
			\centering
			\includegraphics[width=\textwidth]{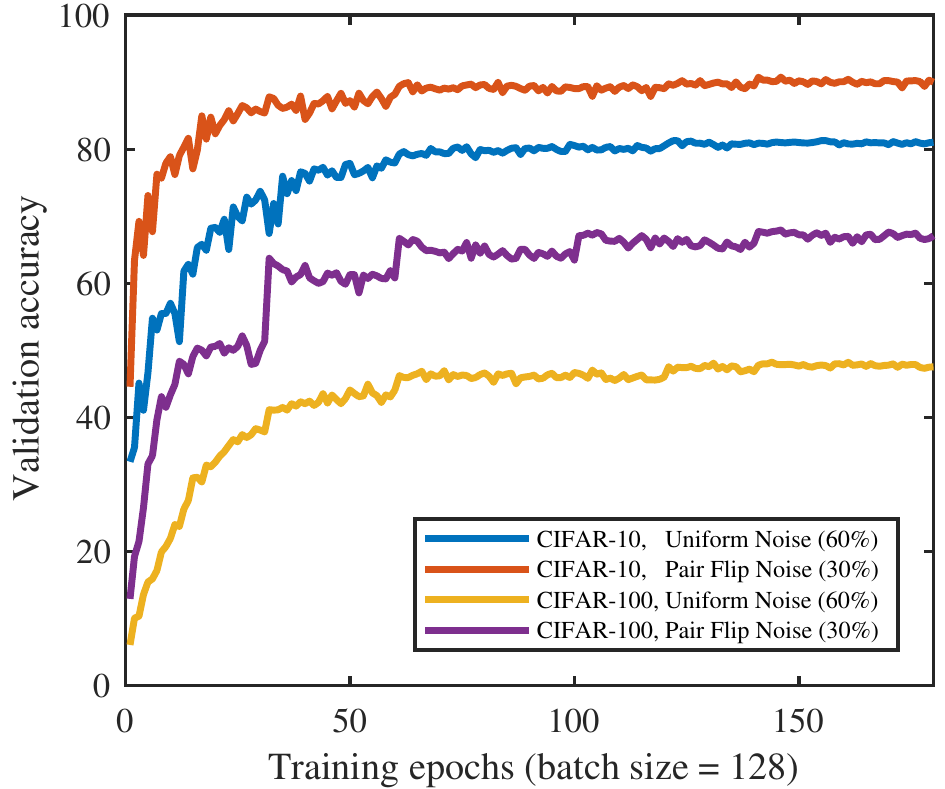}
			\scriptsize
			(a)
		\end{minipage}
		\begin{minipage}{.61\linewidth}
		    \centering
    		\begin{tabular}{cc}
    			\begin{minipage}{.46\linewidth}
    				\centering
    				\includegraphics[width=\textwidth]{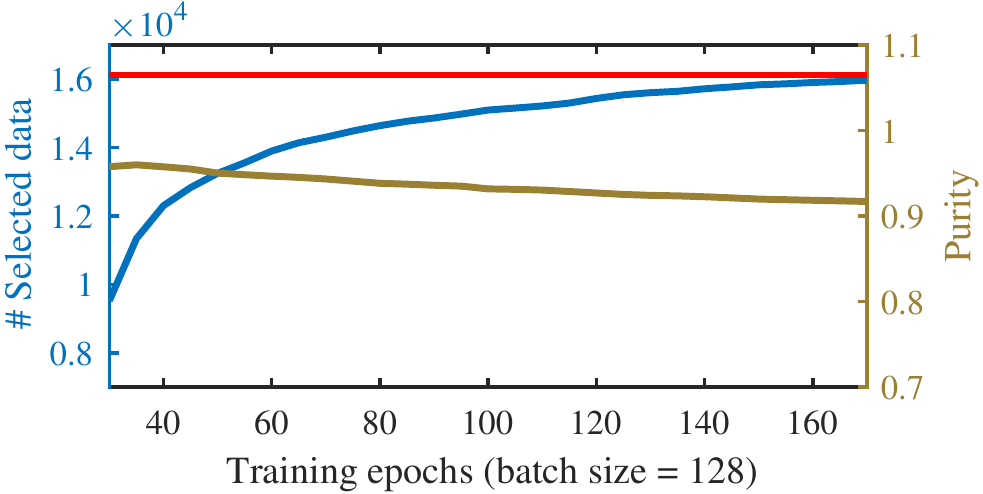}
    				\scriptsize
    				(b) 60\% uniform noise, CIFAR-10
    			\end{minipage} &
    			\hspace{-.1in}
    			\begin{minipage}{.46\linewidth}
    				\centering
    				\includegraphics[width=\textwidth]{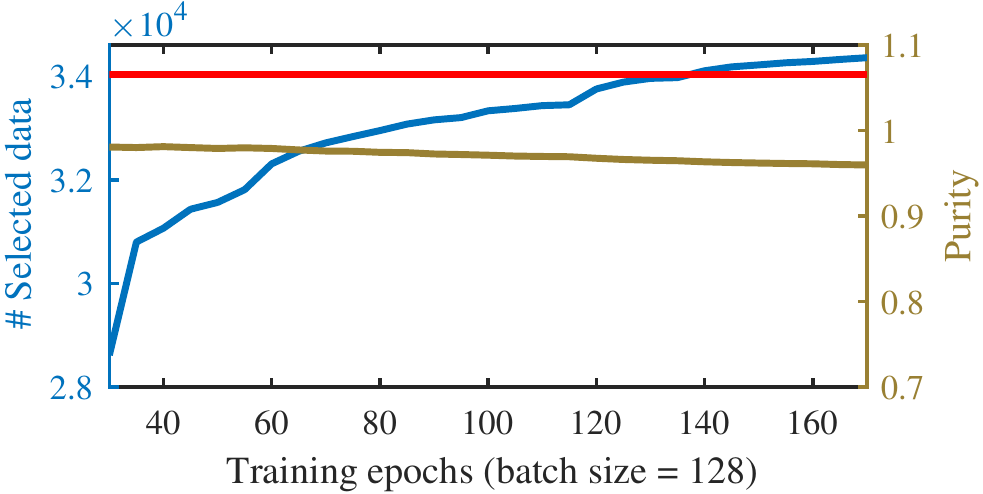}
    				\scriptsize
    				(c)  30\% pair flip noise, CIFAR-10
    			\end{minipage} \\
    			\begin{minipage}{.46\linewidth}
    				\centering
    			    \includegraphics[width=\textwidth]{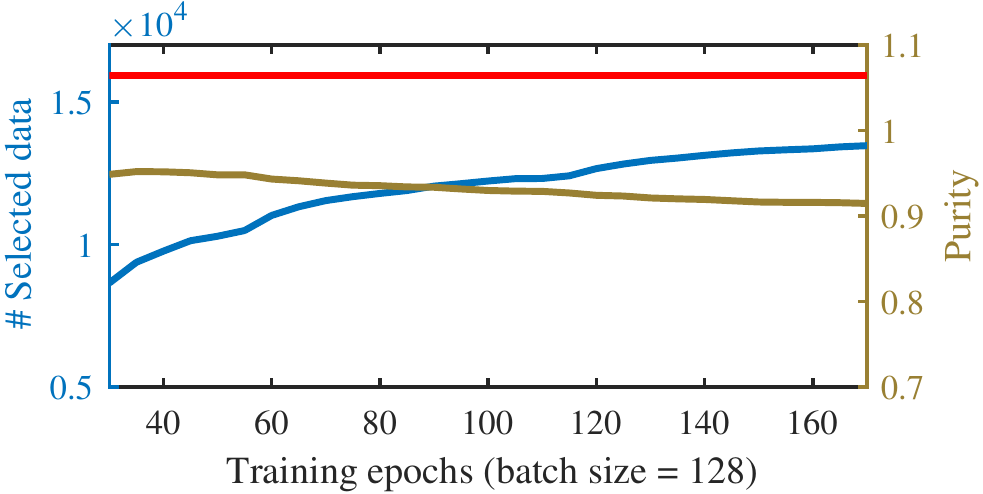}
    			    \scriptsize
    			    (d) 60\% uniform noise, CIFAR-100
    			\end{minipage} &
    			\hspace{-.1in}
    			\begin{minipage}{.46\linewidth}
    				\centering
    			    \includegraphics[width=\textwidth]{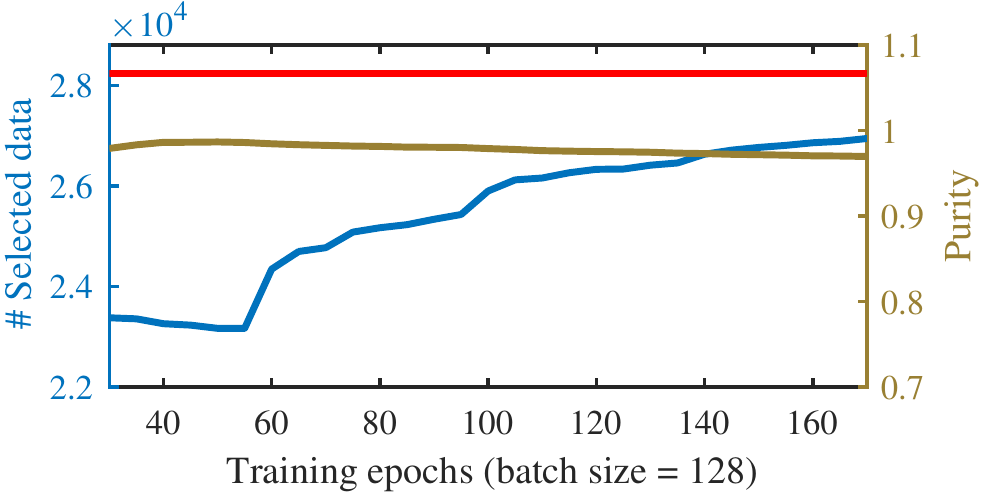}
    			    \scriptsize
    			    (e) 30\% pair flip noise, CIFAR-100
    			\end{minipage} \\
    		\end{tabular}
		\end{minipage}
	\end{center}
	\caption{(a) Validation accuracies. (b-e) The size of selected data (the blue curve) and its purity (the brown curve). The red line denotes the upper-bound size of clean data.}
	\label{fig:val_accuracy}
\end{figure}

From Table~\ref{tab:cifar} we also observe that simply using largest connected components (TopoCC) or spatial outliers (IterNLD) are less effective. This is because the data in the connected components could still contain noise, and thereby hurts the model performance eventually. Similarly, the outlier detection is not reliable as the noisy data could form a small cluster and thus do not appear to be outliers spatially, as illustrated in Fig.~\ref{fig:tsne_feat}(a)(b).

\textbf{Behavior analysis.} In Fig.~\ref{fig:val_accuracy}(a) we show the validation accuracy of TopoFilter on the two datasets with different noise settings. As is shown, the accuracy of TopoFilter does not drop throughout the training process. This indicates that our method filters out the noise successfully and stably. This is further confirmed in Fig.~\ref{fig:val_accuracy}(b)-(e), where the collected data pool preserves high purity during training with its size approaching the limit steadily.

\textbf{Parameter analysis.} In Fig.~\ref{fig:ablation}(a) we show that TopoFilter is robust to the size and purity of validation set. Notably, it achieves almost the same performance even with noisy or small clean validation data. In Fig.~\ref{fig:ablation}(b)(c) we demonstrate that TopoFilter is insensitive to the parameters $k_c$ and $k_o$, up to a wide range. Here $k_c$ and $k_o$ represent the parameters for computing the $k$-nearest neighbors in largest connected component and outlier detection, respectively. This is consistent with our theoretical findings in Section \ref{subsec:purity}. In Fig.~\ref{fig:ablation}(d), we show that TopoFilter is robust to the feature dimensions. See supplementary material for more results.

\begin{figure}[t!]
	\begin{center}
		\begin{minipage}{0.245\linewidth}
			\centering
			\includegraphics[width=\textwidth]{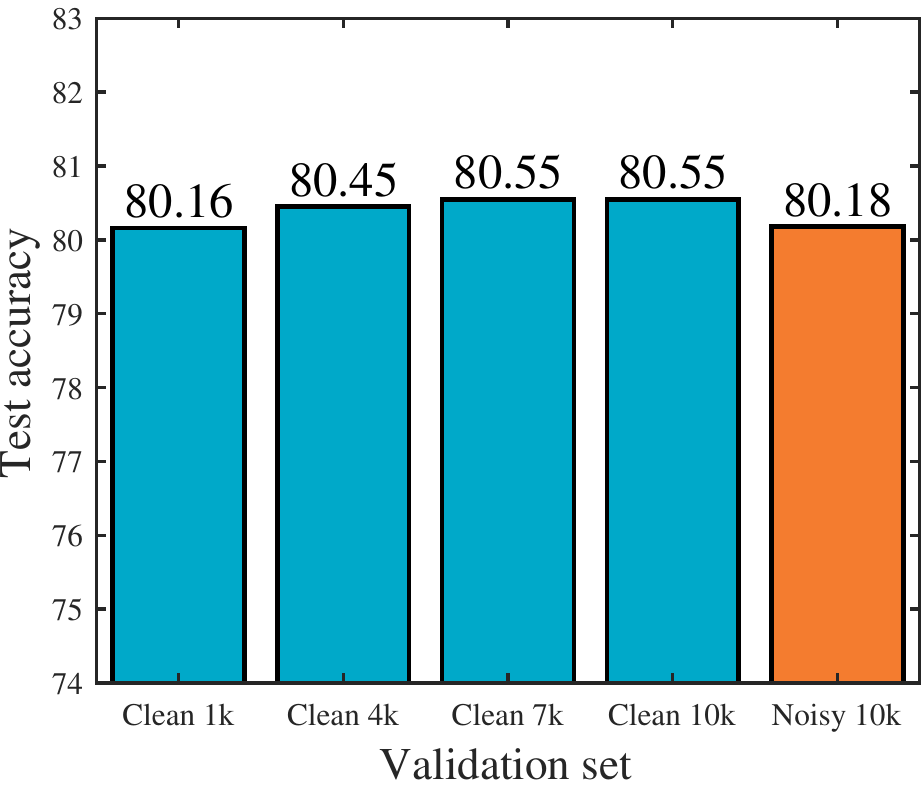}
			\small
			(a)
		\end{minipage}
		\begin{minipage}{0.245\linewidth}
			\centering
			\includegraphics[width=\textwidth]{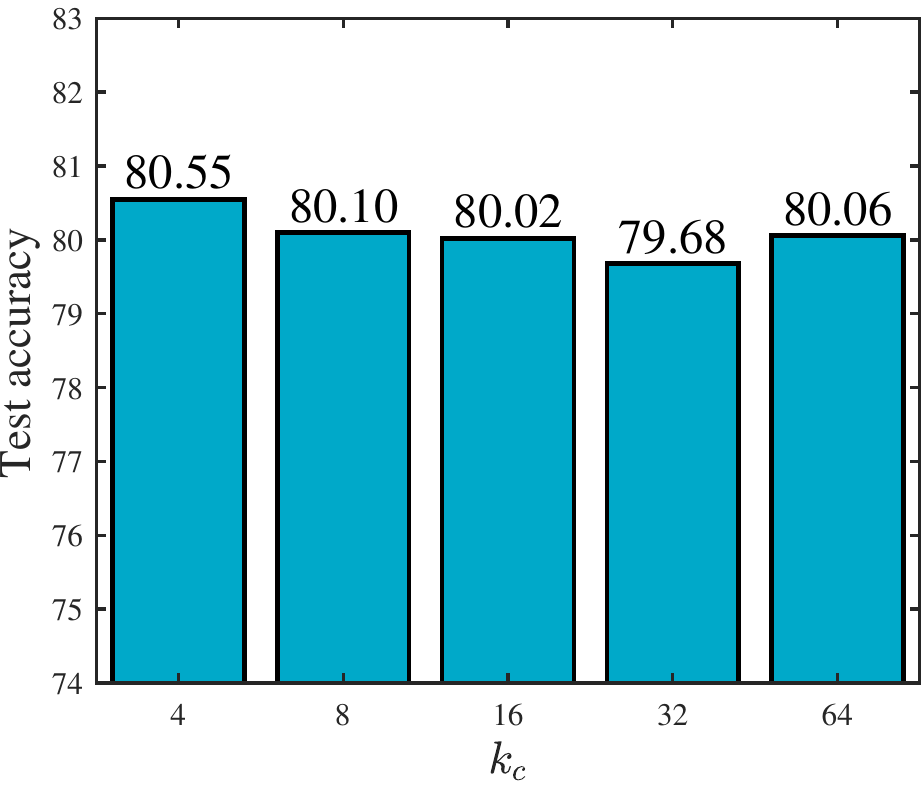}
			\small
			(b)
		\end{minipage}
		\begin{minipage}{0.245\linewidth}
			\centering
			\includegraphics[width=\textwidth]{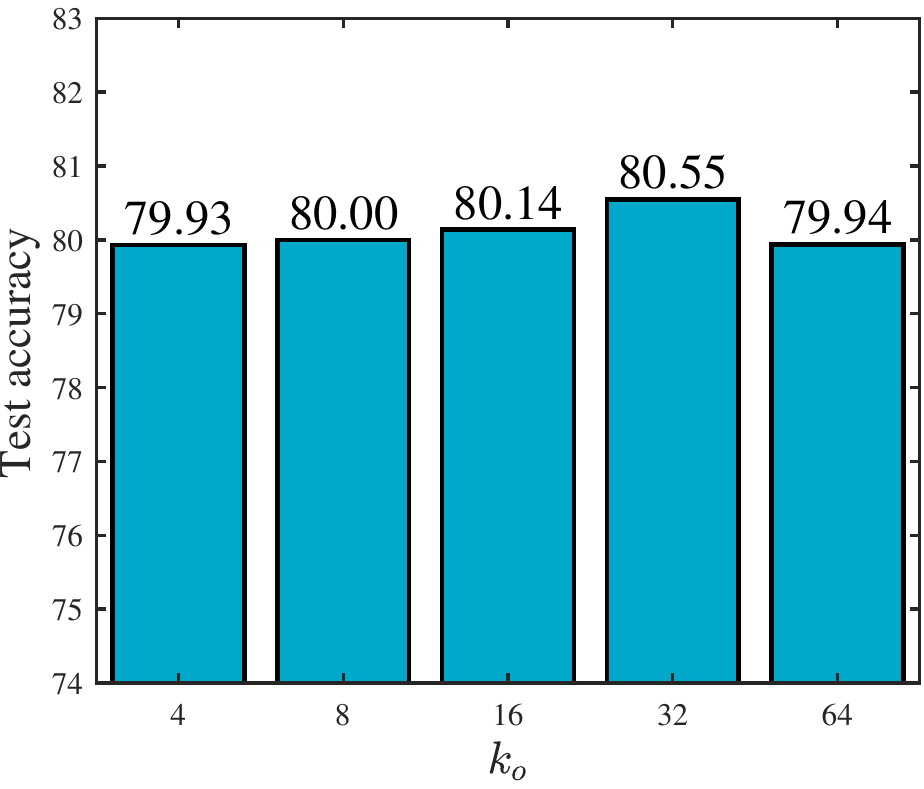}
			\small
			(c)
		\end{minipage}
		\begin{minipage}{0.245\linewidth}
			\centering
			\includegraphics[width=\textwidth]{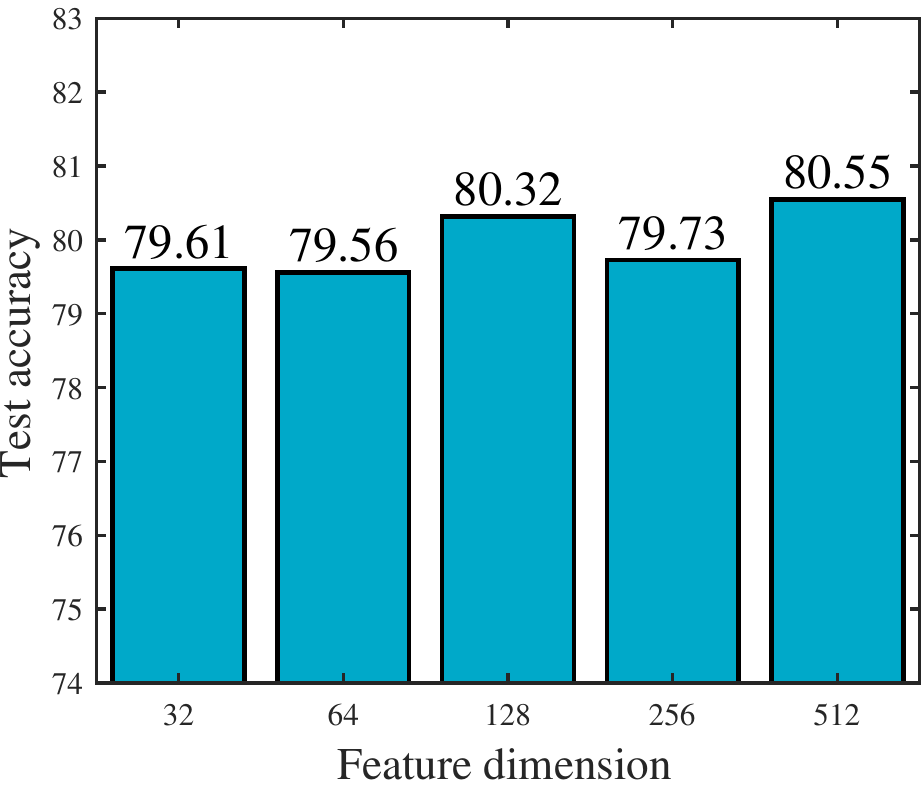}
			\small
			(d)
		\end{minipage}
	\end{center}
	\caption{Parameter analysis: (a) validation set; (b) $k_c$; (c) $k_o$; (d) Feature dimension. We use uniform flipping noise (60\%) and CIFAR-10. For each figure, we change one of the parameters while keeping the others fixed. (to $k_c=4$, $k_o=32$, feature dimension = 512, validation set = clean 10k). }
	\label{fig:ablation}
\end{figure}

\label{subsec:real_noise}
\textbf{Real-world corrupted dataset}.
To test the effectiveness of TopoFilter in real setting, we conduct experiments on the Clothing1M dataset \cite{cloth1m}. This dataset consists of 1 million clothing images obtained from online shopping websites with 14 classes. The labels in this dataset are extremely noisy (with an estimate accuracy of 61.54\%) and their structure is unknown. This dataset also provides 50\textit{k}, 14\textit{k} and 10\textit{k} manually verified clean data for training, validation, and testing, respectively. Following \cite{pencil_cvpr2019,tanaka2018joint,sce_cvpr2019}, we evaluate the classification accuracy on the 10\textit{k} clean data and do not use the 50\textit{k} clean training data; similarly, we use a randomly sampled pseudo-balanced subset as the training set, which includes about 260\textit{k} images. We use ResNet-50 with weights pre-trained on ImageNet and train the model with SGD. We set the batch size 32, learning rate 0.001, and preprocess the images following the same procedure in \cite{pencil_cvpr2019,tanaka2018joint,sce_cvpr2019}. We train the model for 10 epochs and collect the clean data per epoch. The cost of computing $k$-nearest neighbors and connected components in data selection is about 25s.

We compare our method with the following ones: (1) \textit{Standard}; (2) \textit{Forward Correction} \cite{patrini_CVPR2017_FCorrection}; (3) \textit{D2L} \cite{Ma_dim_driven_ICML18} (4) \textit{Joint Optimization} \cite{tanaka2018joint}; (5) \textit{PENCIL} \cite{pencil_cvpr2019}; (6) \textit{MLNT} \cite{learning_to_learn_cvpr2019}; (7) \textit{DY} \cite{dynamic_bootstrap_icml}; (8) \textit{GCE} \cite{gce_nips2018}; (9) \textit{SL} \cite{sce_cvpr2019}. As is shown in Table~\ref{tab:cloth1m}, TopoFilter obtains the best performance compared to the baseline methods. This demonstrates its applicability to the real-world scenarios beyond the synthetic noise.

\begin{table}[]
    \centering
    \caption{Classification accuracy (\%) on Clothing1M test set.}
    \resizebox{1.0\columnwidth}{!}{
        \begin{tabular}{c|ccccccccc|c}
        \hline
             Method &  Standard & Forward & D2L & Joint Opt. & PENCIL & MLNT & DY & GCE & SL & \textbf{TopoFiler} \\
             \hline
             Accuracy & 68.94 & 69.84 & 69.47 & 72.23 & 73.49 & 73.47 & 71.00 & 69.75 & 71.02 & \textbf{74.10}\\
             \hline
        \end{tabular}
    }
    \vspace{-0.1in}
    \label{tab:cloth1m}
\end{table}

\section{Conclusion}
We propose a novel method named TopoFilter for the learning with label noise. Our method leverages the topological property of the data in feature space, and jointly learns the data representation and collects the clean data during training. Theoretically, we show that TopoFilter is able to select the most of clean data with high confidence. Our empirical results on different datasets demonstrate the advantages of TopoFilter in improving the robustness of deep models to label noise.

We note that this paper only focuses on the connected components of the data in the latent representational space. In the future, we may extend the algorithm to a differentiable topological loss \cite{chen2019topological} based on the theory of persistent homology \cite{edelsbrunner2010computational}. The theory has been shown to provide robust solution to learning problems such as weakly supervised learning \cite{hofer2019connectivity}, clustering \cite{ni2017composing,chazal2013persistence}, and graph neural networks \cite{zhao2020persistence}. 

\section*{Broader Impact} 
Label noise is ubiquitous in real-world data. This noise may arise from the cheap but imperfect annotations, such as crowdsourcing and online queries. Moreover, even by human annotators, the data labeling process is still error-prone. Another typical source of noise is the data poisoning, where corruptions are intentionally injected into the labels. Training with noisy labels would severely deteriorate the performance of deep models, due to their strong memorization ability and overfitting on corrupted information \cite{Zhang_noise_ICLR2017, Arpit_Memorization_ICML2017}. Therefore, limiting the adverse influence of noisy labels is of great practical significance and has gained increasing attention from the community.

In this work we attack the label noise from the perspective of data topology. Different from previous works which mostly inspect the sample losses or predicted posteriors, we show that the spatial behavior of the data could be well exploited, a point that has been largely ignored before. Importantly, in theory we prove that our topology-motivated method is able to exhaustively select the clean data with high probability. In this way we keep the network away from the negative influence of corrupted labels and promote the training healthily and steadily. Our method is simple yet with theoretical insights, and would provide contributions supplementary to the existing works. We believe it deserves the attention from the machine learning community.

\section*{Acknowledgement}
Zheng and Chen’s research was partially supported by NSF CCF-1855760 and IIS-1909038.
Wu and Metaxas's research was partially supported by NSF CCF-1733843, IIS-1703883, CNS-1747778, IIS-1763523, IIS-1849238-825536 and MURI-Z8424104-440149. 
Goswami's research was partially supported by NSF CRII-1755791 and CCF-1910873.

\bibliographystyle{plain}

\end{document}


\maketitle

\appendix

In this supplemental material, we first provide proofs of all the major theorems and lemmas in the main paper. Then we discuss the choice of hyper-parameters, mainly the $\zeta$-filtering parameter $\zeta$. We also discuss other relevant hyper-parameters. Finally, we provide additional results from the data domain different than images.

\section{Proofs of Purity and Abundancy of TopoFilter}
We provide proofs of all  theorems and lemmas in the main paper. For completeness, we restate all the definitions, theorems and lemmas. Theorem \ref{theorem:purity} provides guarantees for the purity of the selected data. Theorem \ref{theorem:abundancy} shows the abundancy, i.e., our algorithm collects sufficient amount clean data. 

   \textbf{Background and Setting}
    
    For the convenience of the reader, we restate our notation here. Assume that the data points and labels lie in $\mathcal{X} \times \mathcal{Y}$, where the features $\mathcal{X} \subset R^d$ and labels $\mathcal{Y} := [\mathcal{C}] := \{1, 2, 3, \cdots, \mathcal{C}\}$. Assume the (data, true label) pairs follow some distribution $\mathcal{F} \sim \mathcal{X} \times \mathcal{Y}$. Let $f(\vx):= \sum_{i \in [\mathcal{C}]} \mathcal{F}(\vx,i)$ be the density at $\vx$. Due to label noise, label $y = i$ is flipped to $\widetilde{y} = j$ with probability $\tau_{ij}$ and is assumed to be independent of $\vx$. 
    
    Let $\mathbf{X} \subset \mathcal{X}$ be the finite set of features in the data sample, and let $G(\mathbf{X},k)$ be the mutual $k$-nearest neighbor graph on $\mathbf{X}$ using the Euclidean metric on $\mathcal{X}$, whose edge set $E = \{(\vx_{1}, \vx_{2})\in \mathbf{X}^{2} \mid \vx_{1}\in KNN(\vx_{2}) \text{ or } \vx_{2}\in KNN(\vx_{1})\}$. Also, $\forall i \in [\mathcal{C}]$, let $G_i(\mathbf{X},k)$ be the induced subgraph of $G(\mathbf{X},k)$ consisting only of vertices $\vx \in \mathbf{X}$ with label $\widetilde{y}(\vx) = i $. 
    
    Let $\eta_i(\vx) = P(y=i \mid \vx)$ and $\widetilde{\eta}_i(\vx) = P(\widetilde{y}=i \mid \vx)$ be the clean and noisy posterior probability of labels given a feature $\vx$, respectively. For simplicity, we focus on the binary label case for now. Then for $i \in \left\{0, 1\right\}$, these two probabilities are related by $\widetilde{\eta}_i(\vx) = (1-\tau_{01}-\tau_{10})\eta_i(\vx) + \tau_{1-i,i}$. Define the super level set $L(t) = \{\vx \mid \max(\eta_1(\vx), \eta_0(\vx)) \geq t \}$. 
    For binary case, we have a partition of the space: 
    \[
    \begin{array}{cl}
    A^+_i & = \left\{\vx : \widetilde{\eta}_i(\vx) > \max(\frac{1}{2}, \frac{1+\tau_{i,1-i}-\tau_{1-i,i}}{2})) \right\} = \left\{\vx : \eta_i(\vx) > \max(\frac{1}{2}, \frac{1/2-\max(\tau_{10}, \tau_{01})}{2(1-\tau_{10}-\tau_{01})}) \right\},\\
    A^-_i & = \left\{\vx : \widetilde{\eta}_i(\vx) < \min(\frac{1}{2}, \frac{1+\tau_{i,1-i}-\tau_{1-i,i}}{2})) \right\} = \left\{\vx : \eta_i(\vx) < \min(\frac{1}{2}, \frac{1/2-\max(\tau_{10}, \tau_{01})}{2(1-\tau_{10}-\tau_{01})}) \right\},\\
    A^b &= \mathcal{X} \setminus (A_i^+ \cup A_i^-).
    \end{array}
    \]\\[1em]

    We also restate the definition of purity here. Consider an algorithm $\mathcal{A}$ that takes as input a random sample of size $n$, $S_{n}= \{(\vx_{i},\tilde{y}(\vx_{i}))\}_{i=1}^{n}$, and let $\mathbf{X} := \{{\vx}_{i}\}_{i=1}^{n} \subset \mathcal{X}$. Algorithm $\mathcal{A}$ then outputs $\cup_{i \in \{0,1\}} C^{(i)}$, where $C^{(i)} \subseteq \mathbf{X}_{i} := \{\vx : \widetilde{y}(\vx) = i\}$ is the claimed ``clean'' set for label $i$. \\
    
    \begin{definition}[\textbf{Purity}] 
    
    We define two kinds of purity of $\mathcal{A}$ on $S_{n}$. One captures the worst-case behavior of the algorithm, while the other captures the average-case behavior.
    
    \noindent\textbf{1. Minimum Purity} $\ell_{S_n, \mathcal{A}} : =  \min\limits_{i \in \{0,1\}}\min\limits_{\vx \in C^{(i)}} P(y = i \mid \widetilde{y} = i, \vx) =  \min\limits_{i \in \{0,1\}}\min\limits_{\vx \in C^{(i)}} \tau_{ii} \frac{\eta_i(\vx)}{\widetilde{\eta}_i(\vx)}.$
    
    \noindent\textbf{2. Average Purity} $\ell'_{S_n, \mathcal{A}} : = \sum\limits_{i \in \{0,1\}} \frac{1}{|C^{(i)}|}\sum_{\vx \in C^{(i)}}\tau_{ii} \frac{\eta_i(\vx)}{\widetilde{\eta}_i(\vx)}.$

    \end{definition}
    
\textbf{Assumption.}
\begin{itemize}
    \item \textbf{A1:} $f(\vx)$ (the density on the feature space) has compact support.
    \item \textbf{A2:} $\forall i\in\{0,1\}$, $\eta_i   (\vx)$ is continuous.
    \item \textbf{A3:} $\forall i\in\{0,1\}$, $A_i^+$ is a connected set.
    \item \textbf{A4:} $\tau_{10}, \tau_{01} \in \left[0, \frac{1}{2}\right)$
\end{itemize}

Denote by $\mathcal{A}_0$ the  naive algorithm which takes input $S_n$ and simply outputs $C^{(i)} = \mathbf{X}_{i}$ for $i=0,1$, i.e., does no processing and treats corrupted labels as clean. The purity of $\mathcal{A}_{0}$ is the ``default'' purity of the data set. Denote our algorithm with parameter $\zeta$ by $\mathcal{A}_{\zeta}$. Let $\zeta' = \frac{1}{2}\left(\zeta+\frac{1+|\tau_{10}-\tau_{01}|}{2}\right)$, and $e$ be the natural constant.

\setcounter{theorem}{0}
\begin{theorem}[\textbf{Purity Guarantee}]
\label{theorem:purity} 
$\forall \delta>0$, $\forall \zeta > \frac{1+|\tau_{10}-\tau_{01}|}{2}$ and $\forall q>1$, there exist $N(\delta, \zeta, q) > 0$, $c_1(\zeta)>0$ , constant $c_2 \in \left(0,\frac{e-1}{e}\right)$, and  an increasing function $g_1(\zeta) \in \left[\frac{[2\zeta+1+|\tau_{10}-\tau_{01}|-4\max(\tau_{10}, \tau_{01})]\min(\tau_{11}, \tau_{00})}{[2\zeta+1+|\tau_{10}-\tau_{01}|](1-\tau_{10}-\tau_{01})}, 1\right]$ and function $g_2(\zeta) > 0$, such that $\forall n\geq N$ and $\forall k \in [c_1(\zeta)\log^q{n}, c_2 n]$:

\noindent\textbf{1.} $P\left[(\ell_{S_{n}, \mathcal{A}_{\zeta}} - \ell_{S_{n}, \mathcal{A}_0}) >  g_1(\zeta) \right] \geq 1-\delta, \text{ and }$

\noindent\textbf{2.}  $P\left[(\ell'_{S_{n}, \mathcal{A}_{\zeta}} - \ell'_{S_{n}, \mathcal{A}_0}) > g_2(\zeta) \right] \geq 1-\delta.$\\[2em]

\end{theorem}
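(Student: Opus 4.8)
The plan is to translate the graph-topological selection performed by $\mathcal{A}_\zeta$ into a pointwise threshold on the noisy posterior $\widetilde{\eta}_i$, and then to turn that threshold into a purity bound. The key observation is that, for a vertex $\vx$ carrying noisy label $i$, the local fraction of its neighbors in the induced mutual $k$-NN graph $G_i(\mathbf{X},k)$ that also carry label $i$ is an empirical estimate of $\widetilde{\eta}_i(\vx)$, and the $\zeta$-filter retains $\vx$ essentially when this estimate exceeds $\zeta$. Hence the central step I would prove first is a uniform concentration lemma: for every $\delta>0$ there is $N(\delta,\zeta,q)$ and a constant $c_1(\zeta)$ so that, whenever $n\ge N$ and $k\in[c_1(\zeta)\log^q n,\, c_2 n]$, with probability at least $1-\delta$ the local class-$i$ frequency at $\vx$ lies within $\zeta-\zeta'$ of $\widetilde{\eta}_i(\vx)$ simultaneously for all $\vx\in\mathbf{X}$ and all $i\in\{0,1\}$. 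This is the usual $k$-NN bias–variance split: continuity of $\eta_i$ and hence of $\widetilde{\eta}_i$ together with compact support (A1, A2) forces the $k$-NN radius, and thus the approximation bias, to $0$ as $k/n\to 0$, which is what the upper limit $k\le c_2 n$ with $c_2<\frac{e-1}{e}$ buys; the stochastic deviation of a sum of $k$ indicators is $O(1/\sqrt{k})$, and a union bound over the $n$ vertices forces $k\gtrsim\log n$, explaining the lower limit $c_1(\zeta)\log^q n$ with $q>1$ (so the failure probabilities are summable and can be driven below $\delta$).

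Given this lemma, I would deduce the containment statement that drives purity: with probability $\ge 1-\delta$, every $\vx\in C^{(i)}$ has $\widetilde{\eta}_i(\vx)\ge\zeta'$. Because $\mathcal{A}_\zeta$ keeps only points whose empirical class-$i$ frequency is at least $\zeta$, and because the true frequency is within $\zeta-\zeta'$ of the estimate, the retained points satisfy $\widetilde{\eta}_i(\vx)\ge\zeta-(\zeta-\zeta')=\zeta'$. Since $\zeta'>\frac{1+|\tau_{10}-\tau_{01}|}{2}$, all retained points sit strictly inside $A^+_i$, i.e., in a super-level set where label $i$ is the clean majority; here the connectedness assumption (A3) is what guarantees that the component-based filter recovers this super-level set as a single piece and does not leak into $A^-_i$ or fragment it, so that the empirical-to-population threshold correspondence is clean.

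Next I would convert the posterior bound into the stated purity gain. Substituting $\eta_i(\vx)=\frac{\widetilde{\eta}_i(\vx)-\tau_{1-i,i}}{1-\tau_{10}-\tau_{01}}$ shows that $P(y=i\mid\widetilde{y}=i,\vx)=\tau_{ii}\frac{\eta_i(\vx)}{\widetilde{\eta}_i(\vx)}=\frac{\tau_{ii}}{1-\tau_{10}-\tau_{01}}\bigl(1-\frac{\tau_{1-i,i}}{\widetilde{\eta}_i(\vx)}\bigr)$ is increasing in $\widetilde{\eta}_i(\vx)$; hence the minimum purity over $C^{(i)}$ is attained at the threshold and is at least the value obtained by setting $\widetilde{\eta}_i=\zeta'$. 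Taking the worst label $i$ (which selects $\max(\tau_{10},\tau_{01})$ in the numerator and $\min(\tau_{11},\tau_{00})$ in front) and using $4\zeta'=2\zeta+1+|\tau_{10}-\tau_{01}|$ reproduces exactly the lower endpoint of the interval claimed for $g_1(\zeta)$, while monotonicity of this expression in $\zeta$ gives that $g_1$ is increasing. For the gap over $\mathcal{A}_0$, I would note that $\mathcal{A}_0$ outputs all of $\mathbf{X}_i$, which for large $n$ contains sample points arbitrarily deep in $A^-_i$ (where $\eta_i\to0$ and the purity $\to0$), so $\ell_{S_n,\mathcal{A}_0}$ is negligible and the gain $\ell_{S_n,\mathcal{A}_\zeta}-\ell_{S_n,\mathcal{A}_0}$ exceeds $g_1(\zeta)$ with probability $\ge 1-\delta$. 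Part 2 (average purity) follows the same template: every retained point has purity at least the $\zeta'$-value, whereas the average over the full $\mathbf{X}_i$ is dragged down by the $A^-_i$ points, so the averaged difference is bounded below by a strictly positive $g_2(\zeta)$; only a crude positive lower bound is needed there, making it the easier half.

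The hard part will be the first two steps — making the reduction from the topological, component-based rule to the pointwise posterior threshold fully rigorous. One has to show that the mutual $k$-NN construction, restricted to label $i$ and then filtered, yields precisely (up to the $1-\delta$ event) the super-level set $\{\widetilde{\eta}_i\ge\zeta'\}$: this couples the geometric behavior of $k$-NN balls (where the constant $c_2<\frac{e-1}{e}$ and the precise form of $c_1(\zeta)$ originate, through a binomial/Poisson tail bound) with the connectedness hypothesis (A3) needed to prevent the selected component from splitting or bleeding across the decision boundary. Pinning down the admissible window for $k$ and verifying that the bias and deviation terms jointly stay below $\zeta-\zeta'$ uniformly over all $n$ vertices is the technical crux; the purity conversion and the $\mathcal{A}_0$ comparison are then essentially algebraic.
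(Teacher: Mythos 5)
Your endgame matches the paper's: both reduce the theorem to the containment ``every retained point has $\widetilde{\eta}_i(\vx)\geq \zeta'$,'' then exploit monotonicity of $\widetilde{\eta}_i \mapsto \tau_{ii}\eta_i/\widetilde{\eta}_i$ to evaluate the bound at the threshold $\zeta'$ (your algebra with $4\zeta' = 2\zeta+1+|\tau_{10}-\tau_{01}|$ reproduces the paper's endpoint of $g_1$ exactly), and both dispose of $\mathcal{A}_0$ by noting $\ell_{S_n,\mathcal{A}_0}\xrightarrow{p}0$. The genuine gap is in your central lemma. You justify the uniform two-sided concentration of the neighbor-label frequency around $\widetilde{\eta}_i(\vx)$ by claiming that the upper limit $k\leq c_2 n$ ``buys'' $k/n\to 0$; it does not. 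The theorem's admissible range includes $k=\Theta(n)$, where the $k$-NN radius is bounded away from zero, the bias term does not vanish, and pointwise estimation of $\widetilde{\eta}_i$ fails for a general continuous $\eta_i$. (Also, $c_2<\frac{e-1}{e}$ has nothing to do with bias: in the paper it comes from the KL lower bound $\KL\left(\frac{k-1}{n-1}\,\middle\|\,p\right)\geq \frac{(e-1)p}{e}-\frac{k-1}{n-1}$ in the Isolation lemma.) The paper avoids two-sided estimation entirely: Isolation (Lemma 2) shows, for $k \leq c_2\min(p_\zeta,p_{\zeta'})(n-1)+1$ where $p_\zeta,p_{\zeta'}$ are ball masses at the separation radius $r_0$ between $L(\zeta)$ and $L(\zeta')^c$, that no edge of $G_i(\mathbf{X},k)$ crosses between $\mathbf{X}_i(\zeta)$ and $\mathbf{X}_i^c(\zeta')$; then $\zeta$-filtering (Lemma 3) needs only a one-sided stochastic domination of the neighbor count by $\mathrm{Binomial}(k,\zeta')$ and a Chernoff bound $\exp\{-k\,\KL(\zeta\|\zeta')\}$. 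One-sidedness is the point: it requires only that the relevant neighbors have $\widetilde{\eta}_i$ bounded by $\zeta'$, not that they approximate $\widetilde{\eta}_i(\vx)$. Repairing your lemma would force $c_2$ to be distribution-dependent (small relative to the mass of balls of radius $r_0$), i.e., you would be re-deriving the paper's Isolation lemma.

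A second gap: your argument yields only the upper containment $C^{(i)}\subseteq\{\widetilde{\eta}_i\geq\zeta'\}$, but the theorem also needs $C^{(i)}$ to be nonempty (for the minimum) and, for part 2, effectively the lower containment $L(\zeta)\cap\mathbf{X}_i\subseteq C^{(i)}$. Assumption A3 concerns connectedness of the population set $A_i^+$ and does not by itself give connectivity of the empirical graph $G_i(\mathbf{X},k)$; bridging that is the paper's Connectivity lemma (a covering argument driven by the Chaudhuri--Dasgupta ball-mass lemma), which is also where the lower limit $c_1(\zeta)\log^q n$ originates. Finally, part 2 is not ``essentially algebraic'' as you claim: the bound ``average over $C^{(i)}$ at least the $\zeta'$-value'' does not produce a positive gap, because the unconditional average over $\mathbf{X}_i$ can itself exceed the $\zeta'$-threshold value (e.g., when most mass lies deep in $A_i^+$ with purity near $1$). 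One needs the two-sided sandwich $L(\zeta)\cap\mathbf{X}_i\subseteq C^{(i)}\subseteq L(\zeta')\cap\mathbf{X}_i$, the law of large numbers, and the conditional-versus-unconditional expectation computation that the paper carries out in (21)--(34) to extract a strictly positive $g_2(\zeta)$.
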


To proceed, we first state a lemma from \cite{chaud:nips2010} that we use. This lemma shows that certain lower (upper) bounds on the true density in a ball imply lower (upper) bounds on the empirical density of a ball. We mention that we will also use certain proof techniques from \cite{maier2009} that are help to analyze clustering using KNN.

\setcounter{theorem}{-1}
\begin{lemma}[Lemma 7 in (Kamalika et al., 2010)] Assume $k\geq d\log{n}$ and fix some $\delta>0$. Then there exists a constant $c_0$ such that with probability $1-\delta$, every ball $B \subset R^d$ satisfies the following conditions:
$$
\begin{array}{lcl}
P(B) \geq \frac{2C_0\log{(2/\delta)}\log{n}}{n} & \Longrightarrow & P_n(B) > 0 \\[1em]
P(B) \geq \frac{k}{n} + \frac{2C_0\log{2/\delta}}{n}\sqrt{k d\log{n}} & \Longrightarrow & P_n(B) > \frac{k}{n} \\[1em]
P(B) \leq \frac{k}{n} - \frac{2C_0\log{2/\delta}}{n}\sqrt{k d\log{n}} & \Longrightarrow & P_n(B) < \frac{k}{n}
\end{array}
$$
Here $f_n(B) = \frac{|X_n\cap B|}{n}$ is the empirical mass of $B$, while $f(B) = \int_{\vx \in B} f(\vx) d\vx$ is its true mass. 
\label{lemma2}
\end{lemma}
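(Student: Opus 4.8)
The statement is a uniform \emph{relative} deviation bound for the empirical mass of Euclidean balls, so the natural route is classical Vapnik--Chervonenkis theory specialized to the class $\mathcal{B}$ of closed balls in $\R^{d}$. The plan is to combine three ingredients: a polynomial bound on the shatter coefficient of $\mathcal{B}$; a one-sided \emph{relative} (multiplicative) VC deviation inequality, which controls $P(B)-P_n(B)$ and $P_n(B)-P(B)$ in units of $\sqrt{P(B)}$ rather than additively; and a short monotonicity argument that converts the resulting two-sided relative bound into the three stated threshold implications. A relative bound is genuinely needed here: the thresholds scale like $\tfrac{1}{n}\sqrt{k\,d\log n}$, i.e.\ like $\sqrt{P(B)}$ when $P(B)\approx k/n$, which is far below the additive uniform error $\sqrt{(d\log n)/n}$.

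First I would record that $\mathcal{B}$ has VC dimension $d+1$, so by Sauer--Shelah its shatter coefficient obeys $s_{\mathcal{B}}(m)\le (m+1)^{d+1}$ and hence $\log s_{\mathcal{B}}(2n)=O(d\log n)$; this is the only place $d$ enters, and it is exactly the quantity against which the hypothesis $k\ge d\log n$ is calibrated. I would then invoke the relative VC inequalities (Vapnik--Chervonenkis; see e.g.\ Boucheron--Bousquet--Lugosi or Devroye--Gy\"orfi--Lugosi): for every $\epsilon>0$,
\[
\mathbb{P}\!\left[\sup_{B\in\mathcal{B}}\frac{P(B)-P_n(B)}{\sqrt{P(B)}}>\epsilon\right]\le 4\,s_{\mathcal{B}}(2n)\,e^{-n\epsilon^{2}/4},\qquad
\mathbb{P}\!\left[\sup_{B\in\mathcal{B}}\frac{P_n(B)-P(B)}{\sqrt{P_n(B)}}>\epsilon\right]\le 4\,s_{\mathcal{B}}(2n)\,e^{-n\epsilon^{2}/4}.
\]
Choosing $\epsilon=\epsilon_n$ so that each right-hand side is at most $\delta/2$ gives $\epsilon_n=2\sqrt{\log(8\,s_{\mathcal{B}}(2n)/\delta)/n}=O\!\big(\sqrt{(d\log n+\log(1/\delta))/n}\big)$, and a union bound makes both inequalities hold simultaneously on an event $\mathcal{E}$ with $\mathbb{P}(\mathcal{E})\ge 1-\delta$. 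The constant $C_0$ in the statement (depending only on $d$) is chosen large enough to absorb the numerical factors in $\epsilon_n$, using $\log(2/\delta)\ge\sqrt{\log(2/\delta)}$ in the regime $\delta\le 2/e$ to replace $\sqrt{\log(1/\delta)}$ by $\log(2/\delta)$.

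On $\mathcal{E}$ the first inequality rearranges to $P_n(B)\ge P(B)-\epsilon_n\sqrt{P(B)}$, and solving the second (a quadratic in $\sqrt{P_n(B)}$) gives $P_n(B)\le P(B)+2\epsilon_n\sqrt{P(B)}+\epsilon_n^{2}$. Each implication is now elementary. For the first, $P_n(B)\ge\sqrt{P(B)}\big(\sqrt{P(B)}-\epsilon_n\big)$, so $P(B)>\epsilon_n^{2}$ already forces $P_n(B)>0$, and the threshold $\tfrac{2C_0\log(2/\delta)\log n}{n}$ is chosen to exceed $\epsilon_n^{2}$. For the second, $p\mapsto p-\epsilon_n\sqrt{p}$ is increasing on $p\ge\epsilon_n^{2}/4$, so it suffices to verify $P_n(B)>k/n$ at $P(B)=\tfrac{k}{n}+\Delta$ with $\Delta=\tfrac{2C_0\log(2/\delta)}{n}\sqrt{k\,d\log n}$; since $\Delta=\Theta(\epsilon_n\sqrt{k/n})$ with the proportionality governed by $C_0$, taking $C_0$ large makes $\Delta$ dominate $\epsilon_n\sqrt{k/n+\Delta}$, while $k\ge d\log n$ ensures the residual $\epsilon_n^{2}$ is absorbed. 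The third implication is symmetric: from $P(B)\le\tfrac{k}{n}-\Delta$ and the upper bound one obtains $P_n(B)<k/n$ provided $2\epsilon_n\sqrt{k/n}+\epsilon_n^{2}<\Delta$, which again holds for $C_0$ large and $k\ge d\log n$.

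The main obstacle is not any single computation but securing the \emph{localized} (relative) deviation inequality with the correct $\sqrt{P(B)}$ normalization and then bookkeeping constants so that one $C_0$ simultaneously validates all three thresholds. The additive VC bound is too coarse by a factor of order $\sqrt{n/k}$, so the multiplicative form is essential, and the two tails must be handled with their respective normalizations ($\sqrt{P(B)}$ for the lower tail of $P_n$, $\sqrt{P_n(B)}$ for the upper tail, the latter cleared by solving a quadratic). The hypothesis $k\ge d\log n$ is precisely what lets the dimension-dependent term $d\log n$ in $\epsilon_n^{2}$ be dominated by $k/n$, keeping the additive $\epsilon_n^{2}$ corrections below the stated gaps.
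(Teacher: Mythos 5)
The paper does not actually prove this lemma itself --- it is imported verbatim (as Lemma~7) from Chaudhuri and Dasgupta (2010), with the reader explicitly referred to that source --- and your reconstruction via the one-sided relative (multiplicative) VC deviation inequalities for the class of Euclidean balls, with the hypothesis $k \geq d\log n$ used to absorb the additive $\epsilon_n^{2}$ correction into the stated gaps, is essentially the argument given in that source. Your bookkeeping is sound, modulo the honest caveat you already flag that $C_0$ must be allowed to depend on $d$, which is needed only because the restatement here drops the factor of $d$ present in the first threshold of the original lemma, so the proposal is correct and follows the same route as the cited proof.
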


For more detail about this Lemma, please refer to \cite{chaud:nips2010}. Using Lemma~\ref{lemma2}, we can show that by picking certain $k$ and $n$, all data point from region $L(\zeta)$ connected in the symmetric KNN graph.\\
    
Define for $i \in \{0,1\}$, $\mathbf{X}_i(t) = L(t)\cap \mathbf{X}_i$. 

\begin{lemma}[\textbf{Connectivity}]
\label{lemma:connectivity}
$\forall \delta>0$, $\forall t \in [0, 1)$, there exist constants $N(\delta, t) > 0$, and $ c_1(t) > 0$ such that $\forall n \geq N(\delta, t)$, $\forall i \in \{0,1\}$ , $\forall q>1$,  and $\forall k > c_1(t)\log^q{(n)}$,  $\mathbf{X}_i(t)$ is connected in $G_i(\mathbf{X}, k)$ with probability at least $1-\delta$.
\end{lemma}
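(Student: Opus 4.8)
The plan is to prove this as a density-based clustering connectivity statement in the style of \cite{chaud:nips2010} and \cite{maier2009}, applied to the point process of label-$i$ samples. First I would isolate the relevant region. Since $\eta_0+\eta_1=1$, for $t>\tfrac12$ the level set splits as $L(t)=\{\eta_i\ge t\}\cup\{\eta_{1-i}\ge t\}$ into disjoint pieces, so it suffices to argue connectivity on each connected component of $L(t)$ separately (the component $\{\eta_i\ge t\}$ is the one carrying the clean core of class $i$). I would show that $R:=\{\eta_i\ge t\}\cap\mathrm{supp}(f)$ is connected and compact with $f$ bounded away from $0$ and $\infty$ on it, using A2 (continuity of $\eta_i$, so $R$ is closed), A3 (connectedness of $A_i^+$, inside which $\{\eta_i\ge t\}$ falls for the $t$ of interest), and A1 (compact support). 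On $R$ the conditional density of a noisy-label-$i$ point is $\widetilde\eta_i(\vx)f(\vx)$ with $\widetilde\eta_i=(1-\tau_{01}-\tau_{10})\eta_i+\tau_{1-i,i}\ge(1-\tau_{01}-\tau_{10})\,t>0$ by A4, so this density also lies in a fixed range $[\rho,\bar\rho]$ on $R$.

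Next I would reduce to Lemma~\ref{lemma2}. Because the label noise is independent of $\vx$, conditioning on the features makes the noisy labels independent Bernoulli$(\widetilde\eta_i(\vx))$, so the label-$i$ points form an i.i.d.\ sample of size $n_i$ from the normalized measure $P_i\propto\widetilde\eta_i f$; a Chernoff bound gives $n_i\ge c\,n$ with probability $1-\delta/2$ for large $n$, since $P(\widetilde y=i)>0$. I can then invoke Lemma~\ref{lemma2} with sample size $n_i$ and distribution $P_i$, whose hypothesis $k\ge d\log n_i$ is met because $k>c_1(t)\log^q n$ with $q>1$ dominates $d\log n_i\asymp d\log n$ once $n\ge N(\delta,t)$.

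The core is then a two-radius covering argument. Set $r=r(k,n)\asymp(k/n_i)^{1/d}$ and use $\rho,\bar\rho$ with the three implications of Lemma~\ref{lemma2} to guarantee, uniformly over all balls and with probability $1-\delta/2$: (i) every ball of radius $r$ meeting $R$ contains a label-$i$ point (first implication), and (ii) every ball of radius $3r$ contains fewer than $k$ label-$i$ points (third implication). By (ii), any two label-$i$ points within distance $3r$ are adjacent in $G_i$, since each then lies among the other's $k$ nearest neighbors; by (i), $R$ can be traversed in steps of length at most $r$. Covering the connected compact set $R$ by an $r$-net and choosing a label-$i$ representative in each cell, consecutive representatives lie within $3r$ and so are joined by an edge, whence any two points of $\mathbf{X}_i(t)\cap R$ are linked by a chain of edges. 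Matching the scales in (i) and (ii) forces $k\gtrsim(\bar\rho/\rho)\log n$, which is exactly where the constant $c_1(t)$ (growing with the density oscillation on $R$, hence with $t$) and the threshold $N(\delta,t)$ enter.

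The hard part will be geometric rather than probabilistic: establishing that the region on which connectivity is claimed is genuinely connected and carries a density bounded above and below. The level set is defined through the posterior $\eta_i$, whereas sampling and the $k$-NN radii are governed by $f$; if $f$ vanished or oscillated wildly inside $\{\eta_i\ge t\}$, the label-$i$ points could split into several clusters and the chaining above would fail. Controlling this through A1--A3 (and checking that $\{\eta_i\ge t\}$ indeed lies inside the connected set $A_i^+$ for the relevant $t$) is the delicate step; once $[\rho,\bar\rho]$ and the connectedness of $R$ are secured, the remaining estimates are the routine $k$-NN arguments of \cite{chaud:nips2010,maier2009}, and applying the same reasoning to each connected component of $L(t)$ yields the stated claim for $\mathbf{X}_i(t)$.
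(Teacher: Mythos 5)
Your overall skeleton is the same as the paper's: cover the level set by balls of radius $r \asymp (\log^q n / n)^{1/d}$, use Lemma~\ref{lemma2} to guarantee that (i) every small ball contains a label-$i$ point and (ii) no relevant point has $k$-NN radius below the chaining scale, then link consecutive ball representatives (these are precisely the paper's events $E_1$ and $E_2$, with radii $r$ and $2r$). However, one step of your reduction fails as written. You condition on the labels and invoke Lemma~\ref{lemma2} entirely for the subsample of $n_i$ label-$i$ points drawn from $P_i \propto \widetilde{\eta}_i f$, and in step (ii) you conclude that ``every ball of radius $3r$ contains fewer than $k$ \emph{label-$i$} points,'' hence two label-$i$ points within $3r$ are adjacent in $G_i$. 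That implication is false under the paper's definition of the graph: $G_i(\mathbf{X},k)$ is the \emph{induced subgraph} of the mutual $k$-NN graph built on \emph{all} of $\mathbf{X}$, so whether one label-$i$ point lies among another's $k$ nearest neighbors is determined by all $n$ sample points, including those with label $1-i$. A ball of radius $3r$ can easily contain fewer than $k$ label-$i$ points yet more than $k$ points in total (wherever the flip rate $\tau_{i,1-i}$ is bounded away from $0$, or near the class boundary when $t$ is small, the two labels have comparable local counts), in which case the two label-$i$ points need not be adjacent and the chaining breaks. The paper handles exactly this asymmetry: its lower-bound mass $\mu_s(r)$ is built from the label-$i$ density $p_{ii}+p_{1-i,i}$, while its upper-bound mass $\mu_l(r)$ uses the \emph{full} density $f$, and both implications of Lemma~\ref{lemma2} are applied to the full sample of size $n$. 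The fix is to keep your step (i) on the label-$i$ subsample (or on the joint measure, as the paper does) but run the $k$-NN radius control of step (ii) on all $n$ points with the total density $f$; your Chernoff reduction $n_i \geq cn$ then becomes unnecessary.

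A secondary caution: for $t>\tfrac12$ you propose to ``argue connectivity on each connected component of $L(t)$ separately,'' but that does not yield the stated conclusion, since $\mathbf{X}_i(t) = L(t)\cap\mathbf{X}_i$ also contains flipped label-$i$ points lying in the other component $\{\eta_{1-i}\ge t\}$, and connectivity of $\mathbf{X}_i(t)$ as a whole would require edges across components (which the Isolation lemma in fact rules out). The lemma is stated for all $t\in[0,1)$, and the paper's proof effectively works on the region where the label-$i$ density is bounded below, namely $L(t)\cap A_i^+$ (that is where its minimum in $\mu_s$ is taken); your geometric discussion of $R=\{\eta_i\ge t\}$ is in that spirit, but you should not claim per-component connectivity suffices for the literal statement.
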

\begin{proof}
We first develop some notation. Let $V_d$ be the volume of the unit d-dimensional ball. Let $\mu_s(r) = V_dr^d\min\limits_{i\in\{0,1\}}\min\limits_{\vx\in  L(t) \cap A^+_i}\left[ p_{ii}(\vx)+ p_{1-i,i}(\vx)\right]$ and $\mu_l(r)=V_d(2r)^d\max\limits_{i \in \{0,1\}}\max\limits_{x\in L(t)\cap A^+_i} f(x)$.
   
Fix any $\delta >0$. We will prove the lemma by showing that there exist $ C_0>0$, $N(\delta, t) > 0$ and $r \in \left(0, \left(\frac{\log^{q}{n}}{n}\right)^{1/d}\right]$, $q>1$ such that $\forall n \geq N(\delta, t)$
  \[
  \begin{array}{ll}
       \mu_s(r) &\geq \frac{2C_0\log{4/\delta}\log{n}}{n}, \textnormal{ and}  \\[0.5em]
       \mu_l(r) &\leq \frac{k}{n} - \frac{2C_0\log{4/\delta}}{n}\sqrt{k d\log{n}},\text{ and} \\[0.5em]
       k &> \max\left(d\log{n}, \hspace{1mm} 4dC_0^2\log^2{(4/\delta)\log{n}+\frac{2 \mu_{l}(r)}{r^{d}}}\right).
  \end{array}
  \]
 As a consequence, we will conclude that with probability at least $1-\delta$, we have $X_i(t)$ is connected in $G_i(\mathbf{X}, k)$.

Since $f(\vx)$ has compact support, $L(t) = \left\{\vx \mid \max\limits_{i \in \{0, 1\}}\eta_i(\vx) \geq t\right\}$ is a closed subset of the domain. Then $L(t)$ is compact. For $\forall r \in \left(0, \left(\frac{\log^{q}{n}}{n}\right)^{1/d}\right]$, we have $L(t) \subset \bigcup\limits_{j=1}^m B_j(r)$. From now, we fix some $r \in \left(0, \left(\frac{\log^{q}{n}}{n}\right)^{1/d}\right]$.

For a data point $\vx$, its KNN radius is the distance to its kth nearest neighbor. Define $R^*$ to be the minimum KNN radius for any $\vx \in X_i(t)$, and further define two events $E_{1}$ and $E_{2}$ as: 
\[
\begin{array}{lcl}
  E_1 & = & \{ \exists \vx \in \mathbf{X}\cap B_j(r), \widetilde{y}=i, \forall j \in [m] \} \\[1em]
  E_2 & = & \{R^* > 2r\}
\end{array}
\]
 Then for the statement $E = \{X_i(t)  \text{ is connected}\}$ we have $E_1 \cap E_2 \subset E$ and thus $f(E) \geq f(E_1 \cap E_2) = 1 - f(E_1^c \cup E_2^c) \geq 1 - f(E_1^c) - f(E_2^c) $. This is true because for every $B(r)$ we will see at least one type $i$ point. But for every $B(2r)$ we will have fewer than $k$ points, which implies that all these points will be the nearest neighbor of each other in $B(2r)$. For a $2r$ diameter of $B(2r)$, we can juxtapose two $B(r)$ with the diameter pass both of their center. Within each of these two $B(r)$, we will see at least one type-$i$ point(See Fig~\ref{fig:lemma3}). Thus $E_1\cap E_2$ implies $\bigcup_{j=1}^m B_j(r)$ is connected , which then implies $E$.
 
 \begin{figure}[htb]
     \centering
     \includegraphics[scale=0.4]{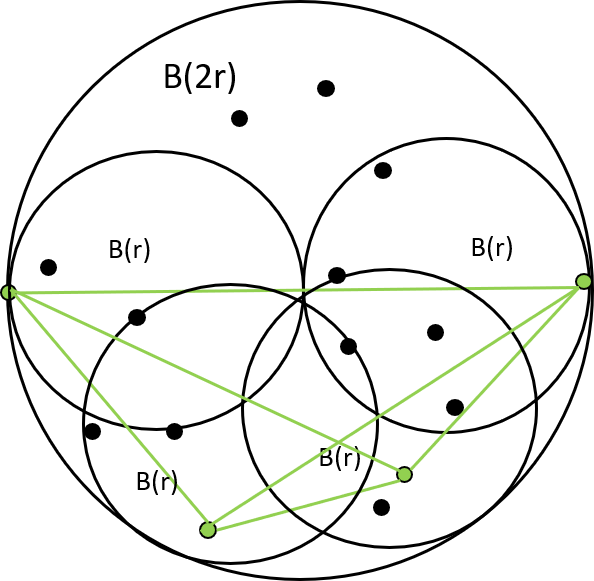}
     \caption{Demonstration for $E_1\cap E_2\rightarrow E$. Green points are type i points. Every points in the large ball are KNN to each other, since the maximum KNN radius is larger than $2r$.}
     \label{fig:lemma3}
 \end{figure}

Suppose there are d-dimensional balls $B_s$ and $B_l$ whose measure are $\mu_s$ and $\mu_l$ separately. We can pick proper $n$ and $k$ such that conditions in Lemma~\ref{lemma2} will be satisfied : 
    
$$
\begin{array}{lcl}
P(B_s) \geq \frac{2C_0\log{4/\delta}\log{n}}{n} & \Longrightarrow & P_n(B_s) > 0 \\[1em]
P(B_l) \leq \frac{k}{n} - \frac{2C_0\log{4/\delta}}{n}\sqrt{k d\log{n}} & \Longrightarrow & P_n(B_l) < \frac{k}{n}
\end{array}
$$

To see this, we could first pick large $n$, such that the first inequality is fulfilled. Then we increase $k$ (the RHS of inequality 2 is increasing with respect to k for fixed n and for $k > \frac{4C_0^2\log^2{(4/\delta)}d\log{n}}{2}$) such that the second inequality is fulfilled. The desired k should be:

\[
    k > \left[\frac{2C_0\log{(4/\delta)\sqrt{d\log{n}})}+\sqrt{4C_0^2\log^2{(4/\delta)d\log{n}}+2n\mu_l}}{2}\right]^2
\]

We observe that $k > 4 d C_0^2\log{(4/\delta)}^2\log{n} + 2n\mu_l(r)$ satisfies the above inequality. 

We note that $2n\mu_{l}(r) = 2n V_{d} r^d \max\limits_{i \in \{0,1\}} \max\limits_{\vx\in L(t)\cap A^+_i}f(\vx)$, and because $r <  \left(\frac{\log^{q}{n}}{n}\right)^{1/d}$, this is smaller than $2V_d \max\limits_{i \in \{0,1\}} \max\limits_{\vx\in L(t)\cap A^+_i}f(\vx)\log^q{n}$. As a result, if we set $k > 4dC_0^2\log^2{(4/\delta)}\log{n} + 2V_d \max\limits_{i \in \{0,1\}}\max\limits_{\vx\in L(t)\cap A^+_i}f(\vx)\log^q{n} $,  $\forall q>1$, then this value of $k$ satisfies the inequality for all $r$. 

As a result, if we take $k > \max\left(d\log{n}, 4 d C_0^2\log^2{(4/\delta)}\log{n}+2^{d+1}V_d\max\limits_{\vx\in L(t)\cap A^+_i}{f(\vx)}\log^q{n}\right)$ $\forall q>1$, then replacing $\delta$ by $\delta/2$ in Lemma~\ref{lemma2}, $P[E_{1}]$ and $P[E_{2}]$ are both at least $1- \delta/2$.Thus $f(E) > 1 - f(E_1^c) - f(E_2^c) > 1 - P\left[P_n(B_s) \leq 0\right] - P\left[P_n(B_l) \geq \frac{k}{n}\right] \geq 1- \delta$, completing the proof.
\end{proof}

We also restate notations that needed by Lemma~\ref{lemma:isolation} here. Remember that $\zeta' = \frac{1}{2}\left(\zeta + \frac{1+|\tau_{10}-\tau_{01}|}{2}\right)$. Define $\mathbf{X}_i^c(\zeta') := \overline{L(\zeta')^c}\cap \mathbf{X}_i$. Define $r_0^{(i)} =\min \norm{ \vx_1^{(i)} - \vx_2^{(i)} }$ for $\vx_1^{(i)} \in X_i(\zeta)$ and $ \vx_2^{(i)} \in X^c_i(\zeta')$. Let $V_d$ to be the volume of $d$-dimensional unit ball. Let $p_{\zeta}^{(i)} := \min\limits_{\vx \in L(\zeta)\cap A_i^+} f(\vx)V_d(r_0^{(i)})^d$ and $p_{\zeta'}^{(i)} := \min\limits_{\vx \in L(\zeta')^c \cap A_i^{+c}} f(\vx)V_d(r_0^{(i)})^d$. Since $f(\vx)$ has compact support, $A_i^{+c}$ is closed and $L(\zeta')^c \subset A_i^{+c}$, then $p_{\zeta}^{(i)}>0$ and $p_{\zeta'}^{(i)}>0$.\\

Let $K\infdiv*{p}{q}$ be the KL divergence between distribution $p$ and $q$.

\begin{lemma}[\textbf{Isolation}]
\label{lemma:isolation}
$\forall \delta>0$, $\forall \zeta > \frac{1+|\tau_{10}-\tau_{01}|}{2}$, there exists constant $c_2 \in \left(0, \frac{e-1}{e}\right)$,  $N(\delta, \zeta)>0$ such that $\forall n \geq N(\delta, \zeta)$, $\forall k < c_2(\zeta)\left[\min\limits_{i \in \{0,1\}}\min\left(p_{\zeta}^{(i)}, p_{\zeta'}^{(i)}\right)(n-1)\right]+1$ and $\forall i\in \{0,1\}$:
\[
P\left(\nexists edge=(u,v) \in G_{i}(\mathbf{X},k): u \in \mathbf{X}_{i}(\zeta), v \in \mathbf{X}^c_{i}(\zeta')\right) \geq 1-\delta.
\]
\end{lemma}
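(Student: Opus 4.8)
The plan is to argue by contraposition at the level of a single candidate edge. If there were an edge in $G_i(\mathbf{X},k)$ joining some $u \in \mathbf{X}_i(\zeta)$ to some $v \in \mathbf{X}_i^c(\zeta')$, then, because this graph connects two vertices whenever \emph{either} lies in the other's $k$-nearest-neighbor list, we must have $v \in KNN(u)$ or $u \in KNN(v)$. The key geometric input is that the two regions are separated: by continuity of $\eta_i$ (A2), compact support of $f$ (A1), and $\zeta' < \zeta$, the sets $L(\zeta)\cap A_i^+$ and $L(\zeta')^c\cap A_i^{+c}$ have a positive minimum separation, so $\|u-v\| \ge r_0^{(i)} > 0$. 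Hence $v \in KNN(u)$ forces the open ball $B(u, r_0^{(i)})$ to contain at most $k-1$ sample points other than $u$ (otherwise $v$, sitting at distance $\ge r_0^{(i)}$, could not be among the $k$ closest), and symmetrically for $u \in KNN(v)$. Thus the existence of a forbidden edge is contained in the event that \emph{some} point of the high-confidence region, or \emph{some} point of the far region, has fewer than $k$ neighbors inside its radius-$r_0^{(i)}$ ball.

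Next I would quantify the mass of these balls. For a center $\vx \in L(\zeta)\cap A_i^+$ the ball $B(\vx, r_0^{(i)})$ has true mass at least $p_\zeta^{(i)} = \min_{\vx\in L(\zeta)\cap A_i^+} f(\vx)\,V_d (r_0^{(i)})^d > 0$, and for a center in $L(\zeta')^c \cap A_i^{+c}$ the mass is at least $p_{\zeta'}^{(i)} > 0$; these are exactly the quantities whose positivity was recorded in the preamble. Conditioning on one sample point sitting at such a center, the number of the remaining $n-1$ points falling in the ball is a $\mathrm{Bin}(n-1,p)$ variable with $p \ge p_{\min} := \min_i\min(p_\zeta^{(i)}, p_{\zeta'}^{(i)})$, so by monotonicity of the binomial lower tail in $p$ its tail is dominated by that of $\mathrm{Bin}(n-1,p_{\min})$, whose mean is $\mu = (n-1)p_{\min}$. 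The hypothesis $k < c_2 p_{\min}(n-1) + 1$ is precisely $k-1 < c_2 \mu$, i.e. the threshold sits a constant factor $c_2 < \tfrac{e-1}{e}$ below the mean (the ``$+1$'' accounting for excluding the center from the count). I would then invoke the Chernoff--Hoeffding lower-tail bound in its KL form, $P(\mathrm{Bin}(n-1,p_{\min}) \le k-1) \le \exp\big(-(n-1)\,\infdiv*{\tfrac{k-1}{n-1}}{p_{\min}}\big)$; since $\tfrac{k-1}{n-1} < c_2 p_{\min}$ with $c_2 < \tfrac{e-1}{e}$, this is at most $\exp(-c\,(n-1)\,p_{\min})$ for a positive constant $c$. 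This is the step where the constant $\tfrac{e-1}{e}$ enters, guaranteeing a strictly positive exponent in the tail estimate.

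Finally I would remove the conditioning by a union bound over the at most $n$ sample points that could play the role of $u$ or $v$: the probability that any high-region point or any far-region point has an underpopulated radius-$r_0^{(i)}$ ball is at most $2n\exp(-c(n-1)p_{\min})$. Since $p_{\min}$ is a fixed positive constant, this tends to $0$, so choosing $N(\delta,\zeta)$ large enough to push it below $\delta$ yields the claim with probability at least $1-\delta$. I expect the main obstacle to be the book-keeping in the reduction step rather than the concentration: one must verify that the points playing the roles of centers genuinely lie in $L(\zeta)\cap A_i^+$ and $L(\zeta')^c\cap A_i^{+c}$ so that the mass bounds $p_\zeta^{(i)},p_{\zeta'}^{(i)}$ apply, handle the open/closed-ball and strict-inequality conventions implicit in ``$v$ is among the $k$ nearest neighbors of $u$,'' and confirm $r_0^{(i)}>0$ uniformly via A1--A2; the tail estimate itself is then routine once $k$ is pinned below $c_2\mu$.
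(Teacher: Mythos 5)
Your proposal is correct and takes essentially the same route as the paper's proof: the paper likewise reduces the no-forbidden-edge event to the event that every point of $\mathbf{X}_i(\zeta)$ and $\mathbf{X}_i^c(\zeta')$ has $k$-NN radius at most $r_0$, lower-bounds the ball masses by $p_\zeta^{(i)}$ and $p_{\zeta'}^{(i)}$, applies the KL-form Chernoff lower tail to $\mathrm{Bin}(n-1,p)$ with the estimate $\KL\left(\tfrac{k-1}{n-1}\,\middle\|\,p\right) \geq \tfrac{(e-1)p}{e} - \tfrac{k-1}{n-1}$ (exactly where $\tfrac{e-1}{e}$ enters), and finishes with a union bound and $n \to \infty$. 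The only cosmetic difference is that you phrase the reduction contrapositively (a forbidden edge forces an underpopulated ball) rather than via the maximum KNN radii $R^*_\zeta, R^*_{\zeta'}$, which is the same argument.
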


\begin{proof}

Let $E$ be the event $\{\nexists edge=(u,v) \in G_{i}(\mathbf{X},k): u \in \mathbf{X}_{i}(\zeta), v \in \mathbf{X}_{i}^{c}(\zeta')\}$. Let $R(\vx)$ be the nearest neighbor radius of point $\vx$, which is the distance from $\vx$ to its kth nearest neighbor. Then let $R^*_{\zeta} = \max\limits_{i\in\{0,1\}}\max\limits_{\vx \in \mathbf{X}_i(\zeta)} R(\vx)$ and $R^*_{\zeta'} = \max\limits_{i\in \{0,1\}}\max\limits_{\vx \in \mathbf{X}_i^c(\zeta')} R(\vx)$ separately. Let $r_0 = \min\limits_{
i \in \{0,1\}} r_0^{(i)}$. Let $p_\zeta = \min\limits_{i \in \{0,1\}} p_\zeta^{(i)}$ and $p_{\zeta'} = \min\limits_{i \in \{0,1\}} p_{\zeta'}^{(i)}$. Let $M_{\zeta}\sim Bin(n-1, p_{\zeta})$ and $M_{\zeta'}\sim Bin(n-1, p_{\zeta'})$. Then:

\begin{align}
    &P(E) \geq P\left(\{R_{\zeta}^*\leq r_0\}\cap\{\{R_{\zeta'}^*\leq r_0\}\right)
    \geq 1 -  P(R_{\zeta}^* > r_0) - P(R_{\zeta'}^* > r_0)\\
    &\geq 1 - P\left(\bigcup\limits_{\vx \in \mathbf{X}_i(\zeta)} \left\{R(\vx) > r_0\right\}\right) - P\left(\bigcup\limits_{\vx \in \mathbf{X}_i^c(\zeta')} \left\{R(\vx) > r_0\right\}\right)\\
    &\geq 1 - n_{\zeta} \mu[L(\zeta)]P(M_{\zeta} \leq k-1) - n_{\zeta'} \mu[L(\zeta')^c]P(M_{\zeta'} \leq k-1)\\
    &\geq 1-n_{\zeta} \mu[L(\zeta)]\exp\left\{-(n-1)K\infdiv*{\frac{k-1}{n-1}}{p_\zeta}\right\}-n_{\zeta'} \mu[L(\zeta')^c]\exp\left\{-(n-1)K\infdiv*{\frac{k-1}{n-1}}{p_{\zeta'}}\right\}\\
    &\geq 1 - n_{\zeta} \mu[L(\zeta)]\exp\left\{-(n-1)\left[\frac{(e-1)p_{\zeta}}{e}-\frac{k-1}{n-1}\right]\right\} - n_{\zeta'} \mu[L(\zeta')^c]\exp\left\{-(n-1)\left[\frac{(e-1)p_{\zeta'}}{e}-\frac{k-1}{n-1}\right]\right\}\\
    &\geq 1-2\max(n_{\zeta}\mu[L(\zeta)], n_{\zeta'}\mu[L(\zeta')^c])\exp\left\{-(n-1)\left[\frac{(e-1)\min(p_{\zeta}, p_{\zeta'})}{e}-\frac{k-1}{n-1}\right]\right\}
\end{align} 

For inequality (3), we use Chernoff lower tail inequality again, which require $\frac{k-1}{n-1} < \min(p_{\zeta}, p_{\zeta'})$. Inequality (4) holds because $K\infdiv{\frac{k-1}{n-1}}{p_{\zeta}}=\frac{k-1}{n-1}\ln\left(\frac{k-1}{p_{\zeta}(n-1)}\right)+\frac{n-k}{n-1}\ln\left(\frac{n-k}{(1-p_{\zeta})(n-1)}\right) \geq \frac{k-1}{n-1}\ln\left(\frac{k-1}{p_{\zeta}(n-1)}\right)+p_{\zeta}-\frac{k-1}{n-1} \geq -\frac{p_{\zeta}}{e} + p_{\zeta} - \frac{k-1}{n-1} \geq  \frac{(e-1)p_{\zeta}}{e} - \frac{k-1}{n-1}$. This is also true for $K\infdiv{\frac{k-1}{n-1}}{p_{\zeta'}}$. The first inequality comes from the fact that $\ln(\vx) \geq (\vx-1)/\vx$. And the second inequality comes from the fact that $-\frac{p_{\zeta}}{e}$ is the minimizer of $\frac{k-1}{n-1}\ln\left(\frac{k-1}{p_\zeta(n-1)}\right)$ with respect to $\frac{k-1}{n-1}$.

Now let $c_{2} < (e-1)/e$, and $k \leq c_2\left[\min\limits_{i \in \{0,1\}}\min\left(p_{\zeta}^{(i)}, p_{\zeta'}^{(i)}\right)(n-1)\right]+1$ as in the statement of the lemma. Then we have that $\frac{(e-1)\min(p_{\zeta}, p_{\zeta'})}{e} - \frac{k-1}{n-1} \geq \left( \frac{e-1}{e}-c_2\right) \min(p_{\zeta}, p_{\zeta'})$ is independent of $n$. Then as $n \rightarrow \infty$, $\exp\left\{-(n-1)\left[\frac{(e-1)\min(p_{\zeta}, p_{\zeta'})}{e}-\frac{k-1}{n-1}\right]\right\} \rightarrow 0$, and thus $P[E] \rightarrow 1$.  

\end{proof}
    
Lemma~\ref{lemma:isolation} tells us, if $k$ is properly set, $X_i(\zeta)$ and $X_i^c(\zeta')$ are separated. So the remaining cases for points in region $L(\zeta')^c$ are case where all its neighbors are located in $L(\zeta')^c$ and case where part of its neighbors are in $L(\zeta') \char`\\ L(\zeta)$. Next lemma will show, if we filter out points that don't have enough desired number of neighbors, we will guarantee there are no points in $L(\zeta')^c$ remaining in $\bigcup\limits_{i\in\{0,1\}}C^{(i)}$.

\begin{lemma}[\textbf{$\zeta$-filtering}]
\label{lemma:filtering}
$\forall \delta >0$ and $\zeta \in \left(\frac{1+|\tau_{10}-\tau_{01}|}{2}, 1\right)$, there exists $N(\delta, \zeta)>0$ and $c_3(\zeta) > 0$,  such that $\forall n \geq N$, $k > c_3(\zeta)\log{(2n/\delta)}$ and $\forall i \in \{0,1\}$ then: 
\[
P\left(C^{(i)}(\zeta) \cap L(\zeta')^c =\emptyset \right) \geq 1-\delta.
\] 
\end{lemma}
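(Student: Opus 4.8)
The plan is to show that every label-$i$ sample lying in $\overline{L(\zeta')^c}$, i.e.\ every $\vx \in \mathbf{X}_i^c(\zeta')$, is removed by the $\zeta$-filtering step, so that the surviving set $C^{(i)}(\zeta)$ cannot meet $L(\zeta')^c$. Recall that the filter retains a vertex of $G_i(\mathbf{X},k)$ only when its number of same-label neighbors (its degree in $G_i$) exceeds the threshold prescribed by $\zeta$; hence it suffices to prove that, with probability at least $1-\delta$, this degree falls below the threshold simultaneously for all such $\vx$. I would set the threshold so that it lies strictly between the local same-label fraction expected at clean posterior level $\zeta'$ and that at level $\zeta$; this is exactly the reason $\zeta' = \tfrac12(\zeta + \tfrac{1+|\tau_{10}-\tau_{01}|}{2})$ is taken as the midpoint, since it buys a fixed margin on the ``filter-out'' side.

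First I would fix a point $\vx \in \mathbf{X}_i^c(\zeta')$, for which $\eta_i(\vx) \le \zeta'$ and hence $\widetilde{\eta}_i(\vx) = (1-\tau_{01}-\tau_{10})\eta_i(\vx) + \tau_{1-i,i} \le (1-\tau_{01}-\tau_{10})\zeta' + \tau_{1-i,i}$, which is strictly below the chosen threshold. By the Isolation lemma (Lemma~\ref{lemma:isolation}) none of $\vx$'s same-label mutual-KNN neighbors lie in $\mathbf{X}_i(\zeta)$, so every same-label neighbor sits in $L(\zeta')^c$ or in the buffer $L(\zeta')\setminus L(\zeta)$. Using the strictly positive density bound $p_{\zeta'}^{(i)}>0$ together with the separation radius $r_0$, the $k$-nearest-neighbor radius $R(\vx)$ is small with high probability, so by continuity of $\eta_i$ (Assumption A2) the same-label mass inside $B(\vx,R(\vx))$ is governed by $\widetilde{\eta}_i$ evaluated near $\vx$, which stays below the threshold. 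Consequently the number of same-label neighbors of $\vx$ is stochastically dominated by a binomial count whose mean is a fixed fraction below the filter threshold.

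Next I would apply the Chernoff lower-tail bound exactly as in Lemma~\ref{lemma:isolation} to conclude that, for a single $\vx$, the probability that its same-label degree reaches the threshold is at most $\exp(-c\,k)$ for some $c=c(\zeta)>0$. A union bound over the at most $n$ points of $\mathbf{X}_i^c(\zeta')$ then forces $n\exp(-c\,k)\le \delta$, which is precisely where the requirement $k > c_3(\zeta)\log(2n/\delta)$ comes from; choosing $N(\delta,\zeta)$ large enough to validate the small-radius and continuity approximations completes the argument, giving $P(C^{(i)}(\zeta)\cap L(\zeta')^c = \emptyset)\ge 1-\delta$.

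The step I expect to be the main obstacle is the uniform control of the local same-label fraction over \emph{all} points of $L(\zeta')^c$ at once, because a point sitting in a low-density pocket can have a large $k$-NN radius whose ball protrudes into the buffer, where the same-label fraction is larger. Handling this cleanly requires combining three ingredients: the Isolation lemma to exclude genuine $L(\zeta)$ neighbors, the positive density floor $p_{\zeta'}^{(i)}$ and separation radius $r_0$ to keep the effective ball small, and continuity together with the $\zeta'$-versus-$\zeta$ gap to guarantee that the fraction stays a fixed distance below the threshold so that the margin survives the concentration error. A secondary subtlety is that the filter threshold must be calibrated to both discard $L(\zeta')^c$ here and retain $L(\zeta)$ in the abundancy analysis, so the constant $c_3(\zeta)$ must be chosen compatibly with both directions.
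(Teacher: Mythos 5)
Your proposal follows essentially the same route as the paper's proof: condition on the Isolation lemma (spending $\delta/2$ on that event), stochastically dominate the same-label neighbor count $N^{(i)}(\vx)$ of any point in $L(\zeta')^c$ by a $\mathrm{Binomial}(k,p^*)$ whose success probability $p^*$ sits below the filter threshold $\zeta$, apply the Chernoff tail bound to get a per-point failure probability $\exp\{-c(\zeta)k\}$, and union-bound over the at most $n$ such points --- which is exactly where the paper's requirement $k > c_3(\zeta)\log(2n/\delta)$ comes from, with $c_3(\zeta)$ the reciprocal of the Kullback--Leibler divergence between $\mathrm{Bernoulli}(\zeta)$ and $\mathrm{Bernoulli}(\zeta')$. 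The only divergence is how $p^*$ is obtained: the paper simply takes $p^* = \zeta'$ for neighbors assumed to lie in $L(\zeta')^c$, whereas you add a KNN-radius/continuity argument to handle neighbors falling in the buffer $L(\zeta')\setminus L(\zeta)$ --- a case the paper's proof glosses over --- but this is a sub-step refinement rather than a different argument.
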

    
\begin{proof}  
    
Let $\{{\vx}^{(z)}\}_{z=1}^{k}$ be the set of k nearest neighbors of $\vx$, and consider an $\vx$ such that for all $1 \leq z \leq k$, $\vx^{(z)} \in G_i(\mathbf{X}, k)\cap L(\zeta')^c$. Let $N^{(i)}(\vx)$ be the number of type 1 ($\widetilde{y}=1$) nearest neighbors of such an $\vx$. We know $N^{(i)}(\vx)= \sum_{z=1}^{k} \text{Bernoulli}(\widetilde{\eta}(\vx^{(z)}))$. Since $\widetilde{\eta}(\vx^{(z)})  \leq p^* := 
\zeta'$ for all $1 \leq z \leq k$, we observe that $N^{(i)}(\vx)$ is stochastically dominated by $M:=\text{Binomial}(k,p^{*})$.

By Lemma~\ref{lemma:isolation}, $\forall \delta>0$, $\forall \vx \in  L(\zeta')^c$, for all $1\leq z \leq k$, $\vx^{(z)} \notin L(\zeta)$ with probability at least $1-\delta/2$. For convenience, we denote the event that Lemma~\ref{lemma:isolation} holds as $E_I$. Therefore, with probability at least $1- \delta/2$, $N^{(i)}(\vx)$ is well-defined $\forall \vx \in  L(\zeta')^c$. 

\begin{align}
    P\left(C^{(i)}(\zeta) \cap L(\zeta')^c = \emptyset \mid E_I\right) &= 1 - P\left(C^{(i)} \cap L(\zeta')^c \neq \emptyset  \mid E_I\right)\\
    &= 1-P\left(\bigcup\limits_{\vx \in C^{(i)}\cap L(\zeta')^c}\left\{ N^{(i)}(\vx) > \ceil{\zeta k} \right\} \right),
\end{align}

where $\zeta$ is the threshold used in the algorithm to filter outliers in the largest connected component. Let $n_{\zeta'}^{(i)}:= \#\mathbf{X}_{i}^c(\zeta')$. Continuing, we have
\begin{align}
1-P\left(\bigcup\limits_{\vx \in C^{(i)}(\zeta')\cap L(\zeta')^c}\left\{ N^{(i)}(\vx) > \ceil{\zeta k} \right\} \right) &\geq 1 - n_{\zeta'}^{(i)} P\left(N^{(i)}(\vx) > \ceil{\zeta k} \right)\\
    &\geq 1 - n_{\zeta'}^{(i)} P\left( M > \ceil{\zeta k} \right)\\
    &\geq 1 - n_{\zeta'}^{(i)} \exp\left\{-k K\infdiv*{\zeta}{\zeta'}\right\}
\end{align}

inequality 11 uses the Chernoff tail bound. Since $\zeta' = \frac{1}{2}\left(\zeta + \frac{1+|\tau_{10}-\tau_{01}|}{2}\right) < \zeta$ for all $\zeta > \frac{1+|\tau_{10}-\tau_{01}|}{2}$. Define  $c_4 = K\infdiv*{\zeta}{\zeta'}$. After choosing a large enough $n_{\zeta'}^{(i)}$, given any $\delta >0$, we set $k > \frac{1}{c_4}\log{\left(2n_{\zeta'}^{(i)}/
\delta\right)}$, and we have that $1-n_{\zeta'}^{(i)}\exp\left\{-kK\infdiv*{\zeta}{\zeta'}\right\} > 1 - \delta/2$. Denote the event $P\left(C^{(i)}\cap L(\zeta')^c = \emptyset\right)$ as $E_F$; we have that, 
\[
    P\left(C^{(i)}\cap L(\zeta')^c = \emptyset \right)
    = P(E_F \cap E_I) = 1 - P(E_I^c) - P(E_F^2 \mid E_I)
    = 1 - \delta/2 - \delta/2 = 1-\delta
\]
\end{proof}

    Now we are ready to prove Theorem~\ref{theorem:purity}. Lemma~\ref{lemma:connectivity} tells us that as long as we take moderate $k$, all points in $L(\zeta)$ will be connected in the induced sub-graph. And Lemma~\ref{lemma:isolation} and Lemma~\ref{lemma:filtering} say that after removing points whose degree doesn't coincide its label, we will have no points from $L(\zeta')^c$ remained in $\bigcup_i C^{(i)}$. Then we use the value of the $\widetilde{\eta}_i(\vx)$ at region $L(\zeta)$ to prove our main theorem.\\
    
    \subsection{Proof of the Purity Theorem}
    \label{subsec:purity}
    
    \begin{proof}
    By combining Lemmas ~\ref{lemma:connectivity}, \ref{lemma:isolation} and \ref{lemma:filtering}, we have that $\forall \delta >0$ and $\forall \zeta \in \left(\frac{1+|\tau_{10}-\tau_{01}|}{2}, 1\right)$,  $\exists N(\delta, \zeta)>0$, such that $\forall n \geq N(\delta)$, each of the following event holds with probability at least $1-\delta/4$:
    \[
    \begin{array}{cccl}
    \text{Connectivity } & E_C &=&\{ \mathbf{X} \cap L(\zeta) \text{ is connected}\}\\[0.5em]
    \text{Isolation } &E_{I} &=& \{\nexists edge=(u,v) \in G_{i}(\mathbf{X},k): u \in \mathbf{X}_{i}(\zeta), v \in \mathbf{X}_{i}^{c}(\zeta'), \forall i \in \{0,1\}\}\\[0.5em] 
    \text{Filtering } &E_{F} &=& \{ \bigcup\limits_{i\in \{0,1\}}C^{(i)}\cap L(\zeta')^c = \emptyset\}\\[0.5em]
    \end{array}
    \]
    
    First we prove the theorem for the minimum purity $\ell_{S_n , \mathcal{A}}$, assuming all of the above events. For the minimum purity, we will assume that $\forall i \in \{0,1\}$, $\min_{\vx \in \mathcal{X}} \eta_{i}(\vx) = 0$\footnote{In the case when this minimum is not 0 but some value $a \in (0,1)$, the expressions become more unwieldy, and we derive it in general minimum purity guarantee theorem in this subsection later. Our assertion for average purity holds regardless of this condition.}. In the following, $\xrightarrow{p}$ will denote convergence in probability, and $\xrightarrow{f}$ will denote convergence in distribution. 
    \begin{align}
    \ell_{S_n, \mathcal{A_\zeta}} &= \min\limits_{i\in\{0,1\}}\min\limits_{\vx \in C^{(i)}\cap L(\zeta')} \tau_{ii}\frac{\eta_i(\vx)}{\widetilde{\eta}_i(\vx)}
    \xrightarrow{p} \min\limits_{i\in\{0,1\}}\min\limits_{\vx \in L(\zeta')}\tau_{ii}\frac{\eta_i(\vx)}{\widetilde{\eta}_i(\vx)}\\
    &= \min\limits_{i\in\{0,1\}}\min\limits_{\vx \in  L(\zeta')} \tau_{ii}\frac{\widetilde{\eta}_i(\vx) - \tau_{1-i,i}}{(1-\tau_{10}-\tau_{01})\widetilde{\eta}_i(\vx)}
    = \frac{[\zeta'-\max(\tau_{10}, \tau_{01})][\min(\tau_{11}, \tau_{00})]}{\zeta'(1-\tau_{10}-\tau_{01})}\\
    \ell_{S_n, \mathcal{A}_0} &= \min\limits_{i\in\{0,1\}}\min\limits_{\vx \in C^{(i)}\cap \mathcal{X}} \tau_{ii}\frac{\eta_i(\vx)}{\widetilde{\eta}_i(\vx)} 
    \xrightarrow{p} \min\limits_{i\in\{0,1\}}\min\limits_{\vx \in \mathcal{X}}\tau_{ii}\frac{\eta_i(\vx)}{\widetilde{\eta}_i(\vx)} \\
    & = \min\limits_{i \in \{0,1\}} \min\limits_{\vx \in A^-} \tau_{ii}\frac{\eta_i(\vx)}{\widetilde{\eta}_i(\vx)} = 0
    \end{align}

To show the convergence in probability in (12), denote $g_i(\vx) = \tau_{ii}\frac{\eta_i(\vx)}{\widetilde{\eta}_i(\vx)}$. Let $\mathcal{F}_i$ be the cumulative distribution function of the scalar random variable $g_i(\vx)$. Also, let $g_{(1, n, i)} = \min\limits_{\vx \in C^{(i)}\cap L(\zeta')} g_{i}(\vx)$. Then using the property of minimum order statistics, for all $g$ in the range of $g(\vx)$ where $\vx \in L(\zeta^{'})$, the cdf of $g_{(1, n, i)}$:
\begin{align}
    \mathcal{F}_{(1, n, i)}(g) &:= P\left[g_{(1,n,i)} < g\right] = 1 - P[g_{(1,n,i)} \geq g] = 1 - [P(g_i(\vx) \geq g)^n]\\
    &=1 - [1-\mathcal{F}_i(g)]^n
\end{align}
Let $g^*_i = \min\limits_{\vx\in L(\zeta')}g_i(\vx)$, so that we have $\mathcal{F}_{i}(g^*)=0$. We have that $\lim\limits_{n\rightarrow \infty} \mathcal{F}_{(1, n, i)}(g) = \mathbbm{1}_{\{g \geq g^*_i\}}(g)$, where $\mathbbm{1}_A(\vx)$ is the indicator function such that $\mathbbm{1}_A(\vx) = 1$ if $\vx \in A$ and 0 otherwise. Thus by definition $g_{(1,n,i)} \xrightarrow{ f } g^*_i$. We will now use the fact that if $X_{n} \xrightarrow{f} c$ where $c$ is some constant, then $X_{n} \xrightarrow{p} c$, i.e, convergence in distribution to a constant implies convergence in probability. Then we have $ g_{(1,n,i)} \xrightarrow{p} g^*_i$; in other words, $ \min\limits_{\vx \in C^{(i)}\cap L(\zeta')} \tau_{ii}\frac{\eta_i(\vx)}{\widetilde{\eta}_i(\vx)}  \xrightarrow{p} \min\limits_{\vx \in L(\zeta')} \tau_{ii}\frac{\eta_i(\vx)}{\widetilde{\eta}_i(\vx)}$. 

Similarly we can show the convergence in probability in (14). Finally we plug in $\min\limits_{\vx \in A^{-}}{\eta}_i(\vx) = \min\limits_{\vx \in \mathcal{X}}{\eta}_i(\vx)  = 0$. 

Now we analyze the probability that the minimum purity guarantee assertion holds. Let $E_P = \left\{\ell_{S_n, \mathcal{A}_\zeta} - \ell_{S_n, \mathcal{A}_0} > \frac{[\zeta'-\max(\tau_{10}, \tau_{01})]\min(\tau_{11}, \tau_{00})}{\zeta'(1-\tau_{10}-\tau_{01})} \mid E_C, E_I, E_F\right\}$. By the above convergence in probability, $\forall \delta>0$ and $\forall \zeta > \frac{1+|\tau_{10}-\tau_{01}|}{2}$, $\exists N>0$ such that $\forall n\geq N$, $P(E_{P}) \geq 1-\delta/4$. Then:
\begin{align*}
    &P(E_P\cap E_I \cap E_F \cap E_C) 
    = P(\{E_P\}\cap\{E_F|E_I, E_C\}\cap \{E_I\mid E_C\}\cap \{E_C\})\\
    &\geq 1 - P(\{E_P^c)\}) - P(\{E_F^c\mid E_I\}) - P(\{E_I^c \mid E_C\}) - P(\{E_C^c\})\\
    &\geq 1 - 4*(\delta/4) = 1 - \delta,
\end{align*}
which means that our minimum purity guarantee holds with probability at least $1-\delta$.

Now we consider average purity (second assertion in Theorem~\ref{theorem:purity}). Let $h_i(\vx) = \frac{\eta_i(\vx)}{\widetilde{\eta}_i(\vx)} = \frac{\widetilde{\eta}_i(\vx) - \tau_{1-i,i}}{(1-\tau_{10}-\tau_{01})\widetilde{\eta}_i(\vx)}$. Observe that $h_{i}(x)$ is an increasing function with respect to $\widetilde{\eta}_i(\vx)$. For the average purity $\ell'_{Sn, \mathcal{A}}$ we have:
\begin{align}
    \ell'_{S_n, \mathcal{A}_\zeta} - \ell'_{S_n, \mathcal{A}_0} 
    &= \frac{[\zeta'-\max(\tau_{10}, \tau_{01})]\min(\tau_{11}, \tau_{00})}{\zeta'(1-\tau_{10}-\tau_{01})}\\
    &= \sum\limits_{i \in \{0,1\}} \frac{1}{|C^{(i)}|}\sum\limits_{\vx \in C^{(i)} \cap L(\zeta')} \tau_{ii} \frac{\eta_i(\vx)}{\widetilde{\eta}_i(\vx)}-
    \sum\limits_{i \in \{0,1\}} \frac{1}{|C^{(i)}|}\sum\limits_{\vx \in C^{(i)} \cap \mathcal{X}} \tau_{ii} \frac{\eta_i(\vx)}{\widetilde{\eta}_i(\vx)}\\
    &= \sum\limits_{i\in\{0,1\}} \frac{\tau_{ii}}{|C^{(i)}|}\left[\sum\limits_{\vx \in C^{(i)}\cap L(\zeta')}\frac{\eta_i(\vx)}{\widetilde{\eta}_i(\vx)} - \sum\limits_{\vx\in C^{(i)}\cap \mathcal{X}}\frac{\eta_i(\vx)}{\widetilde{\eta}_i(\vx)}\right]\\
    &= \sum\limits_{i\in\{0,1\}} \tau_{ii}\left[\frac{\sum\limits_{\vx \in C^{(i)}\cap L(\zeta')}h_i(\vx)}{|C^{(i)}|} - \frac{\sum\limits_{\vx\in C^{(i)}\cap \mathcal{X}}h_i(\vx)}{|C^{(i)}|}\right]
\end{align}

Note that finite moment of $h_i(\vx)$ comes from the fact that $\widetilde{\eta}_i(\vx) > \tau_{1-i,i}$, which implies that $h_i(\vx) = \frac{\eta_i(\vx)}{\widetilde{\eta}_i(\vx)} < \frac{1}{\tau_{1-i,i}}$. Together with the fact that $\vx$ has compact support we could show $E[h_i(\vx)] < \infty$. Using the law of large numbers we have:

\begin{align}
&\lim\limits_{n\rightarrow \infty} (\ell'_{S_n, \mathcal{A}_\zeta} - \ell'_{S_n, \mathcal{A}_0}) = \sum\limits_{i\in\{0,1\}} \tau_{ii} \left[E\left[h_i(\vx) \mid \vx\in L(\zeta')\right] - E\left[h_i(\vx)\right]\right]\\
&=\sum\limits_{i\in\{0, 1\}} \tau_{ii} \left[\frac{\int\limits_{\vx \in L(\zeta')}h_i(\vx)f(\vx)d\vx}{\int\limits_{\vx \in L(\zeta')}f(\vx)d\vx} - \frac{\int\limits_{\vx \in \mathcal{X}}h_i(\vx)f(\vx)d\vx}{\int\limits_{\vx \in \mathcal{X}}f(\vx)d\vx}\right] \\
&= \sum\limits_{i\in\{0,1\}} \tau_{ii} \left[\frac{\int\limits_{\vx \in L(\zeta')}h_i(\vx)f(\vx) d\vx}{\mu[L(\zeta')]}-\int\limits_{\vx\in \mathcal{X}}h_i(\vx)f(\vx)d\vx\right]\\
&=\sum\limits_{i\in\{0,1\}} \frac{\tau_{ii}}{\mu[L(\zeta')]} \left[\left[\mu[L(\zeta')]+\mu[L(\zeta')^c]\right]\int\limits_{\vx \in L(\zeta')}h_i(\vx)f(\vx) d\vx -\mu[L(\zeta')] \int\limits_{\vx\in \mathcal{X}}h_i(\vx)f(\vx)d\vx\right]\\
&= \sum\limits_{i\in\{0,1\}} \frac{\tau_{ii}}{\mu[L(\zeta')]}\left[\mu[L(\zeta')^c]\int\limits_{\vx \in L(\zeta')}h_i(\vx)f(\vx) d\vx - \mu[L(\zeta')]\int\limits_{\vx\in L(\zeta')^c} h_i(\vx)f(\vx)d\vx \right]\\
&= \sum\limits_{i\in\{0,1\}} \frac{\tau_{ii}}{\mu[L(\zeta')]}\int\limits_{\vx\in L(\zeta')^c} h_i(\vx)f(\vx)d\vx \left[\mu[L(\zeta')^c]\frac{\int\limits_{\vx\in L(\zeta')}h_i(\vx)f(\vx)d\vx}{\int\limits_{\vx\in L(\zeta')^c} h_i(\vx)f(\vx)d\vx} - \mu[L(\zeta')]\right]\\
&\geq \sum\limits_{i\in\{0,1\}} \frac{\tau_{ii}}{\mu[L(\zeta')]}\int\limits_{\vx\in L(\zeta')^c} h_i(\vx)f(\vx)d\vx \left[\mu[L(\zeta')^c]\frac{\int\limits_{\vx\in L(\zeta')}h_i(\vx)f(\vx)d\vx}{h_i(\zeta') \int\limits_{\vx\in L(\zeta')^c} f(\vx)d\vx} - \mu[L(\zeta')]\right]\\
&= \sum\limits_{i\in\{0,1\}} \frac{\tau_{ii}}{\mu[L(\zeta')]}\int\limits_{\vx\in L(\zeta')^c} h_i(\vx)f(\vx)d\vx \left[\int\limits_{\vx\in L(\zeta')} \frac{h_i(\vx)}{h_i(\zeta')}f(\vx)d\vx - \mu[L(\zeta')]\right]
\end{align}

Here let $I_{i}(\zeta') = \int\limits_{\vx\in L(\zeta')^c} h_i(\vx)f(\vx)d\vx$. Observe that 
\begin{align}
\int\limits_{\vx\in L(\zeta')} \frac{h_i(\vx)}{h_i(\zeta')}f(\vx)d\vx & = \int\limits_{\vx\in L(\zeta)} \frac{h_i(\vx)}{h_i(\zeta')}f(\vx)d\vx + \int\limits_{\vx\in L(\zeta')\char`\\L(\zeta)} \frac{h_i(\vx)}{h_i(\zeta')}f(\vx)d\vx\\
&\geq \int\limits_{\vx\in L(\zeta)} \left[\frac{h_i(\vx)}{h_i(\zeta')}-1+1\right]f(\vx)d\vx + \int\limits_{\vx\in L(\zeta')\char`\\L(\zeta)} \frac{h_i(\zeta')}{h_i(\zeta')}f(\vx)d\vx\\
&\geq \int\limits_{\vx\in L(\zeta)} \left[\frac{h_i(\zeta)-h_i(\zeta')}{h_i(\zeta')}\right]f(\vx)d\vx + \mu[L(\zeta')]\\
&\geq \mu[L(\zeta)]\left[\frac{h_i(\zeta)-h_i(\zeta')}{h_i(\zeta')}\right] + \mu[L(\zeta')]
\end{align}

A valid choice of $\zeta$ implies $\mu[L(\zeta')]>0$. Plug $I_{i}(\zeta')$ and (31) back into (27) and it end up with

\begin{align}
\lim\limits_{n\rightarrow \infty} (\ell'_{S_n, \mathcal{A}_\zeta} - \ell'_{S_n, \mathcal{A}_0}) \geq  \sum\limits_{i\in\{0,1\}} \tau_{ii}\frac{\mu[L(\zeta)]}{\mu[L(\zeta')]}\left[\frac{h_i(\zeta)-h_i(\zeta')}{h_i(\zeta')}\right]I_{i}(\zeta') = C_\zeta > 0
\end{align}

Remember that $h_i(\zeta)$ is an increasing function and $\zeta > \zeta' = \frac{1}{2}\left(\zeta+\frac{1+|\tau_{10}-\tau_{01}|}{2}\right)$. Then every term in (34) is positive and we end up with a positive constant $C_\zeta$.

\end{proof}    

If there doesn't exists a point $\vx$ such that $\eta_i(\vx) = 0$, let $a_i = \min\limits_{\vx \in \mathcal{X}} \eta_i(\vx)$ and $\widetilde{a}_i = (1-\tau_{10}-\tau_{01})a_i + \tau_{1-i,i}$, then the following generalized theorem applies for minimum purity.

\begin{theorem*}[\textbf{General Minimum Purity Guarantee}]
\label{theorem:generalpurity}
$\forall \delta>0$, $\forall \zeta > \frac{1+|\tau_{10}-\tau_{01}|}{2}$, there exist $N(\delta, \zeta) > 0$, $c_1(\zeta)>0$ , constant $c_2 \in \left(0,\frac{e-1}{e}\right)$, and  an non-decreasing function $g_1(\zeta) \in \left[\min\limits_{i \in \{0,1\}}\frac{\tau_{ii}\tau_{1-i,i}(\zeta'-\widetilde{a}_i)}{(1-\tau_{10}-\tau_{01})\zeta'\widetilde{a}_i}, 1\right]$, such that $\forall n\geq N(\delta, \zeta)$ , $\forall q>1$ and $\forall k \in [c_1(\zeta)\log^q{n}, c_2 n]$:

\[
P\left[(\ell_{S_{n}, \mathcal{A}_{\zeta}} - \ell_{S_{n}, \mathcal{A}_0}) >  g_1(\zeta) \right] \geq 1-\delta
\]
\end{theorem*}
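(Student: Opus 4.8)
The plan is to follow the minimum-purity half of the proof of Theorem~\ref{theorem:purity} essentially verbatim, since the hypothesis $\min_{\vx}\eta_i(\vx)=0$ entered only in one place: the evaluation of the limit of $\ell_{S_n,\mathcal{A}_0}$. First I would invoke Lemmas~\ref{lemma:connectivity}, \ref{lemma:isolation} and~\ref{lemma:filtering} exactly as before so that, for all $n\geq N(\delta,\zeta)$ and $k\in[c_1(\zeta)\log^q n,\,c_2 n]$, the events $E_C$, $E_I$, $E_F$ each hold with probability at least $1-\delta/4$. On these events $\bigcup_i C^{(i)}\subseteq L(\zeta')$ up to the boundary, so the order-statistic argument of the main proof (the minimum order statistic has cdf $1-[1-\mathcal{F}_i(g)]^n\to\mathbbm{1}_{\{g\geq g_i^*\}}$, and convergence in distribution to a constant implies convergence in probability) still gives $\ell_{S_n,\mathcal{A}_\zeta}\xrightarrow{p}\min_i\min_{\vx\in L(\zeta')}\tau_{ii}h_i(\vx)$ and $\ell_{S_n,\mathcal{A}_0}\xrightarrow{p}\min_i\min_{\vx\in\mathcal{X}}\tau_{ii}h_i(\vx)$, where $h_i(\vx)=\eta_i(\vx)/\widetilde{\eta}_i(\vx)$.

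Next I would evaluate both limits using monotonicity of $h_i$. Writing $h_i=\eta_i/[(1-\tau_{10}-\tau_{01})\eta_i+\tau_{1-i,i}]$, its derivative in $\eta_i$ is $\tau_{1-i,i}/\widetilde{\eta}_i^{\,2}\geq 0$, so $h_i$ is non-decreasing in $\eta_i$ (equivalently in $\widetilde{\eta}_i$). Hence on $L(\zeta')$ the class-$i$ minimum is attained at the boundary value $\widetilde{\eta}_i=\zeta'$, giving as in the main proof $\ell_{S_n,\mathcal{A}_\zeta}\xrightarrow{p}\min_i A_i$ with $A_i=\tau_{ii}\frac{\zeta'-\tau_{1-i,i}}{(1-\tau_{10}-\tau_{01})\zeta'}$. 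For $\mathcal{A}_0$ the minimum over all of $\mathcal{X}$ (realised since $\tau_{1-i,i}>0$ allows points with $\eta_i=a_i$ to carry noisy label $i$) is now attained at $\eta_i=a_i$, i.e. $\widetilde{\eta}_i=\widetilde{a}_i$, so $\ell_{S_n,\mathcal{A}_0}\xrightarrow{p}\min_i B_i$ with $B_i=\tau_{ii}\frac{a_i}{\widetilde{a}_i}=\tau_{ii}\frac{\widetilde{a}_i-\tau_{1-i,i}}{(1-\tau_{10}-\tau_{01})\widetilde{a}_i}$, which replaces the old limit $0$.

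The key step is to pass from the difference of minima to the stated lower bound. Using $\min_i A_i-\min_i B_i\geq\min_i(A_i-B_i)$ together with the identity $A_i-B_i=\frac{\tau_{ii}\tau_{1-i,i}(\zeta'-\widetilde{a}_i)}{(1-\tau_{10}-\tau_{01})\zeta'\widetilde{a}_i}$ (obtained from $\frac{\zeta'-\tau_{1-i,i}}{\zeta'}-\frac{\widetilde{a}_i-\tau_{1-i,i}}{\widetilde{a}_i}=\tau_{1-i,i}(\frac{1}{\widetilde{a}_i}-\frac{1}{\zeta'})$), the limiting purity gap is at least $g_1(\zeta):=\min_i\frac{\tau_{ii}\tau_{1-i,i}(\zeta'-\widetilde{a}_i)}{(1-\tau_{10}-\tau_{01})\zeta'\widetilde{a}_i}$, the lower endpoint in the statement. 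I would then record the three required properties of $g_1$: it is positive because a valid $\zeta$ gives $\mu[L(\zeta')^c]>0$, forcing $\widetilde{a}_i<\zeta'$ so each summand is $>0$; it is non-decreasing in $\zeta$ because $\zeta'$ increases with $\zeta$ and each term $\frac{\tau_{ii}\tau_{1-i,i}}{(1-\tau_{10}-\tau_{01})\widetilde{a}_i}(1-\widetilde{a}_i/\zeta')$ increases in $\zeta'$, and a pointwise minimum of increasing functions is increasing; and it is at most $1$ since any purity gap is a difference of two quantities in $[0,1]$.

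Finally I would assemble the probability statement. Fixing $g_1(\zeta)$ at (or just below) the limiting lower bound and using the convergence in probability, there is $N$ such that for $n\geq N$ the event $E_P=\{\ell_{S_n,\mathcal{A}_\zeta}-\ell_{S_n,\mathcal{A}_0}>g_1(\zeta)\}$ has conditional probability at least $1-\delta/4$ given $E_C,E_I,E_F$; a union bound over the four events, exactly as at the end of the main proof, then yields $P[E_P\cap E_I\cap E_F\cap E_C]\geq 1-\delta$. The hard parts will be (i) the min-of-differences inequality, which is precisely what makes the guarantee a minimum rather than a difference of minima, and (ii) checking $\zeta'>\widetilde{a}_i$ so that $g_1>0$; everything else is a transcription of the $a_i=0$ argument with $0$ replaced by $\tau_{ii}a_i/\widetilde{a}_i$.
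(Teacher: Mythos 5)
Your proposal is correct and follows essentially the same route the paper intends: the paper states this general theorem without a separate proof, deferring (via the footnote in Section~\ref{subsec:purity}) to the minimum-purity half of the proof of Theorem~\ref{theorem:purity}, and your adaptation --- the same three lemmas and event structure, the same order-statistic convergence argument, re-evaluating the $\mathcal{A}_0$ limit at $\eta_i = a_i$ so that $B_i = \tau_{ii}\widetilde{a}_i^{-1}a_i$ replaces $0$, the identity $A_i - B_i = \frac{\tau_{ii}\tau_{1-i,i}(\zeta'-\widetilde{a}_i)}{(1-\tau_{10}-\tau_{01})\zeta'\widetilde{a}_i}$ combined with $\min_i A_i - \min_i B_i \geq \min_i (A_i - B_i)$, and the final four-event union bound --- is exactly that intended argument. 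One small caveat: your positivity justification ($\mu[L(\zeta')^c]>0$ ``forcing'' $\widetilde{a}_i<\zeta'$) only yields $a_i<\zeta'$, which does not imply $\widetilde{a}_i<\zeta'$ when one noise rate dominates; the paper's remark instead asserts $\widetilde{a}_i<\tfrac{1}{2}<\zeta'$ (implicitly using that both classes have nontrivial presence, so $a_i<\tfrac{1}{2}$), but since positivity of $g_1$ is not part of the theorem statement itself, this does not affect the correctness of your proof of the stated claim.
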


\textbf{Remark:} Notice here $g_1(\zeta) > 0$, since $\widetilde{a}_i < \frac{1}{2} < \zeta'$. And for average purity the conclusion remains the same.

\subsection{Proof of the Abundancy Theorem}
Denote $n_c = \#\{\bigcup\limits_{i \in \{0, 1\}}C^{(i)}(\zeta)\}$, where $C^{(i)}(\zeta)$ is data points of type $i$ that finally kept by our algorithm using parameter $\zeta$. We have:

\setcounter{theorem}{1}
\begin{theorem}[\textbf{Abundancy}]
\label{theorem:abundancy}
 $\forall \delta >0$, $\forall \zeta > \frac{1+|\tau_{10}-\tau_{01}|}{2}$,  $\forall \epsilon > 0$, there exists $c_1(\zeta) >0$, $c_2 \in (0, \frac{e-1}{e})$ and $N(\delta, \zeta, \epsilon) > 0$,  such that $\forall n \geq N(\delta, \zeta, \epsilon)$, and $\forall k \in [c_1(\zeta)\log^q{n}, c_2 n]$, with probability at least $1-\delta$:
\[
    \frac{n_c}{n} \geq \mu(L(\zeta))
\]
\end{theorem}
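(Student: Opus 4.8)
The plan is to reduce the abundancy bound to a counting statement: I would show that with high probability every sampled point lying in the high-confidence region is retained, i.e. $\mathbf{X}\cap L(\zeta)\subseteq\bigcup_{i\in\{0,1\}}C^{(i)}(\zeta)$, and then invoke concentration of the empirical mass of $L(\zeta)$. The containment follows by chaining the three preceding lemmas. By Lemma~\ref{lemma:connectivity}, for $k$ in the stated range all of $\mathbf{X}_i(\zeta)$ lies in a single connected component of $G_i(\mathbf{X},k)$, so the connected-component stage of $\mathcal{A}_\zeta$ keeps these points together as one block; by Lemma~\ref{lemma:isolation} this block has no edge to $\mathbf{X}_i^c(\zeta')$, so it is not contaminated from outside $L(\zeta')$. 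It then remains to check that a point $\vx$ with $\eta_i(\vx)\geq\zeta$ survives the $\zeta$-filter. Since $\widetilde{\eta}_i(\vx)=(1-\tau_{10}-\tau_{01})\eta_i(\vx)+\tau_{1-i,i}$ is increasing in $\eta_i(\vx)$, such a point has $\widetilde{\eta}_i(\vx)$ bounded strictly above $\zeta'$, so its type-$i$ neighbor count $N^{(i)}(\vx)$ stochastically dominates a $\mathrm{Binomial}(k,\widetilde{\eta}_i(\vx))$ whose mean exceeds $\ceil{\zeta k}$; a Chernoff lower-tail bound — the mirror image of the computation in Lemma~\ref{lemma:filtering} — shows $N^{(i)}(\vx)>\ceil{\zeta k}$ with probability at least $1-\delta/(4n)$, and a union bound over the at most $n$ such points keeps this failure below $\delta/4$. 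Hence $\mathbf{X}\cap L(\zeta)$ is retained.

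Given the containment, $n_c=\big|\bigcup_i C^{(i)}(\zeta)\big|\geq |\mathbf{X}\cap L(\zeta)|$. Because the $n$ samples are i.i.d. and each lands in $L(\zeta)$ with probability $\mu(L(\zeta))$, the variable $|\mathbf{X}\cap L(\zeta)|$ is $\mathrm{Binomial}(n,\mu(L(\zeta)))$, and a Hoeffding bound gives $\tfrac{1}{n}|\mathbf{X}\cap L(\zeta)|\geq\mu(L(\zeta))-\epsilon$ with probability at least $1-\delta/2$ once $n\geq N(\delta,\zeta,\epsilon)$. Combining this with the retention of the boundary mass in $L(\zeta')\setminus L(\zeta)$ — the part of the kept component that the filter does not discard but that Lemma~\ref{lemma:filtering} excludes from $L(\zeta')^c$ — absorbs the fluctuation slack $\epsilon$ and yields $n_c/n\geq\mu(L(\zeta))$. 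A final union bound over the failure events of the three lemmas and the two concentration steps bounds the total failure probability by $\delta$.

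The main obstacle is obtaining the clean bound $\mu(L(\zeta))$ rather than merely $\mu(L(\zeta))-\epsilon$: a direct count of $\mathbf{X}\cap L(\zeta)$ sits below its mean roughly half the time, so the exact inequality cannot come from the core region alone. The delicate step is therefore to quantify the surplus contributed by retained points in the annulus $L(\zeta')\setminus L(\zeta)$ and to argue that this surplus, together with the positive margin $\mu(L(\zeta'))-\mu(L(\zeta))>0$ guaranteed by a valid choice of $\zeta$, dominates the downward deviation of the binomial with high probability; this is precisely where the $\epsilon$-dependence of $N(\delta,\zeta,\epsilon)$ is consumed. A secondary technical point is verifying the filter-passing step in the direction opposite to Lemma~\ref{lemma:filtering}, since there the bound was applied to exclude low-confidence points whereas here it must include high-confidence ones.
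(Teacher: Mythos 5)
Your proof skeleton is the same as the paper's: establish the sandwich $L(\zeta)\cap\mathbf{X}\subseteq\bigcup_{i\in\{0,1\}}C^{(i)}(\zeta)\subseteq L(\zeta')\cap\mathbf{X}$ from Lemmas~\ref{lemma:connectivity}, \ref{lemma:isolation} and \ref{lemma:filtering}, then use strictly positive extra mass from the annulus $L(\zeta')\setminus L(\zeta)$ to absorb the downward fluctuation of the empirical count, then take a union bound. The paper implements the second half by writing $n_c=\sum_{\vx\in\mathbf{X}}b_\vx$ with $b_\vx\sim\mathrm{Bernoulli}\left(\mu(L(\zeta))+\mu_\Delta\right)$ for some $\mu_\Delta\in\left(0,\mu(L(\zeta'))-\mu(L(\zeta))\right)$ and invoking the law of large numbers, which is exactly your ``surplus dominates the deviation'' plan; the difference is that the paper simply asserts the sandwich (``We know that \dots'') while you attempt to prove its inclusion half.

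That attempted proof is where the genuine gap lies. You argue that $\vx\in L(\zeta)$ gives $\widetilde{\eta}_i(\vx)$ strictly above $\zeta'$, and hence that $N^{(i)}(\vx)$ dominates a binomial ``whose mean exceeds $\ceil{\zeta k}$.'' This is backwards: since $\zeta'=\frac{1}{2}\left(\zeta+\frac{1+|\tau_{10}-\tau_{01}|}{2}\right)<\zeta$, a success probability above $\zeta'$ says nothing about exceeding the filter threshold $\zeta$; the lower-tail Chernoff bound you want requires the neighbors' same-noisy-label probability to be strictly \emph{above $\zeta$}. Membership in $L(\zeta)$ does not deliver this, because $L(\zeta)$ is a level set of the \emph{clean} posterior: on $L(\zeta)\cap A_i^+$ one only gets $\widetilde{\eta}_i(\vx)\geq(1-\tau_{10}-\tau_{01})\zeta+\tau_{1-i,i}$, which is below $\zeta$ whenever $\tau_{1-i,i}<(\tau_{10}+\tau_{01})\zeta$ --- in particular always under uniform noise $\tau_{10}=\tau_{01}=\tau>0$ with $\zeta>\frac{1}{2}$. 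Concretely, take $\tau=0.4$ and $\zeta=0.9$: a point with $\eta_i(\vx)=0.9$ has $\widetilde{\eta}_i(\vx)=0.58$, which is below both $\zeta'=0.7$ (so even your preliminary inequality fails) and $\zeta=0.9$, so such a point is \emph{rejected} by the $\zeta$-filter with probability tending to $1$ as $k$ grows. Hence there is no ``mirror image'' of Lemma~\ref{lemma:filtering} at threshold $\zeta$; as stated, points near the boundary of $L(\zeta)$ are expected to fail the filter, and your claimed containment $\mathbf{X}\cap L(\zeta)\subseteq\bigcup_i C^{(i)}(\zeta)$ does not follow from your argument. (This also exposes that the paper's own assertion of the containment is doing unproven work; but your proof, unlike the paper's, commits to a derivation that is false as written.) A secondary issue: your annulus-surplus step is only described, not proved --- nothing in the three lemmas guarantees that \emph{any} annulus point is retained, so the strict positivity of the surplus is an assumption; here you are no worse than the paper, which likewise posits $\mu_\Delta>0$, but your proposal flags this as ``the delicate step'' and then leaves it unresolved.
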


\begin{proof}
Given Lemma~\ref{lemma:connectivity}, \ref{lemma:isolation} and \ref{lemma:filtering}, $\forall \delta >0$ and $\forall \zeta > \frac{1+|\tau_{10}-\tau_{01}|}{2}$ then $\exists N(\delta, \zeta, \epsilon) > 0$ such that $\forall n > N(\delta, \zeta, \epsilon)$, $E_C, E_I$ and $E_F$ hold with probability at least $1-\delta/4$. We know that $L(\zeta)\cap\mathbf{X} \subset \bigcup\limits_{i \in \{0,1\}} C^{(i)} \subset L(\zeta')\cap\mathbf{X}$.  Thus for a set of i.i.d sampled points $\mathbf{X}$, $\exists \mu_\Delta \in \left(0, \mu(L(\zeta')) - \mu(L(\zeta))\right)$, $n_c = \sum\limits_{\vx \in \mathbf{X}} b_\vx$ and $b_\vx \sim Bernoulli(\mu(L(\zeta))+\mu_\Delta)$. Observe that $n_c$ stochastic dominates random variable $Binomial(n, \mu(L(\zeta)))$. So the MLE $\hat{\mu}(L(\zeta)) = \frac{n_c}{n}  \xrightarrow{p}  \mu(L(\zeta))+\mu_\Delta \geq \mu(L(\zeta))$.

Let event $E_A = \{n_c/n \geq \mu(L(\zeta)) \mid E_C, E_I, E_F\}$, $\forall \delta >0$ and $\forall \zeta > \frac{1+|\tau_{10}-\tau_{01}|}{2}$, $\exists N(\delta, \zeta, \epsilon) >0$ such that $\forall n \geq N(\delta, \zeta, \epsilon)$, $P(E_A \mid E_C, E_I, E_F) \geq 1 - \delta/4$. As a result:
\begin{align*}
&P(E_A\cap E_C \cap E_I \cap E_F) 
= P(\{E_A\} \cap \{E_F \mid E_C, E_I\} \cap \{E_I \mid E_C\} \cap \{E_C\})\\
&\geq 1- P(E_A^c) - P(E_F^c \mid E_I) - P(E_I^c \mid E_C) - P(E_C^c)\\
&= 1 - 4*(\delta/4) = 1 - \delta
\end{align*}

In other words, $\forall \delta > 0$ and $\zeta > \frac{1+|\tau_{10}-\tau_{01}|}{2}$, $\exists N(\delta, \zeta, \epsilon) >0$ such that if $n>N(\delta, \zeta, \epsilon)$ with probability at least $1-\delta$, $\frac{n_c}{n} > \mu(L(\zeta))$.

\end{proof}

\section{On the Consistency with the Bayes Optimum}

$\forall i \in \{0,1\}$ and for the posterior probability $\eta_i(\vx)$, define $h_i^*(\vx) = \delta_{\eta_i(\vx) > \frac{1}{2}}(\vx)$. $h_i^*(\vx) = 1$ indicates that $y(\vx) = i$. The Bayes optimal classifier $h^*(\vx) = \frac{1}{2}h_1^*(\vx) + \frac{1}{2}[1-h_0^*(\vx)]$.

In the paper, we provide theorems that lower bound the purity (consistency between noisy labels and true labels) of the final kept data points. However, like many existing works, we can also show the consistency between the label of final kept points given by our algorithm and the true Bayes optimal classifier, which is less challenging than guaranteeing the purity after connectivity, isolation and filtering (Lemmas~\ref{lemma:connectivity}, \ref{lemma:isolation}, and \ref{lemma:filtering}) are established. The following theorem will show that all labels of data points preserved by our algorithm will agree with Bayes optimal classifier's prediction with large probability. 

\begin{theorem}[\textbf{Consistency with $h^*(\vx)$}]
\label{theorem:consistency}
$\forall \delta>0$, $\forall \zeta > \frac{1+|\tau_{10}-\tau_{01}|}{2}$, there exist constants $N(\delta, \zeta) > 0$, $c_1(\zeta)>0$ , $c_2 \in \left(0,\frac{e-1}{e}\right)$, such that $\forall n\geq N(\delta, \zeta)$, $\forall q>1$ and $\forall k \in [c_1(\zeta)\log^q{n}, c_2 n]$ and $\forall \vx \in \bigcup\limits_{i\in \{0,1\}}C^{(i)}(\zeta)$:
\begin{align}
P\left[\widetilde{y}(\vx) = h^*(\vx)\right] \geq 1 - \delta
\end{align}
\end{theorem}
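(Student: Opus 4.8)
The plan is to reduce the claim to a statement about the noisy posterior $\widetilde{\eta}_i$ on the kept points, and then to control that quantity with the same Chernoff/KNN-concentration machinery already used in Lemma~\ref{lemma:filtering}. First I would record the elementary consequence of the definitions: a kept point $\vx \in C^{(i)}(\zeta)$ has $\widetilde{y}(\vx)=i$, and the Bayes rule gives $h^*(\vx)=i$ exactly when $\eta_i(\vx)>\tfrac12$. Using the affine relation $\widetilde{\eta}_i(\vx)=(1-\tau_{10}-\tau_{01})\eta_i(\vx)+\tau_{1-i,i}$ together with $\tau_{10},\tau_{01}\in[0,\tfrac12)$ (Assumption A4), this is equivalent to
\[
\eta_i(\vx)>\tfrac12 \iff \widetilde{\eta}_i(\vx)>\tfrac{1+\tau_{1-i,i}-\tau_{i,1-i}}{2}.
\]
Since $\tau_{1-i,i}-\tau_{i,1-i}\le|\tau_{10}-\tau_{01}|$, the right-hand threshold is at most $\beta:=\tfrac{1+|\tau_{10}-\tau_{01}|}{2}$, and by hypothesis $\beta<\zeta$. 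Hence it suffices to prove that, with probability at least $1-\delta$, every kept point $\vx\in\bigcup_{i}C^{(i)}(\zeta)$ satisfies $\widetilde{\eta}_i(\vx)>\beta$; the strict gap $\zeta-\beta>0$ is precisely the slack I will exploit.

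Second, I would establish $\widetilde{\eta}_i(\vx)>\beta$ on the kept set by showing that any point with $\widetilde{\eta}_i(\vx)\le\beta$ is removed by the $\zeta$-filter, mirroring Lemma~\ref{lemma:filtering}. Fix a margin value $\gamma\in(\beta,\zeta)$. By compact support (A1) and continuity (A2), $\widetilde{\eta}_i$ is uniformly continuous, so there is $\rho>0$ with $\|\vx'-\vx\|<\rho \Rightarrow |\widetilde{\eta}_i(\vx')-\widetilde{\eta}_i(\vx)|<\gamma-\beta$; for $k\le c_2 n$ and $n$ large the KNN radius of every point falls below $\rho$ with high probability, so all $k$ neighbors of such an $\vx$ have noisy posterior at most $\gamma$. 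Then the count $N^{(i)}(\vx)$ of type-$i$ neighbors is stochastically dominated by $\mathrm{Binomial}(k,\gamma)$, and the Chernoff tail bound gives $P\big(N^{(i)}(\vx)>\ceil{\zeta k}\big)\le \exp\{-k\,K\infdiv*{\zeta}{\gamma}\}$ with $K\infdiv*{\zeta}{\gamma}>0$. A union bound over the at most $n$ candidate points, together with a choice $k\ge c_3(\zeta)\log(2n/\delta)$ exactly as in Lemma~\ref{lemma:filtering}, drives this failure probability below $\delta/2$.

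Finally I would assemble the pieces as in the purity proof: invoking Lemmas~\ref{lemma:connectivity}, \ref{lemma:isolation} and \ref{lemma:filtering} guarantees, on an event of probability $1-\delta/2$, that the filter is well defined and that $C^{(i)}(\zeta)\subseteq L(\zeta')$, keeping us within the established framework; combining this with the filtering-concentration event of the previous step yields, with probability at least $1-\delta$, that every kept $\vx\in C^{(i)}(\zeta)$ has $\widetilde{\eta}_i(\vx)>\beta$, hence $\eta_i(\vx)>\tfrac12$ and $h^*(\vx)=i=\widetilde{y}(\vx)$. The step I expect to be the main obstacle is the neighbor-transfer in the second paragraph: converting the empirical constraint ``more than $\zeta k$ neighbors carry label $i$'' into a bound on the true $\widetilde{\eta}_i(\vx)$ requires that all $k$ neighbors share, up to the margin $\gamma-\beta$, the posterior of the center, which is where uniform continuity and the control of the KNN radius via $k\le c_2 n$ must be combined carefully, just as in Lemma~\ref{lemma:filtering}. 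Everything else is algebra and a routine union bound, which is consistent with the paper's remark that this consistency guarantee is less challenging than the purity guarantee.
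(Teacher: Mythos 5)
Your first paragraph is sound and matches the paper's logic: the paper also concludes $h^*(\vx)=i$ on kept points by showing they lie in $A_i^+$, where $\eta_i(\vx)>\tfrac12$, and your threshold arithmetic identifying that region with $\{\widetilde{\eta}_i>\beta\}$, $\beta=\tfrac{1+|\tau_{10}-\tau_{01}|}{2}$, is the same computation. Where you diverge is the middle step. The paper's proof is essentially a two-line corollary: it cites Lemmas~\ref{lemma:connectivity}, \ref{lemma:isolation} and \ref{lemma:filtering} to assert the sandwich $[L(\zeta)\cap A_i^+\cap\mathbf{X}]\subset C^{(i)}(\zeta)\subset[L(\zeta')\cap A_i^+\cap\mathbf{X}]$ on an event of probability $1-\delta$, and then reads off $\widetilde{y}(\vx)=i=h^*(\vx)$. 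You instead re-derive a filtering bound from scratch, aimed directly at the region $\{\widetilde{\eta}_i\le\beta\}$, replacing the paper's isolation mechanism by uniform continuity of $\widetilde{\eta}_i$. That is a legitimate alternative route, and it even fills in a step the paper glosses over (Lemma~\ref{lemma:filtering} literally only excludes $L(\zeta')^c$, not the label-$i$ points sitting in $L(\zeta')\cap A_{1-i}^+$ or $A^b$, which your $\beta$-targeted argument does handle). However, as written it contains a genuine gap.

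The gap is the claim that ``for $k\le c_2 n$ and $n$ large the KNN radius of every point falls below $\rho$ with high probability.'' This is false for an arbitrary constant $c_2\in\left(0,\tfrac{e-1}{e}\right)$: when $k$ grows proportionally to $n$, the $k$-NN radius of a point $\vx$ concentrates around the radius $r$ at which $\mu(B(\vx,r))\approx k/n$, so whenever $c_2$ exceeds $\inf_{\vx}\mu(B(\vx,\rho))$ the radius stays above $\rho$ no matter how large $n$ is, and your stochastic domination of $N^{(i)}(\vx)$ by $\mathrm{Binomial}(k,\gamma)$ loses its justification. To repair the step you must choose $c_2$ as a function of $\rho$ (hence of $\zeta$, through the margin $\gamma-\beta$ and the modulus of continuity of $\widetilde{\eta}_i$) and of a lower bound on the ball mass over the bad region, i.e., require roughly $k-1\le c_2\inf_{\vx}\mu(B(\vx,\rho))\,(n-1)$ with $c_2<\tfrac{e-1}{e}$, and then run the Chernoff-plus-union-bound computation of Lemma~\ref{lemma:isolation}. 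This is exactly how the paper sidesteps the issue: in Lemma~\ref{lemma:isolation} the admissible upper range of $k$ is not $c_2 n$ but $c_2\left[\min\limits_{i}\min\left(p_{\zeta}^{(i)},p_{\zeta'}^{(i)}\right)(n-1)\right]+1$, where $p_{\zeta}^{(i)},p_{\zeta'}^{(i)}$ are precisely ball masses at the geometric scale $r_0^{(i)}$, with positivity of those masses (a density floor on the relevant compact regions) assumed. Since the theorem's quantifiers let you choose $c_2$, your argument is fixable by making this dependence explicit; without it, the proof fails at the upper end of the stated range of $k$.
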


\begin{proof}
By combining Lemma~\ref{lemma:connectivity}, \ref{lemma:isolation} and \ref{lemma:filtering}, $\forall \delta>0$, $\forall \zeta > \frac{1+|\tau_{10}-\tau_{01}|}{2}$ and $\forall i \in \{0,1\}$, $\exists N(\delta, \zeta) >0$ such that $\forall n\geq N(\delta, \zeta)$, with probability at least $1-\delta$, we have $ [L(\zeta)\cap A_i^+ \cap \mathbf{X}] \subset  C^{(i)}(\zeta) \subset [L(\zeta') \cap A_i^+ \cap \mathbf{X}]$. By definition, we know $\forall \vx \in A_i^+$, we have $\eta_i(\vx) > \frac{1}{2}$ then also $h^*_i(\vx) = 1$, which implies that $ h^*(\vx) = i$. Since $\forall \vx \in C^{(i)}$, $\widetilde{y}(\vx) = i$, we have shown $\widetilde{y}(\vx) = i = h^*(\vx)$. Then:
\begin{align}
&P[\widetilde{y}(\vx)=h^*(\vx)] 
= P[\widetilde{y}(\vx)=h^*(\vx), E_C\cap E_I\cap E_F] + 
P[\widetilde{y}(\vx)=h^*(\vx), (E_C\cap E_I\cap E_F)^c]\\
&\geq P[\widetilde{y}(\vx)=h^*(\vx)\mid E_C\cap E_I\cap E_F]P(E_C\cap E_I\cap E_F) \\
&= P(E_C\cap E_I\cap E_F) \geq 1 - \delta
\end{align}
\end{proof}

\section{Discussion of Hyper-parameters in Our Method}

\subsection{Analysis of Hyper-Parameter $\zeta$}

One important hyper-parameter of our method is $\zeta$. In our algorithm, after the largest connected component of a class is chosen, we further filter it to ensure the purity. In particular, we consider a datum with label $\widetilde{y}$ clean only if at least a fraction of $\zeta$ of its $k$-nearest neighbors have the same label $\widetilde{y}$. The value of $\zeta$, ranging between 0 and 1, controls how selective we are in collecting clean data.

As suggested in Section 2.1 of the main paper, for a binary classification problem, $\zeta$ needs to be at least $1/2+\epsilon$, for $\epsilon$ being an arbitrarily small positive constant. For a multiclass problem, the lower bound of $\zeta$ can be smaller than $1/2$. We may also consider taking a higher $\zeta$ in the beginning of the training to ensure the purity, and later relax to a lower $\zeta$  so that sufficient clean data are collected.

In practice, we observe that it suffices to take a constant $\zeta$ throughout the training. We also observe that the performance is very robust to the choice of $\zeta$. We show in Fig.~\ref{fig:zeta} that for different datasets with different noise patterns/levels, choosing $\zeta=0.25$, $0.5$ and $0.75$ will all result in reasonably good performance. For all experiments reported in the main paper, \textit{we simply set $\zeta = 0.5$}.

\begin{figure}[hbtp]
	\centering
	\includegraphics[width=0.8\textwidth]{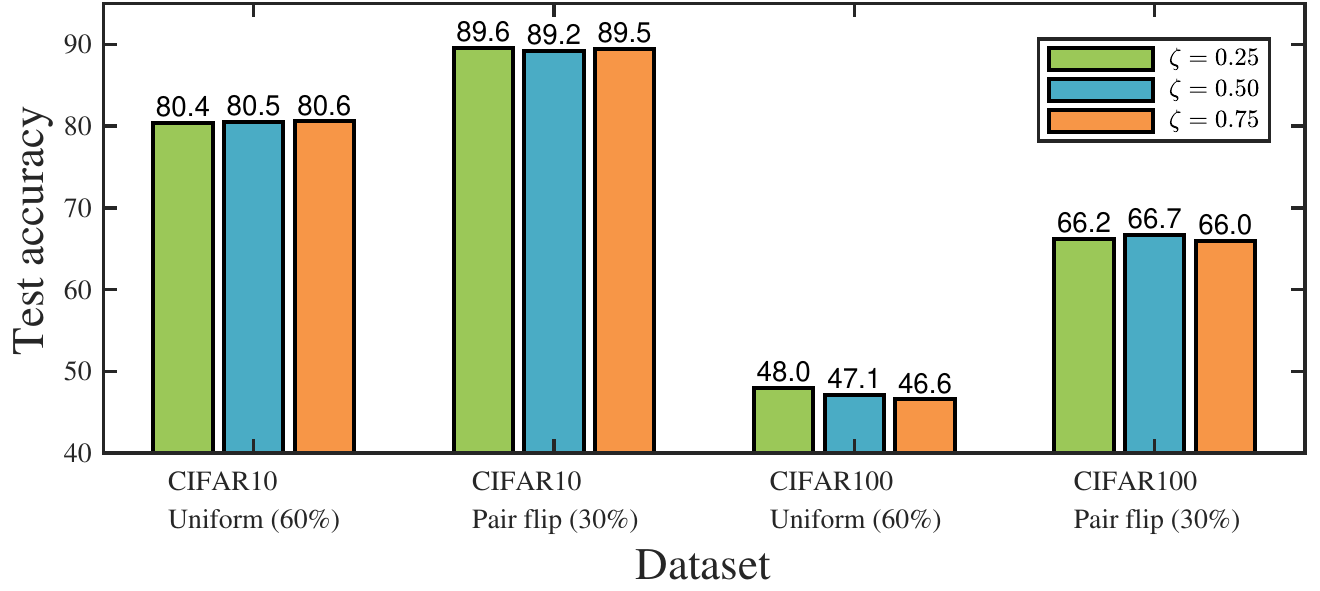}
	\caption{The effect of $\zeta$ on the model performance with different datasets/noise patterns. All the experimental settings are the same as the main paper. }
	\label{fig:zeta}
\end{figure}

\subsection{Other Hyper-Parameters}
As discussed in the main paper, our method is very robust to (1) validation set size/cleanness; (2) $k_c$ in building $k$-nearest-neighbor graphs to compute the connected components; (3) $k_o$ in computing $k$-nearest neighbors for $\zeta$-filtering; and (4) feature space dimension (dimension of the corresponding neural network layer).
In Fig.~4 of the main paper, we already provided results on CIFAR-10, 60\% uniform noise.
Below we provide similar results on other datasets/noise settings. 
As is shown, our method is robust to the size and purity of validation set. Besides, it is not sensitive to the feature dimensions and the $k$ in computing nearest neighbors.
In our experiments, we choose a clean validation set of size 10k for model selection. We set $k_c=4$, $k_o=32$ and feature dimension$=512$.
\begin{figure}[h!]
	\begin{center}
		\begin{minipage}{0.245\linewidth}
			\centering
			\includegraphics[width=\textwidth]{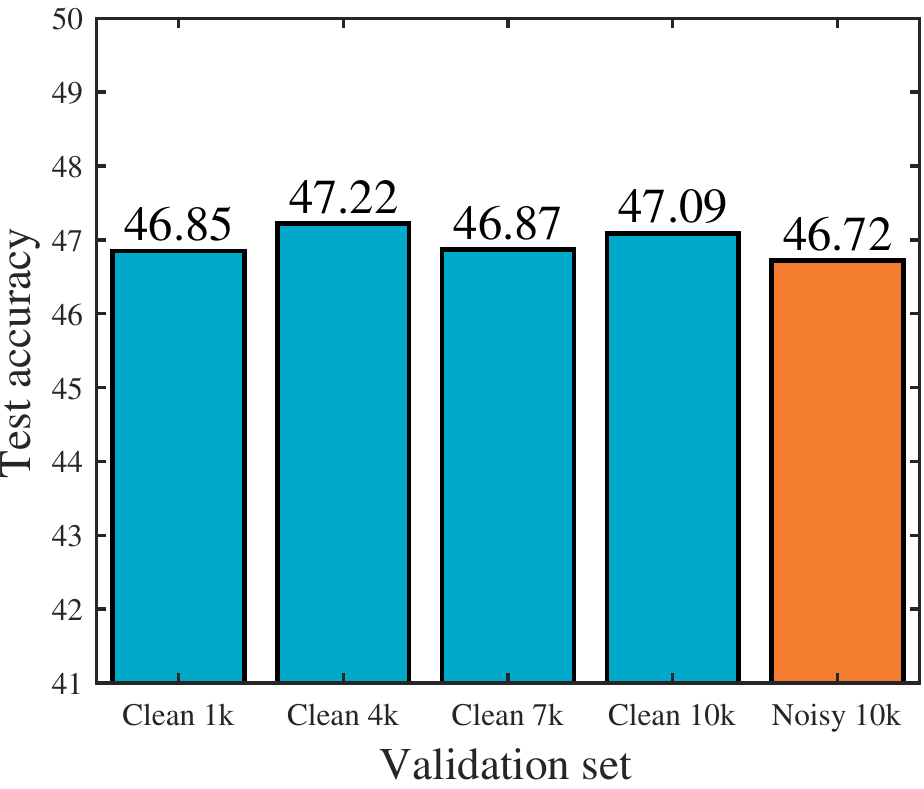}
			\small
			(a)
		\end{minipage}
		\begin{minipage}{0.245\linewidth}
			\centering
			\includegraphics[width=\textwidth]{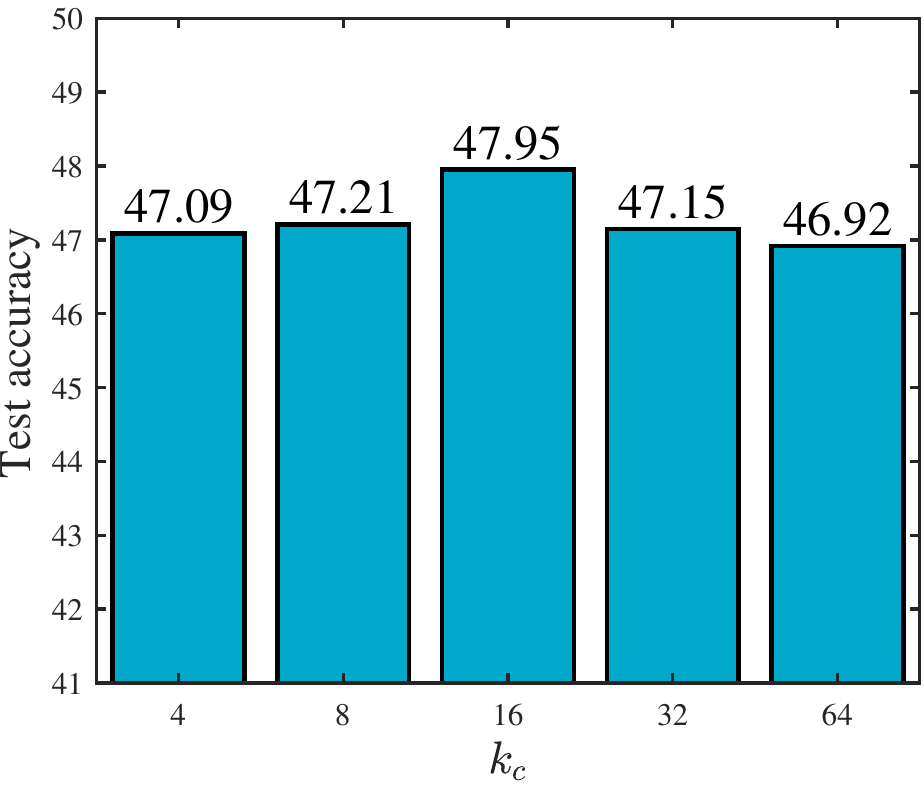}
			\small
			(b)
		\end{minipage}
		\begin{minipage}{0.245\linewidth}
			\centering
			\includegraphics[width=\textwidth]{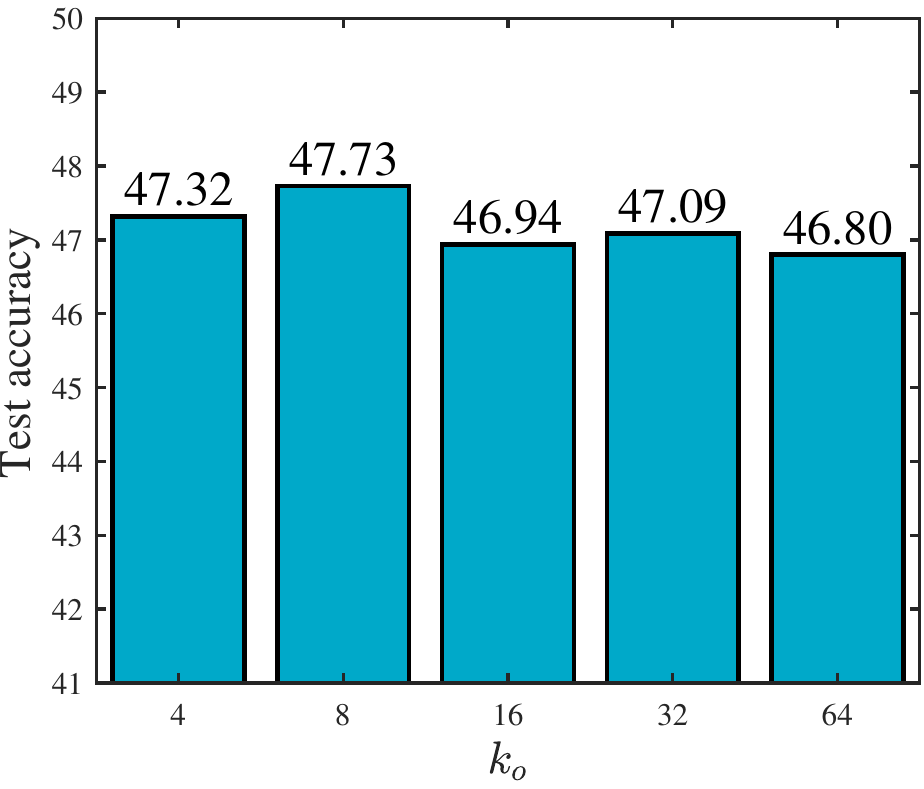}
			\small
			(c)
		\end{minipage}
		\begin{minipage}{0.245\linewidth}
			\centering
			\includegraphics[width=\textwidth]{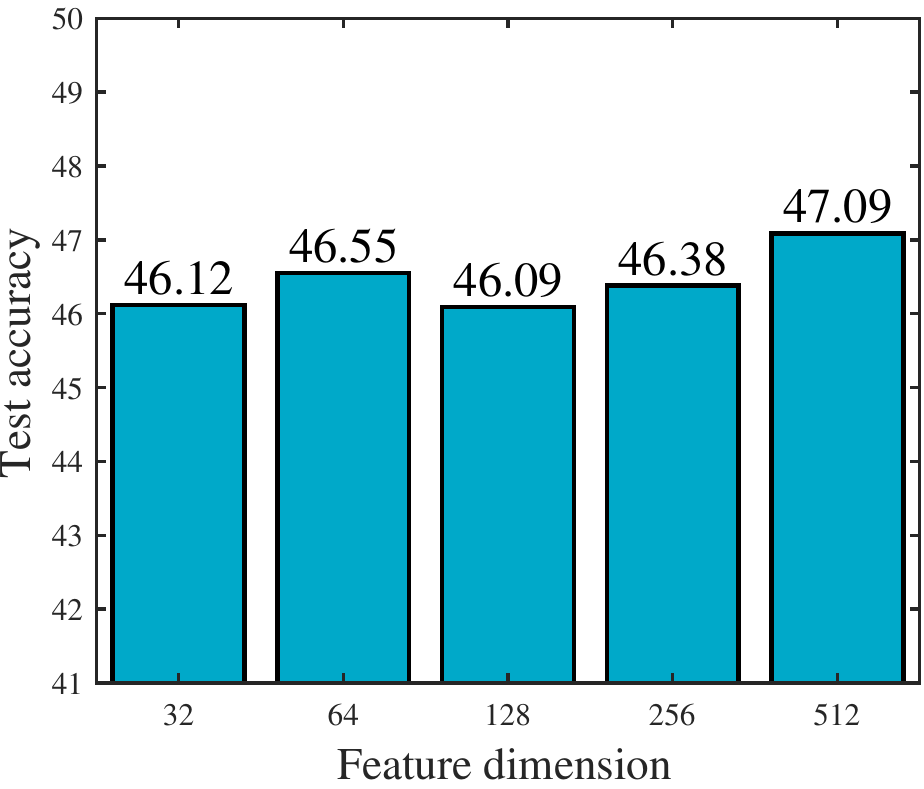}
			\small
			(d)
		\end{minipage}
	\end{center}
	\caption{Hyper-parameter analysis for CIFAR-100, 60\% uniform noise: (a) validation set; (b) $k_c$; (c) $k_o$; (d) Feature dimension. For each figure, we change one of the parameters while keeping the others fixed (to $k_c=4$, $k_o=32$, feature dimension = 512, validation set = clean 10k). }
	\label{fig:u6_cifar100}
\end{figure}
\begin{figure}[h!]
	\begin{center}
		\begin{minipage}{0.245\linewidth}
			\centering
			\includegraphics[width=\textwidth]{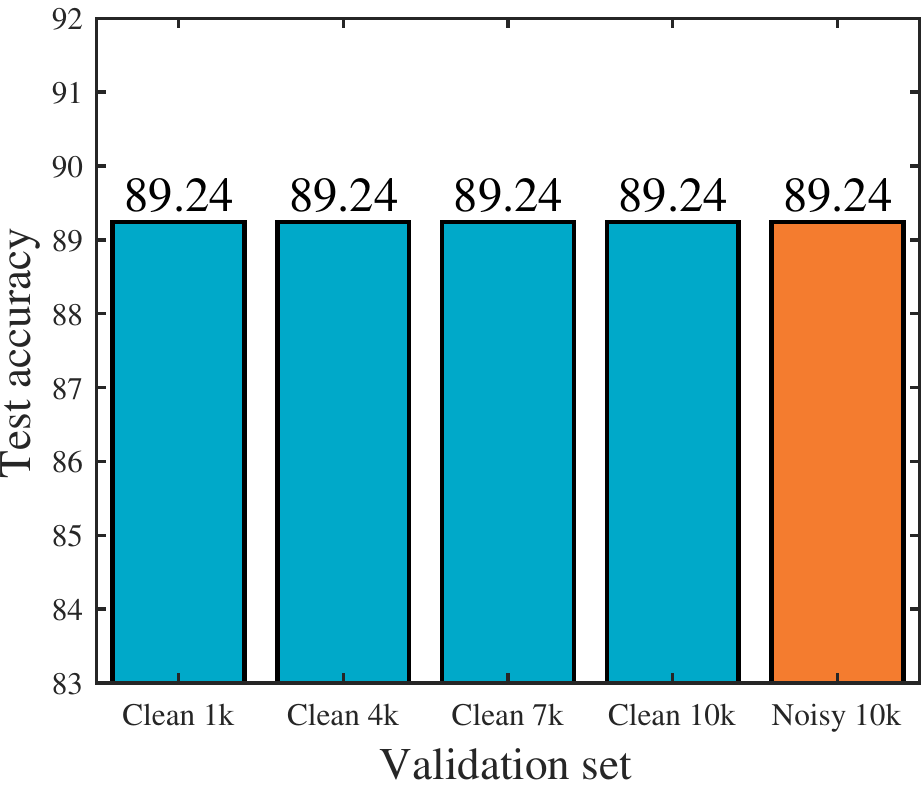}
			\small
			(a)
		\end{minipage}
		\begin{minipage}{0.245\linewidth}
			\centering
			\includegraphics[width=\textwidth]{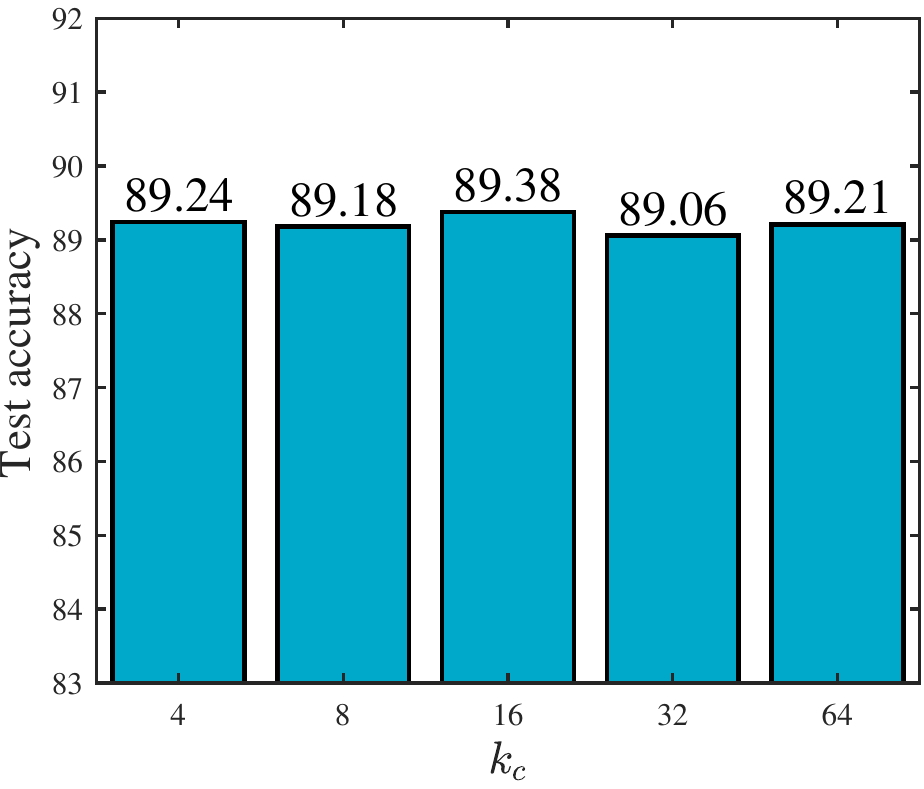}
			\small
			(b)
		\end{minipage}
		\begin{minipage}{0.245\linewidth}
			\centering
			\includegraphics[width=\textwidth]{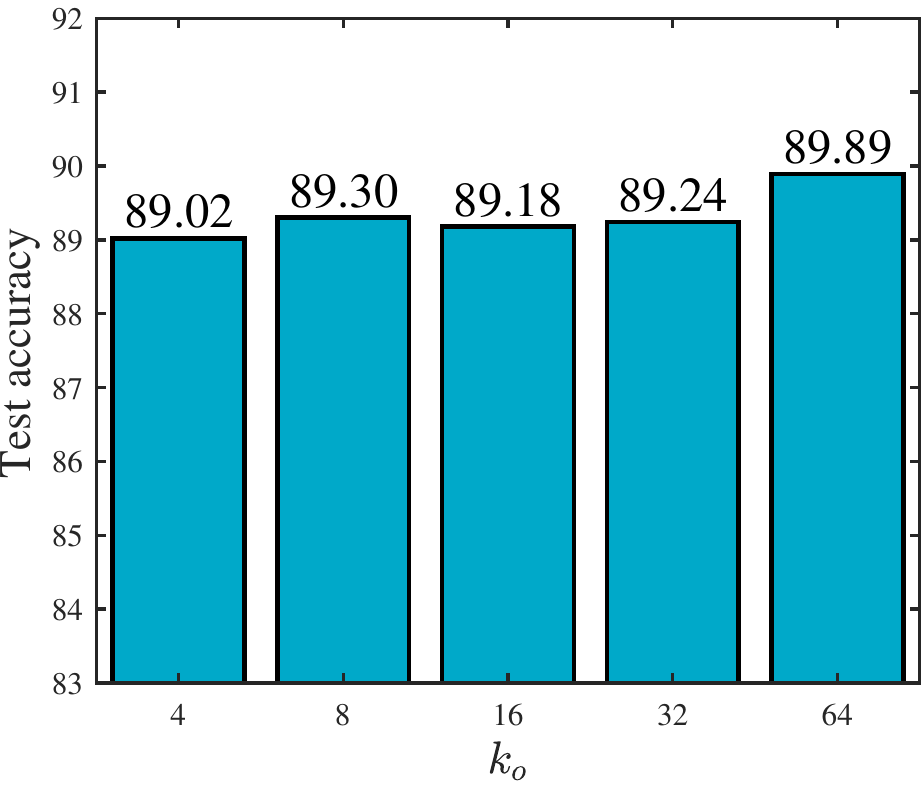}
			\small
			(c)
		\end{minipage}
		\begin{minipage}{0.245\linewidth}
			\centering
			\includegraphics[width=\textwidth]{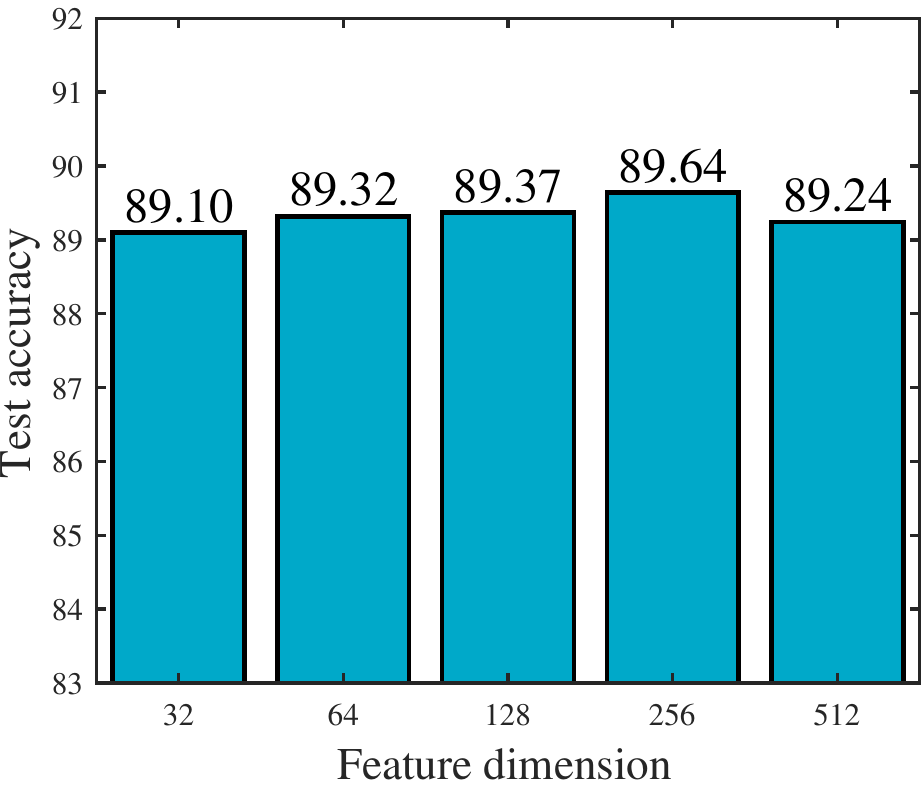}
			\small
			(d)
		\end{minipage}
	\end{center}
	\caption{Hyper-parameter analysis for CIFAR-10, 30\% pair-flipping noise: (a) validation set; (b) $k_c$; (c) $k_o$; (d) Feature dimension. The parameter specifications are the same with those in Fig.~\ref{fig:u6_cifar100} above.}
	\label{fig:a3_cifar10}
\end{figure}
\begin{figure}[h!]
	\begin{center}
		\begin{minipage}{0.245\linewidth}
			\centering
			\includegraphics[width=\textwidth]{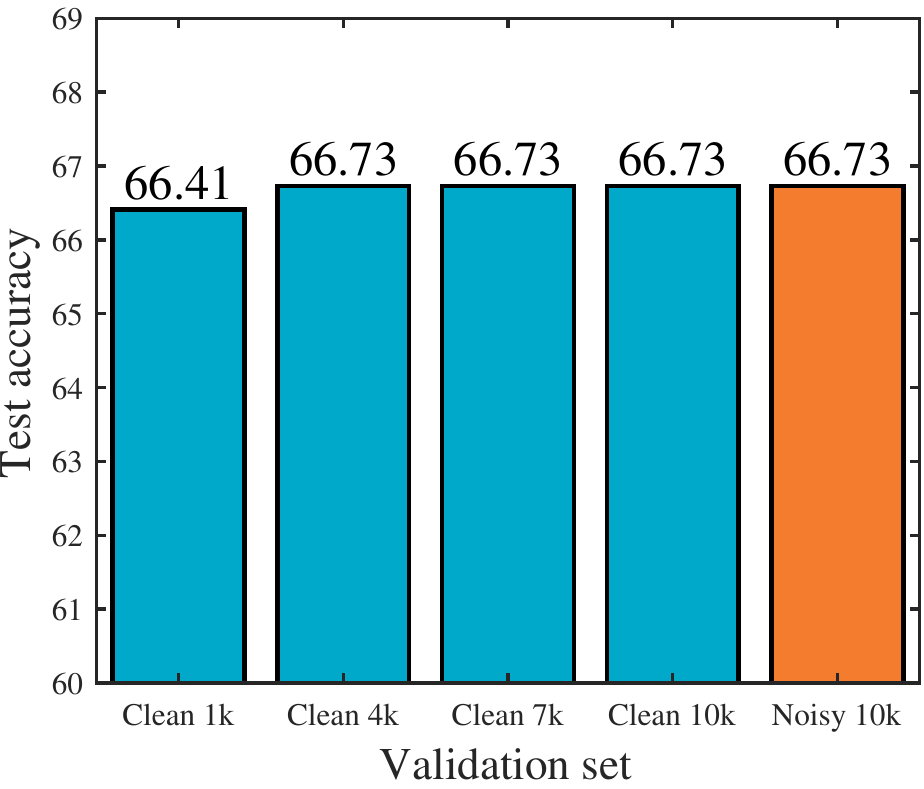}
			\small
			(a)
		\end{minipage}
		\begin{minipage}{0.245\linewidth}
			\centering
			\includegraphics[width=\textwidth]{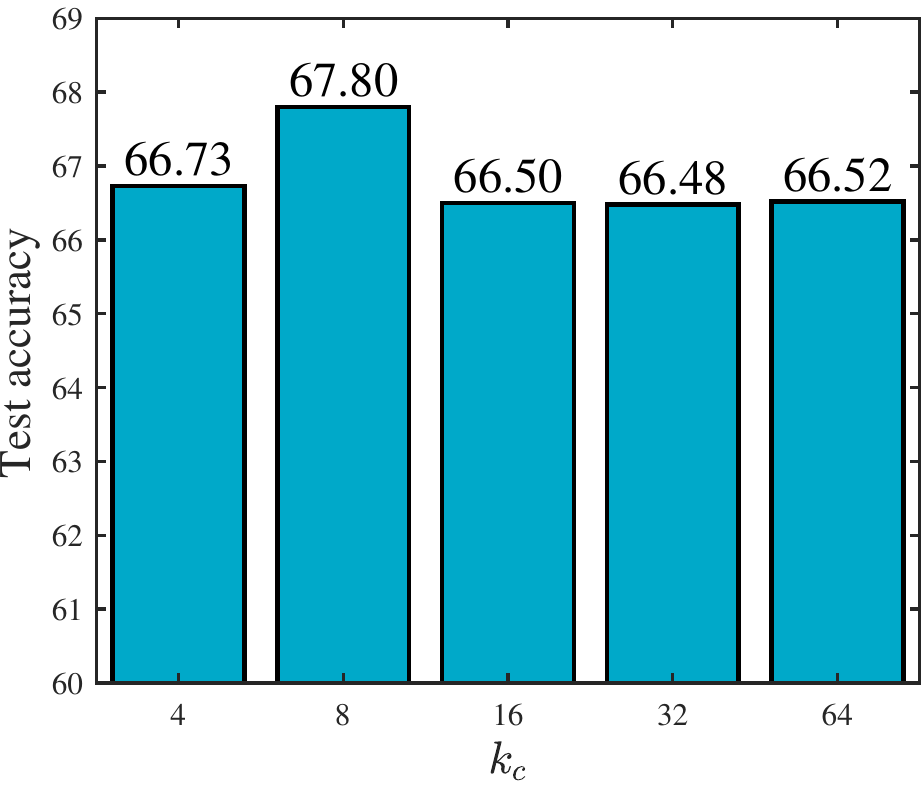}
			\small
			(b)
		\end{minipage}
		\begin{minipage}{0.245\linewidth}
			\centering
			\includegraphics[width=\textwidth]{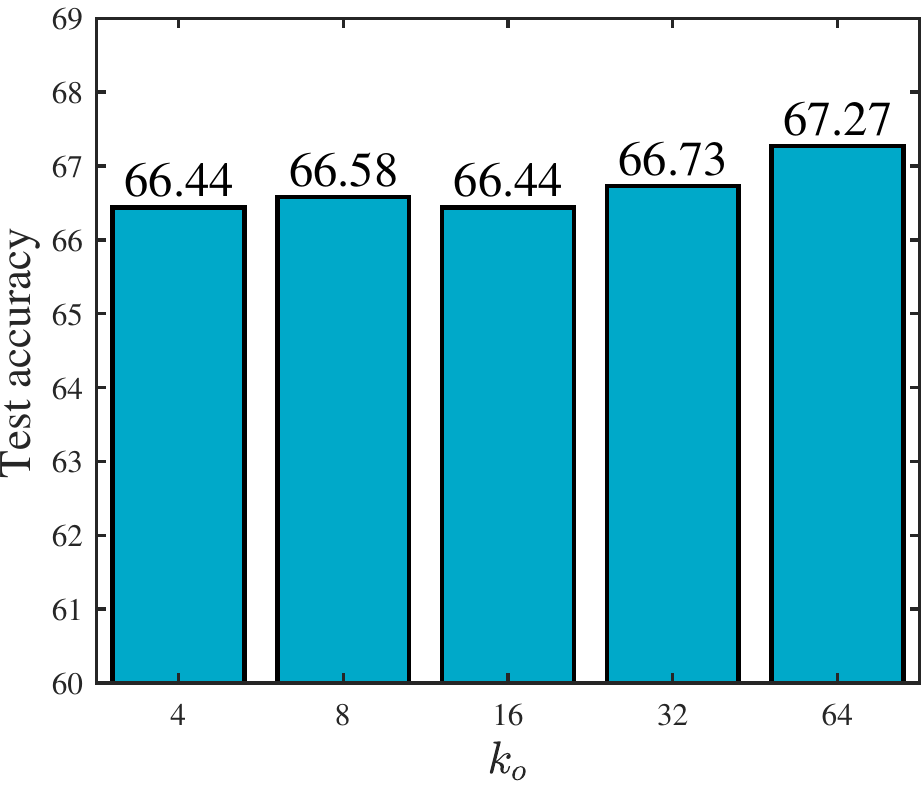}
			\small
			(c)
		\end{minipage}
		\begin{minipage}{0.245\linewidth}
			\centering
			\includegraphics[width=\textwidth]{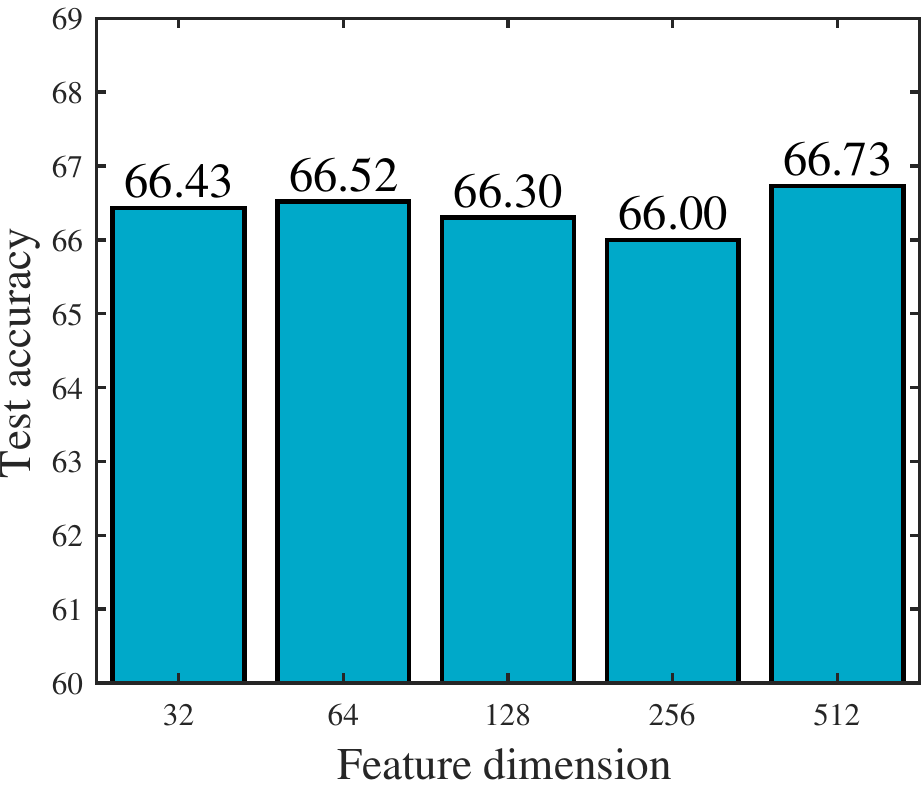}
			\small
			(d)
		\end{minipage}
	\end{center}
	\caption{Hyper-parameter analysis for CIFAR-100, 30\% pair-flipping noise: (a) validation set; (b) $k_c$; (c) $k_o$; (d) Feature dimension. The parameter specifications are the same with those in Fig.~\ref{fig:u6_cifar100}.}
	\label{fig:a3_cifar100}
\end{figure}

\subsection{Additional Experiments on the Point Cloud Data Domain}

To test the applicability of our method to the data beyond image domains, we also conduct experiments on the point cloud data. Specifically, we adopt the ModelNet40 \cite{wu20153d} dataset, which contains 12,311 CAD models from 40 categories, with 9,843 used for training and 2,468 for testing. In the experiment, we split 20\% from the training set as validation data. The CAD models are organized in triangular meshes, and we follow the protocol of \cite{PointNet} to convert them into point clouds by uniformly sampling 1,024 points from the mesh and normalizing them within a unit ball. We employ PointNet \cite{PointNet} for point cloud classification. The results are shown in  Table~\ref{tab:pointcloud}. We observe similar advantages of our method over the baselines on the point cloud dataset.

\begin{table}[h]
	\caption{Comparison of test accuracies on ModelNet40 under different noise types and fractions. The average accuracies and standard deviations over 5 trials are reported.}
	\label{tab:pointcloud}
	\begin{center}
		\begin{tabular}{l|cc|cc}
			\hline
			\multirow{2}{*}{Method} & \multicolumn{2}{|c}{Uniform Flipping} & \multicolumn{2}{|c}{Pair Flipping} \\
			\cline{2-5}
			& 40\% & 80\% & 20\% & 40\% \\
			\hline
			Standard & 74.7 $ \pm $ 1.2 & 56.0 $ \pm $ 2.1 & 83.4 $ \pm $ 1.1 & 77.5 $ \pm $ 2.1 \\
			
			Forgetting & 74.7 $ \pm $ 1.2 & 56.8 $ \pm $ 2.9 & 83.4 $ \pm $ 1.1 & 77.4 $\pm$ 2.0\\
			
			Bootstrap & 75.6 $ \pm $ 2.8 & 57.1 $ \pm $ 3.1 & 84.5 $ \pm $ 0.5 & 60.8 $\pm$ 4.3\\
			
			Forward & 41.7 $ \pm $ 5.2 & 19.8 $ \pm $ 4.8 & 52.0 $ \pm $ 2.0 & 51.3 $\pm$ 5.5\\
			
			Decoupling & 79.2 $ \pm $ 1.0 & 54.6 $ \pm $ 2.8 & 85.9 $ \pm $ 0.2 & 69.2 $\pm$ 2.3 \\
			
			MentorNet & 74.9 $ \pm $ 2.6 & 56.2 $ \pm $ 1.8 & 83.8 $ \pm $ 1.1 & 69.9 $\pm$ 2.6\\
			
			Co-teaching & 82.8 $ \pm $ 1.1 & 69.3 $\pm$ 3.2 & 84.5 $\pm$ 0.5 & 77.6 $\pm$ 1.9 \\
			
			Co-teaching+ & 83.0 $\pm$ 1.2 & 62.2 $\pm$ 9.2 & 85.0 $\pm$ 0.9 & 73.9 $\pm$ 4.2 \\
			
			IterNLD & 75.3 $\pm$ 0.9 & 55.7 $\pm$ 2.3 & 84.2 $\pm$ 1.3 & 76.9 $\pm$ 2.2\\
			
			RoG & 80.0 $\pm$ 0.9 & 43.5 $\pm$ 3.2 & 81.5 $\pm$ 1.1 & 76.2 $\pm$ 0.9\\
			
			PENCIL & 81.2 $\pm$ 1.1 & 61.7 $\pm$ 2.1 & 84.8 $\pm$ 0.4 & 78.7 $\pm$ 1.4\\
			
			GCE & 83.1 $\pm$ 0.5 & 63.0 $\pm$ 5.6 & 83.4 $\pm$ 0.7 & 68.7 $\pm$ 2.6 \\
			
			SL & 78.8 $\pm$ 0.5 & 59.3 $\pm$ 1.9 & 81.5 $\pm$ 0.8 & 68.8 $\pm$ 1.3 \\
			
			\hline
			
			TopoFilter & \textbf{84.2 $\pm$ 0.6} & \textbf{70.4 $\pm$ 2.6} & \textbf{86.4 $\pm$ 0.4} & \textbf{79.6 $\pm$ 1.4}\\
			
			\hline
		\end{tabular}
	\end{center}
	\vskip -0.1in
\end{table}

\bibliographystyle{plain}